\renewcommand{\Re}{{\mathbb R}}
\def\eqref#1{equation~\ref{#1}}
\def\1{\bm{1}}
\DeclareMathAlphabet{\mathsfit}{\encodingdefault}{\sfdefault}{m}{sl}
\SetMathAlphabet{\mathsfit}{bold}{\encodingdefault}{\sfdefault}{bx}{n}
\newcommand{\R}{\mathbb{R}}
\DeclareMathOperator*{\argmin}{arg\,min}
\newenvironment{sproof}{%
  \proof}{\endproof}
\theoremstyle{plain}
\newtheorem{theorem}{Theorem}[section]
\newtheorem{proposition}[theorem]{Proposition}
\theoremstyle{definition}
\theoremstyle{remark}
\newtheorem{remark}[theorem]{Remark}
\declaretheorem[sibling=theorem]{example}
\newcommand{\SIR}{{(\text{SIR})}}
\tikzstyle{input} = [coordinate]
\tikzstyle{output} = [coordinate]
\tikzstyle{sum} = [draw, circle]
\tikzstyle{startstop} = [rectangle, rounded corners, text centered, draw=black, fill=blue!10]
\tikzstyle{process} = [rectangle, minimum width=2cm, minimum height=1cm, text centered, draw=black, fill=orange!10]
\tikzstyle{decision} = [diamond, aspect=3, minimum width=3cm, minimum height=1cm, text centered, draw=black, fill=red!10]
\tikzstyle{compute} = [rectangle, minimum width=2cm, minimum height=1cm, text centered, draw=black, fill=green!10]
\tikzstyle{estimate} = [rectangle, rounded corners, minimum width=2cm, minimum height=1cm, text centered, draw=black, fill=yellow!10]
\tikzstyle{arrow} = [thick,->,>=stealth]
\begin{document}

\twocolumn[
 \icmltitle{Nonlinear Filtering with Brenier Optimal Transport Maps}



\begin{icmlauthorlist}
\icmlauthor{Mohammad Al-Jarrah}{yyy}
\icmlauthor{Niyizhen Jin}{xxx}
\icmlauthor{Bamdad Hosseini}{xxx}
\icmlauthor{Amirhossein Taghvaei}{yyy}
\end{icmlauthorlist}

\icmlaffiliation{yyy}{Department of Aeronautics \& Astronautics, University of Washington, Seattle.}
\icmlaffiliation{xxx}{Department of Applied Mathematics, University of Washington, Seattle.}

\icmlcorrespondingauthor{Mohammad Al-Jarrah}{mohd9485@uw.edu}
\icmlcorrespondingauthor{Niyizhen Jin}{njin2@uw.edu}


\icmlkeywords{Machine Learning, ICML}

\vskip 0.3in
]



\printAffiliationsAndNotice{}  

\begin{abstract}
This paper is concerned with the problem of nonlinear filtering, i.e., computing the conditional distribution of the state of a stochastic dynamical system given a history of noisy partial observations. Conventional sequential importance resampling (SIR) particle filters suffer from fundamental limitations, in scenarios involving degenerate likelihoods or high-dimensional states, due to the weight degeneracy issue. In this paper, we explore an alternative method, which is based on estimating the Brenier optimal transport (OT) map from the current prior distribution of the state to the posterior distribution at the next time step. Unlike SIR particle filters, the OT formulation does not require the analytical form of the likelihood. Moreover, it allows us to harness the approximation power of neural networks to model complex and multi-modal distributions and employ stochastic optimization algorithms to enhance scalability. Extensive numerical experiments are presented that compare the OT method to the SIR particle filter and the ensemble Kalman filter, evaluating the performance in terms of sample efficiency, high-dimensional scalability, and the ability to capture complex and multi-modal distributions.
\end{abstract}

\section{Introduction}
    This paper is concerned with numerical methods for the solution of nonlinear filtering problems in discrete-time 
with a particular focus towards the settings where the 
distribution of interest is highly non-Gaussian and multi-modal.
To formulate the filtering problem consider
two stochastic processes: (i) a hidden Markov process, denoted by $\{X_t\}_{t=0}^\infty$, that represents the state of a dynamical system; (ii) an observed random process, denoted by $\{Y_t\}_{t=1}^\infty$, that represents sensory data. 
We assume the state and observation processes follow the probabilistic relationship
\begin{subequations}
    \begin{align}\label{eq:model-dyn}
        X_{t} &\sim  a(\cdot \mid X_{t-1}),\quad X_0 \sim \pi_0,\\\label{eq:model-obs}
        Y_t &\sim h(\cdot \mid X_t),
    \end{align}
where 
$a(\cdot \mid \cdot)$ and $ h(\cdot \mid \cdot)$ denote the conditional probability kernels of $X_{t+1}$ and $Y_t$, given $X_t$, respectively, and $\pi_0$ denotes the probability distribution of the initial state $X_0$. The filtering problem is to compute the conditional probability distribution of the hidden state $X_t$, given the history of observations $\mathcal Y_t=\{Y_1,Y_2,\ldots,Y_t\}$, denoted by
\begin{equation}\label{eq:posterior}
    \pi_t(\cdot) := \mathbb P(X_t \in \cdot \mid \mathcal Y_t),\quad \text{for}\quad t=1,2,\ldots
\end{equation}
    \end{subequations}
The conditional distribution $\pi_t$ is also referred to as the {\it posterior} or the {\it belief}.

In a general nonlinear and non-Gaussian setup, the posterior does not admit an explicit analytical solution. Therefore, it is necessary to design numerical methods to approximate it. Standard sequential importance re-sampling (SIR) particle filters (PF) approximate the posterior $\pi_t$ with a weighted empirical distribution of particles $\sum_{i=1}^N w^i_t\delta_{X^i_t}$~\cite{gordon1993novel,doucet09} with the particle locations and weights 
updated according to 
\begin{subequations}
    \begin{align}
        \text{(propagation)}\quad 
        X^i_t &\overset{\text{i.i.d}}{\sim} \sum_{j=1}^N w^j_{t-1} a(\cdot \mid X^j_{t-1}), \\
        \text{(conditioning)}\quad 
        w^i_t &= \frac{h(Y_t \mid X^i_{t})}{\sum_{j=1}^N h(Y_t \mid X^j_{t})}. 
    \end{align}
\end{subequations}
The propagation step involves simulation according to the dynamic model~(\ref{eq:model-dyn}), followed by a resampling procedure\footnote{In practice, resampling  is usually done only when the variance of weights becomes large.}. The conditioning step implements the Bayes's rule according to the observation model~(\ref{eq:model-obs}).

SIR PF provides an exact solution to the filtering problem in the limit  $N\to \infty$ 
~\cite{del2001stability}. 
However, it suffers from the {\it curse of dimensionality (COD)}. Specifically, as the dimension of the state and observation  grows, the likelihood becomes degenerate and concentrated on a small support, causing the majority of weights to become zero. This issue is known as {\it particle degeneracy} and can only be avoided when the number of particles grows exponentially with the dimension~\cite{bickel2008sharp,bengtsson08,rebeschini2015can}; This is an active research area; see \cite{van2019particle} and references therein. 

This curse of dimensionality has motivated the development of alternative algorithms that replace the conditioning step of SIR  with a coupling or transport step; see~\cite{daum10,crisan10,reich11,yang2011mean,el2012bayesian,reich2013nonparametric,de2015stochastic,yang2016,marzouk2016introduction,mesa2019distributed,reich2019data,pathiraja2021mckean,taghvaei2021optimal,calvello2022ensemble} as well as~\cite{spantini2022coupling} and \cite{taghvaei2023survey} for a recent survey of these topics. A more detailed comparison is provided in Sec.~\ref{sec:comparison}. Here the posterior is approximated with a uniformly weighted distribution of particles $\frac{1}{N}\sum_{i=1}^N \delta_{X^i_t}$, with a general particle update law of the form  
\begin{subequations}\label{eq:CIPS}
    \begin{align}\label{eq:Xi-propagation}
         \text{(propagation)}\quad   X^i_{t|t-1} &\sim a(\cdot \mid X^i_{t-1}),\\
         \text{(conditioning)}\quad   X^i_{t} &= T_t(X^i_{t\mid t-1}, Y_t).\label{eq:Xi-conditioning}
    \end{align}
    \end{subequations}
Here, $T_t(\cdot, Y_t)$ represents a (possibly stochastic) transport map from the (prior) distribution $\mathbb P(X_t \in \cdot \mid \mathcal Y_{t-1})$ to the posterior distribution $\mathbb P(X_t \in \cdot \mid \mathcal Y_{t})$.

This paper is concerned with the development of algorithms for approximating the above transport map.
In particular, we utilize the recently introduced optimal transport (OT) approach in~\cite{taghvaei2022optimal,al2023optimal} to formulate the problem of finding the map $T_t$ as a max-min stochastic optimization problem (see equations~(\ref{eq:empirical-optimization}) and~(\ref{eq:emperical_loss})) that only involves the particles $X^i_{t|t-1}$ and the associated observation $Y^i_{t\mid t-1} \sim h(\cdot \mid X^i_{t \mid t-1})$, for $i=1,\ldots,N$. The map $T$ is represented by residual neural networks, as in Fig.~\ref{tikz:static_struc}, and 
trained with stochastic optimization algorithms (see Algorithm~\ref{alg:otpf} for details). 
The OT approach is based on the combination  of
block-triangular transport, in the context of conditional simulation~\cite{kovachki2020conditional,ray2022efficacy,shi2022conditional,siahkoohi2021preconditioned}, with min-max 
formulations used to estimate OT maps as in~\cite{makkuva2020optimal,fan2020scalable,rout2022generative}. The OT methodology has two distinct features: (i) it can be implemented in a completely likelihood-free/simulation-based setting, because it only requires samples from the likelihood model, not its analytical form; (ii) and it allows for the application of neural networks to represent transport maps, that enables the capture of  non-Gaussian and multi-modal distributions, as well as stochastic optimization algorithms that  are scalable with the number of the particles. An illustrative example comparing the OT method with SIR particle filter is presented in Fig.~\ref{fig:squared_high_SNR_likelihood}. 

\vspace{-1ex}
\begin{figure}[htp]
    \centering
    \includegraphics[width=\columnwidth,trim={100 25 95 45},clip]{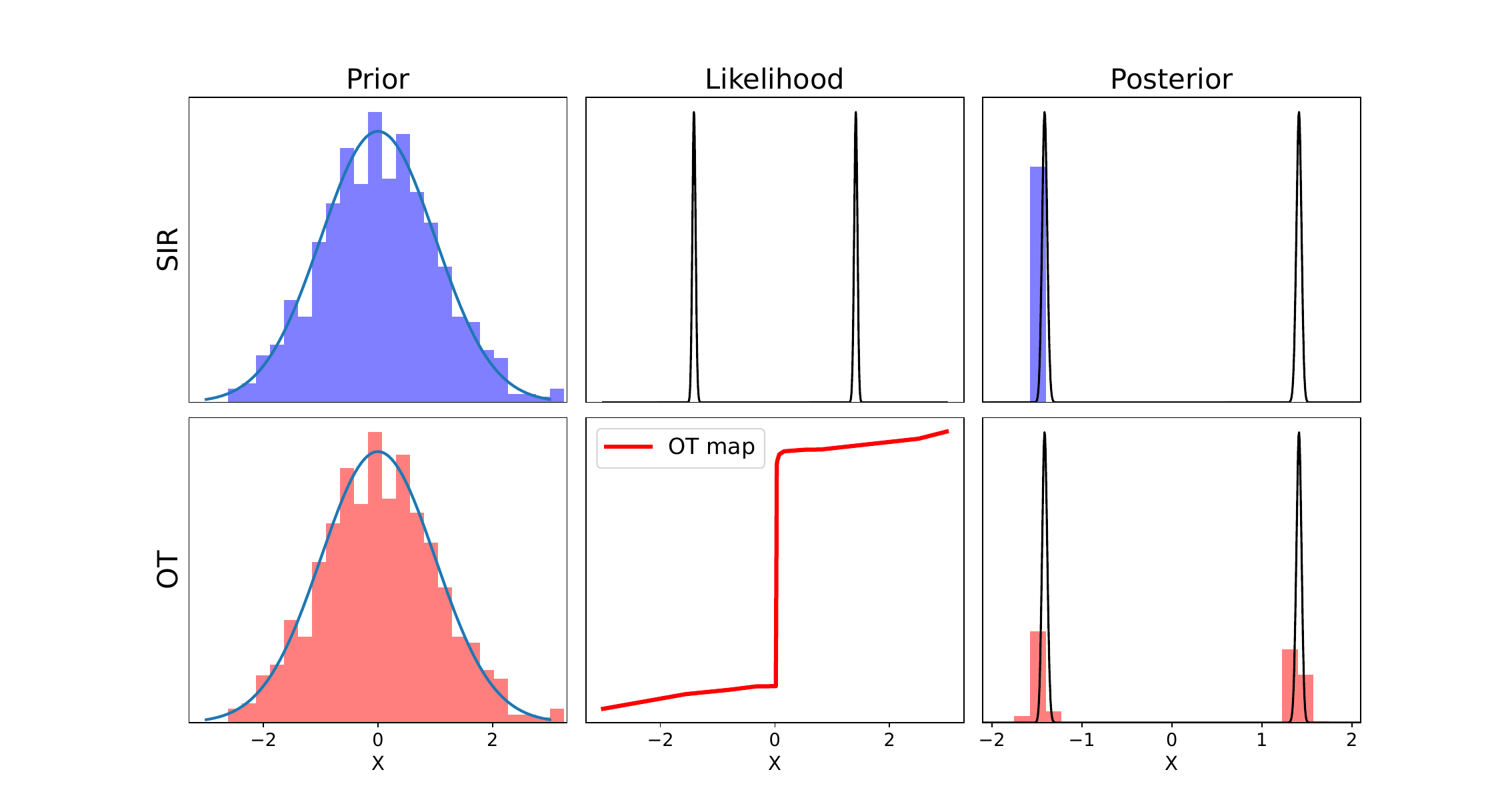}
    \caption{
A comparison between the OT method and SIR for 
the static model of Sec.~\ref{sec:Static_Example}. 
The OT approach  captures the bimodal posterior 
by pushing the prior through an OT map computed by solving equation~(\ref{eq:empirical-optimization}). On the other hand, SIR 
only captures one mode 
due to the degeneracy  of the likelihood leading to only a few 
 weights that are order  $1$. 
}
\label{fig:squared_high_SNR_likelihood}
\end{figure}

{The contributions of the paper are twofold: (1) 
We present a theoretical analysis of the OT method 
focusing on consistency and stability results; these are outlined 
in Prop~\ref{prop:consistency} and \ref{prop:map estimation error bound}. We also provide a preliminary error analysis in Prop.~\ref{prop:error-analysis} that compares the OT  approach to  SIR for a single conditioning step of the filter; (2) 
We present an extensive suite of   numerical experiments in Sec.~\ref{sec:numerics},
containing toy examples, benchmark Lorenz models, and a high-dimensional filtering problem  where image in-painting  is performed in a dynamic manner. We evaluate the OT method, in comparison to the SIR and ensemble Kalman filter (EnKF) algorithms,  in terms of sample efficiency and
high-dimensional scalability in conjunction with the ability to capture complex and multi-modal distributions.
}


\section{OT methodology} 

\subsection{Exact filter}
Let us
 begin by recalling the exact filter before moving on to 
transport maps and the OT formulation. 
Henceforth we assume
$X_t \in \mathbb R^n$ and $Y_t \in \mathbb R^m$. 
The posterior distribution~(\ref{eq:posterior}) admits a recursive update
law that is essential for the design of filtering algorithms. To present this recursive update, we introduce the
following operators: 
\begin{subequations}
\begin{align}\label{eq:propagation}
    \text{(prop.)}\quad \pi \mapsto \mathcal A \pi &:= \int_{\mathbb R^n} a(\cdot \mid x) \pi(x)d x\\\label{eq:conditioning}
  \text{(cond.)}\quad \pi \mapsto \mathcal B_y (\pi) \!&:=\! \frac{h(y \mid \cdot)\pi(\cdot)}{\int_{\mathbb R^n} h(y \mid x) \pi(x)d x}
\end{align}
The first operator represents the update for the distribution of the state according to the dynamic model~(\ref{eq:model-dyn}). The second operator represents Bayes' rule that
carries out the conditioning  according to 
the observation model~(\ref{eq:model-obs}).  
The exact posterior distribution $\pi_t:=\mathbb P(X_t \in 
\cdot \mid \mathcal Y_t)$ then follows the sequential update law \cite{cappe2009inference}
\begin{align}\label{eq:exact-posterior}
 \pi_{t \mid t-1} &= \mathcal{A} \pi_{t-1},\quad 
 \pi_{t} =\mathcal{B}_{Y_t} (\pi_{t \mid t-1}),
\end{align} 
where the notation $\pi_{t \mid t-1}(\cdot):=\mathbb P(X_t \in \cdot \mid \mathcal Y_{t-1})$ is used for the distribution of $X_t$ before applying the conditioning with respect to the current observation $Y_t$. 

\end{subequations}

\subsection{Conditioning with transport maps}

Transport or coupling-based approaches approximate the posterior $\pi_t$ with an empirical distribution of a set of particles $\{ X^i_t\}_{i=1}^N$, where the particles are simulated according to the equations~(\ref{eq:Xi-propagation}) and~(\ref{eq:Xi-conditioning}). Although not explicitly stated, the transport map $T_t$, that is used in the conditioning step~(\ref{eq:Xi-conditioning}), depends on the empirical distribution of the particles. As a result, the equation~(\ref{eq:CIPS}) represents an {\it interacting particle system}.  

The problem of designing the transport map $T_t$ is studied in the {\it mean-field} limit ($N=\infty$), which assumes that all particles are independent and identically distributed according to a single mean-field distribution, denoted by $\overline \pi_t$. Under this assumption, the update law for $\overline \pi_t$ may be expressed as
\begin{equation}\label{eq:bar-pi}
     \overline \pi_{t|t-1} =\mathcal A \overline \pi_{t-1},\quad \overline \pi_t = T_t(\cdot,Y_t)_\# \overline \pi_{t|t-1},
\end{equation}
where $\#$ denotes the push-forward operator. The transport map $T_t$ should be designed such that $\overline \pi_t = \pi_t$ for all $t\geq 0$. 
Comparing equation~(\ref{eq:bar-pi}) with~(\ref{eq:exact-posterior}) concludes the following general design problem: 

{\bf Problem:} For all probability distributions $\pi$, find a map $T$ such that 
\begin{equation}\label{eq:consistency-condition}
    T(\cdot,y)_{\#} \pi = \mathcal B_y(\pi),\quad \forall y.
\end{equation}
We call this procedure {\it conditioning with transport maps}. 
The following examples are illustrative. 

\begin{example}[Noiseless observation]
    Assume $X\sim \pi$ and $Y=h(X)$ where $h$ is an invertible map. Then, the conditional distribution $\mathcal B_y(\pi) = \delta_{h^{-1}(y)}$ can simply be represented via a transport map $T(x,y) = h^{-1}(y)$. 
\end{example}
\begin{example}[Gaussian]
Assume $X\sim \pi$ is Gaussian and the observation $Y \sim h(\cdot \mid X)$  corresponds to a linear observation model with additive Gaussian noise, meaning $X$ and $Y$ are jointly Gaussian. Then, the conditional distribution $\mathcal B_y(\pi)$ is Gaussian $N(\mu,\Sigma)$ 
\begin{align*}
    \mu&=\mathbb E[X] + K(y-\mathbb E[Y]),\\
    \Sigma&=\text{Cov}(X) - \text{Cov}(X,Y)\text{Cov}(Y)^{-1}\text{Cov}(Y,X),
\end{align*}
and $K=\text{Cov}(X,Y)\text{Cov}(Y)^{-1}$. The stochastic map
\begin{equation}\label{eq:affine}
    X \mapsto X + K(y-Y),
\end{equation}
transports $\pi$ to $\mathcal B_y (\pi)$ for any value of the observation $y$. This can be checked by noting that $X + K(y-Y)$ is a Gaussian random variable with the same mean and covariance as $N(\mu,\Sigma)$.  The
map in equation~(\ref{eq:affine}) is the basis for the ensemble Kalman filter (EnKF) algorithm; see \cite{evensen2006,bergemann2012ensemble,calvello2022ensemble}. 

\end{example}
The consistency condition~(\ref{eq:consistency-condition}) does not specify a unique map $T$. 
In the following subsection, we formulate the problem of finding a valid map $T$ as an OT problem alongside an 
stochastic optimization approach on which our numerical procedure is based on. 
\subsection{OT formulation}\label{sec:OT-formulation}
In order to present the OT formulation, consider  $X\sim \pi$, $Y\sim h(\cdot \mid X)$, and let $\overline X \sim \pi$ be an independent copy of $X$. Also, let $P_{X,Y}$ denote the joint distribution of $(X,Y)$, with marginals $P_X$ and $P_Y$. 
First observe that the condition (\ref{eq:consistency-condition}), which may be expressed as $T(\cdot,y)_{\#}P_X = P_{X|Y}(\cdot|y)$ a.e.,  is equivalent 
to 
\begin{subequations}
\begin{equation}\label{eq:T-constraint-joint}
       (T(\overline X,Y),Y) \sim P_{X,Y}.
\end{equation}
A justification for this result appears in Appendix~\ref{apdx:consistency} and~\citep[Thm. 2.4] {kovachki2020conditional}.
In order to select a unique map that satisfies the condition~(\ref{eq:T-constraint-joint}), we formulate the (conditional) Monge problem:
\begin{equation} \label{eq:Monge OT}
\begin{aligned}
    \min_{T \in \mathcal{M}(P_X \otimes P_Y)}\, &\mathbb E\left[c(T(\overline X,Y),\overline X)\right],\quad \text{s.t.~ (\ref{eq:T-constraint-joint}) holds.} 
\end{aligned}
\end{equation} 
where $c(x,x')=\frac{1}{2}\|x-x'\|^2$  and  $\mathcal{M}(P_X \otimes P_Y)$ is the set of  maps $\R^n \times \R^m \mapsto \R^n$ that are 
$P_X \otimes P_Y$-measurable. 
The optimization~(\ref{eq:Monge OT}) is viewed as the Monge problem between the independent coupling $(\overline X,Y) \sim P_X \otimes P_Y$ and the joint distribution $(X,Y)\sim P_{X,Y}$ with transport maps that are constrained to be block-triangular $(x,y) \mapsto (T(x,y),y)$. 
Upon using the Kantorovich duality and the definition of $c$-concave functions\footnote{with a squared loss $c$, $f$ is $c$-concave iff $\frac{1}{2}\|\cdot\|^2-f$ is convex.} 
the Monge problem~(\ref{eq:Monge OT}) becomes 
\begin{align}\label{eq:new_loss}
    \max_{ f \in c\text{-Concave}_x}\,
    \min_{T \in \mathcal{M}(P_X \otimes P_Y)}\, J(f,T),
\end{align}
where the objective function $J(f,T)$ is equal to
\begin{equation*}
    \mathbb E\left[f(X,Y) - f(T(\overline X,Y),Y)+  c(T(\overline X,Y),\overline X)\right],
\end{equation*}
\end{subequations}
and the set $c\text{-Concave}_x$ denotes the set of functions on $\R^n \times \R^m \mapsto \R$
that are $c$-concave in their first variable everywhere. A rigorous justification of the max-min formulation, in the standard OT setting, appears in Appendix~\ref{apdx:max-min} and~\cite{makkuva2020optimal,rout2022generative}.  
The following result is the extension to the conditional setting. 

\begin{proposition} \label{prop:consistency}
Assume $\pi$ is absolutely continuous with respect to the Lebesgue measure with a convex support set $\mathcal{X}$, $\mathcal B_y(\pi)$ admits a density with respect to the Lebesgue measure 
$\forall y$, and $\mathbb E[\|X\|^2]<\infty$. Then, there exists a unique pair $(\overline f,\overline T)$, modulo an additive constant for $\overline{f}$, that solves the optimization problem~(\ref{eq:new_loss}) and  the map $\overline T(\cdot,y)$ is the OT map from $\pi$ to $\mathcal B_y(\pi)$ for a.e. $y$. 
\end{proposition}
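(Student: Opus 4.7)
The plan is to \emph{slice by the observation}: for each fixed $y$ the inner minimization reduces to a $c$-transform, and the outer maximization, once conditioned on $Y$, splits into the classical Kantorovich dual between $\pi$ and $\mathcal B_y(\pi)$ on each $y$-slice. Brenier's theorem then supplies both existence and uniqueness (modulo additive constant, slice by slice) of $\overline f$, and identifies $\overline T(\cdot,y)$ with the Brenier map from $\pi$ to $\mathcal B_y(\pi)$.

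First I would carry out the inner minimization pointwise. The only $T$-dependent part of $J(f,T)$ is $\mathbb E[c(T(\overline X,Y),\overline X)-f(T(\overline X,Y),Y)]$, and at each realization $(\overline X,Y)=(x,y)$ the scalar objective in $t=T(x,y)\in\R^n$ is $\tfrac12\|t-x\|^2-f(t,y)$. Because $f(\cdot,y)$ is $c$-concave in its first argument, $t\mapsto\tfrac12\|t\|^2-f(t,y)$ is convex, so the infimum is attained and equals the (first-slot) $c$-transform $f^c(x,y):=\inf_{t\in\R^n}\bigl[c(t,x)-f(t,y)\bigr]$. A Borel selector $T_f(x,y)$ realizing the infimum exists via the convex-conjugate structure, which licenses the interchange
\begin{equation*}
\min_T J(f,T)=\mathbb E[f(X,Y)]+\mathbb E[f^c(\overline X,Y)].
\end{equation*}

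Next I would condition on $Y$. Using $\overline X\sim\pi$ independently of $Y$ and $X\mid Y=y\sim\mathcal B_y(\pi)$,
\begin{equation*}
\min_T J(f,T) = \int\!\left[\int\! f(x,y)\,d\mathcal B_y(\pi)(x)+\int\! f^c(x,y)\,d\pi(x)\right] dP_Y(y).
\end{equation*}
The outer supremum over $c\text{-Concave}_x$ distributes through the $Y$-integral, since $f$ is constrained only through the $c$-concavity of its first-variable slice $f(\cdot,y)$ for each $y$. For a.e. $y$ the bracketed functional is exactly the Kantorovich dual of the quadratic OT problem between $\pi$ and $\mathcal B_y(\pi)$. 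Brenier's theorem then applies: by hypothesis $\pi$ is absolutely continuous on the convex support $\mathcal X$ with $\mathbb E\|X\|^2<\infty$, and $\mathcal B_y(\pi)$ admits a density with finite second moment for a.e. $y$ (by Fubini, using $\int\mathbb E[\|X\|^2\mid Y=y]\,dP_Y(y)=\mathbb E\|X\|^2<\infty$). This yields, for a.e. $y$, a Kantorovich potential $\overline f(\cdot,y)$ unique up to additive constant, together with the unique OT map $\overline T(\cdot,y)=\nabla_x\varphi(\cdot,y)$, where $\varphi(\cdot,y)=\tfrac12\|\cdot\|^2-\overline f^{\,c}(\cdot,y)$ is convex.

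The step I expect to require the most care is establishing the \emph{joint} measurability in $(x,y)$ of the slice-wise optimizers $\overline f(\cdot,y)$ and $\overline T(\cdot,y)$, so that the assembled objects are admissible elements of the optimization classes in~(\ref{eq:new_loss}); I would handle this by a measurable-selection/disintegration argument in the spirit of the conditional-Brenier results of~\cite{kovachki2020conditional}. A secondary subtlety concerns the phrasing ``modulo an additive constant'': since $J(f,T)$ is invariant under $f(x,y)\mapsto f(x,y)+\alpha(y)$ for any Borel $\alpha$, the genuine ambiguity is an arbitrary function of $y$, so I would state the uniqueness clause on a per-$y$-slice basis and remark that the map $\overline T$ is unaffected by this ambiguity.
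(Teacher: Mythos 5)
Your proposal follows essentially the same route as the paper's proof: rewrite $J(f,T)$ as an expectation over $Y$ of per-slice objectives, reduce the inner minimization to the $c$-transform so that the sup--inf becomes the classical Kantorovich dual between $\pi$ and $\mathcal B_y(\pi)$ for a.e.\ $y$, apply Brenier's theorem slice-wise, and glue the slice-wise potentials and maps into jointly measurable objects (the paper packages the per-slice sup--inf equivalence as Prop.~\ref{prop:max-min-OT} and performs the measurable concatenation by citing Theorem 2.3 of \cite{carlier2016vector}). The only place you are slightly too quick is the claim that the inner infimum is attained for every $c$-concave $f$ --- in general it need not be, which is why the paper instead sandwiches $\min_T J(f,T)$ between $\mathcal J_{\text{dual}}(f,f^c)$ and $\mathcal J_{\text{dual}}(f_\epsilon,f_\epsilon^c)+\epsilon M$ via a Moreau--Yosida regularization --- but since only the equality of optimal values and attainment at the Brenier pair are ultimately needed, this does not change the conclusion.
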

\begin{sproof}
The proof is a direct consequence of applying 
 Theorem 2.3 (\romannum{3}) in \cite{carlier2016vector} and Porp.~\ref{prop:max-min-OT} for the max-min formulation in the standard OT setting;  
 see Appendix \ref{proof:consistency} for details.
\end{sproof}
The next proposition is concerned with the case that the max-min optimization problem~(\ref{eq:new_loss}) is not solved exactly
and provides an error bound for the computed OT maps 
in terms of the pertinent optimality gap. 
More precisely, for a pair $(f,T)$, let 
\begin{equation}\label{eq:opt-gaps}
\begin{aligned}
        \epsilon(f,T) := &J(f,T) - \min_{S} J(f,S) \\+& \max_{g}\min_{S}J(g,S) - \min_{S} J(f,S) 
\end{aligned}
\end{equation}
be the total optimality gap for the max-min problem. 
We then have the following result { which can be viewed as an extension of \citep[Thm. 4.3]{rout2022generative} and \citep[Thm. 3.6]{makkuva2020optimal} by adding the observation variable $Y$}. The proof appears in Appendix~\ref{proof:map estimation error bound}.

\begin{proposition} \label{prop:map estimation error bound}
Consider the setting of Prop.~\ref{prop:consistency} with the optimal pair $(\overline f, \overline T)$. Let $(f,T)$ be a possibly non-optimal pair with an optimality gap  $\epsilon(f, T)$.  Assume $x \mapsto \frac{1}{2}\|x\|^2-f(x,y)$ is $\alpha$-strongly 
convex in $x$ for all $y$. Then,  
\begin{align*}
        \mathbb{E}\,&[\|T(\overline{X},Y) - \overline T(\overline{X},Y)\|^2]
        \leq \frac{4}{\alpha}\epsilon(f,T).
    \end{align*}
\end{proposition}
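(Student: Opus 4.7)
The plan is to bound the error in two pieces by introducing the intermediate map $T_f(\overline x,y) := \arg\min_z [c(z,\overline x) - f(z,y)]$, i.e., the best response to $f$. The $\alpha$-strong convexity hypothesis ensures $T_f$ is uniquely defined pointwise, since $z\mapsto \frac12\|z\|^2 - f(z,y) - \langle z,\overline x\rangle$ is $\alpha$-strongly convex for every $(\overline x,y)$. The strategy is then: (i) bound $\mathbb{E}\|T - T_f\|^2$ by the primal gap $J(f,T)-\min_S J(f,S)$; (ii) bound $\mathbb{E}\|T_f - \overline T\|^2$ by the dual gap $\max_g\min_S J(g,S)-\min_S J(f,S)$; (iii) combine via $\|a-b\|^2\le 2\|a-c\|^2+2\|c-b\|^2$.

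For step (i), the key observation is that $\mathbb{E}[f(X,Y)]$ is independent of $T$, so minimizing $J(f,\cdot)$ reduces to pointwise minimization of $c(z,\overline x) - f(z,y)$ over $z$. By $\alpha$-strong convexity, for any $z$,
\[
\bigl[c(z,\overline x) - f(z,y)\bigr] - \bigl[c(T_f(\overline x,y),\overline x) - f(T_f(\overline x,y),y)\bigr] \ge \tfrac{\alpha}{2}\|z - T_f(\overline x,y)\|^2.
\]
Setting $z = T(\overline x,y)$ and taking expectations yields $J(f,T)-J(f,T_f) \ge \tfrac{\alpha}{2}\mathbb{E}\|T(\overline X,Y) - T_f(\overline X,Y)\|^2$, which is the desired bound.

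For step (ii), the crucial step is to show $J(f,\overline T) = J(\overline f,\overline T) = \max_g\min_S J(g,S)$ for \emph{any} $f$. This follows from Prop.~\ref{prop:consistency}, which guarantees $\overline T$ satisfies the coupling condition~\eqref{eq:T-constraint-joint}, so $(\overline T(\overline X,Y),Y)\sim (X,Y)$. Hence $\mathbb{E}[f(\overline T(\overline X,Y),Y)] = \mathbb{E}[f(X,Y)]$, causing the $f$-terms in $J(f,\overline T)$ to cancel and leaving only $\mathbb{E}[c(\overline T(\overline X,Y),\overline X)]$, which is independent of $f$. Therefore
\[
J(f,\overline T) - \min_S J(f,S) = \max_g\min_S J(g,S) - \min_S J(f,S),
\]
and applying strong convexity again (with $z=\overline T(\overline x,y)$) gives $\tfrac{\alpha}{2}\mathbb{E}\|\overline T - T_f\|^2 \le J(f,\overline T) - J(f,T_f)$, which equals the dual gap.

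Combining the two bounds with the elementary inequality $\|T-\overline T\|^2 \le 2\|T-T_f\|^2 + 2\|T_f-\overline T\|^2$ produces the factor of $4/\alpha$ against the sum of the two gaps, which is exactly $\epsilon(f,T)$. The main conceptual subtlety, rather than any technical obstacle, is recognizing that the marginal consistency of $\overline T$ makes $J(\cdot,\overline T)$ constant in its first argument, which is what cleanly identifies $J(f,\overline T)-\min_S J(f,S)$ with the dual optimality gap and allows the two pieces of the error decomposition to be expressed in terms of the single quantity $\epsilon(f,T)$.
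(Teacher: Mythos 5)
Your proof is correct and takes essentially the same route as the paper's: your intermediate map $T_f$ is exactly the best response $\nabla_x \phi^*$ (with $\phi(x,y)=\tfrac{1}{2}\|x\|^2-f(x,y)$) used there, your two bounds coincide with the paper's bounds on the primal gap and the dual gap via $\alpha$-strong convexity, the push-forward property of $\overline T$ plays the same cancellation role, and the final combination via $\|a-b\|^2\le 2\|a-c\|^2+2\|c-b\|^2$ is identical. The only difference is presentational: you invoke quadratic growth of the strongly convex pointwise objective directly, whereas the paper routes the same inequality through convex conjugacy and the $\tfrac{1}{\alpha}$-smoothness of $\phi^*$.
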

 
\subsection{Empirical approximation}
In order to numerically solve the optimization problem~(\ref{eq:new_loss}), the objective function $J(f,T)$ is approximated empirically in terms of samples according to   
\begin{equation}\label{eq:emperical_loss}
    \begin{aligned}
        &J^{(N)}(f,T):= \frac{1}{N} \sum_{i=1}^N \big[f(X^i,Y^i) \quad+ \\
    &\frac{1}{2}\|T( \overline X^i,Y^i)- X^i\|^2 - f(T(\overline X^i,Y^i),Y^i)\big]
    \end{aligned}
\end{equation}
where $X^i,\overline X^i\overset{\text{i.i.d}}{\sim} \pi$ and $Y^i \sim h(\cdot|X^i)$. In the context of filtering problem, $\{X^i\}_{i=1}^N$ is constructed from the particles $\{X^i_{t|t-1}\}_{i=1}^N$ and $\{\overline X^i\}_{i=1}^N$ may be constructed by an independent   random shuffling of the same set, resulting in the filtering algorithm presented in Algorithm~\ref{alg:otpf} in Appendix~\ref{apdx:numerics}.  The function $f$ and the map $T$ are represented with a parametric class of functions, denoted by $\mathcal F$ and $\mathcal T$ respectively.
This concludes the optimization problem
\begin{equation}\label{eq:empirical-optimization}
    \max_{f\in \mathcal F}\,\min_{T \in \mathcal T}\, J^{(N)}(f,T). 
\end{equation}
Here we propose to take both of these spaces to be neural network classes with architectures 
that are summarized in Fig.~\ref{tikz:static_struc}; further details 
about these architectures can be found in 
the Appendix~\ref{apdx:numerics}.
\begin{remark}\label{remark:TH}
    Note that our choice of $\mathcal F$ does not impose the constraint that $f(x,y)$ 
    is $c\text{-Concave}_x$. We make this choice due to practical limitations of imposing convexity constraints on 
    neural nets using, for example, input-convex networks \cite{amos2016input,bunne2022supervised}. However, note that if the computed 
    $f$ happens to be $c\text{-Concave}_x$ (which one can check a posteriori) then Prop.~\ref{prop:map estimation error bound} remains applicable. {We provide qualitative and quantitative results, in Sec.~\ref{sec:Static_Example}},  that measure the convexity of $\frac{1}{2}\|x\|^2-f(x,y)$ and monotonicity of the map $T$ for a particular example.
\end{remark}

\begin{remark}
The proposed computational procedure may be extended to the Riemannian manifold setting by using the square of the geodesic distance as the cost function $c$ and modelling the map $T$ as exponential of a parameterized vector-field; see~\cite{grange2023computational}.
\end{remark}

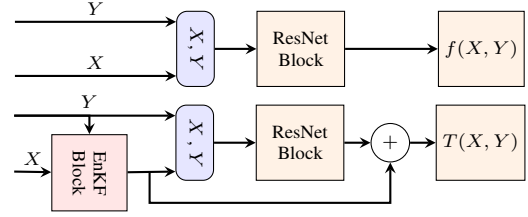
\begin{figure}[t]
\scriptsize
\centering 
	\begin{tikzpicture}[node distance=1cm, auto, scale = 0.8]
        \node (input) [startstop,minimum width=1cm, minimum height=0.5cm,rotate=270] at (0,0) {$X$, $Y$ \par};
        
	\node (x) [input] at (-3,-0.45) {};
         \node (y) [input] at (-3,0.45) {};
         
         \node (resnet) [process,minimum width=1cm, minimum height=1cm, text width=1cm] at (1.75,0) {ResNet Block \par};

	\node (f) [process,minimum width=1cm] at (4.75,0){$f(X,Y)$ \par };
    \draw[arrow] (x) -- node [above]{$X$}($(input.270) + (0,-0.45)$);
    
    \draw[arrow] (y) -- node [above]{$Y$}($(input.270) + (0,0.45)$);
    
        \draw[arrow] (input) -- (resnet);
	\draw[arrow] (resnet) -- (f);
	
	\end{tikzpicture}
        
        \begin{tikzpicture}[node distance=1cm, auto, scale=0.8]
        \node (input) [startstop,minimum width=1cm, minimum height=0.5cm,rotate=270] at (0,0) {$X$ , $Y$ \par};
        \node (enkf) [process,fill=red!10, rotate=270, text width=0.85cm, minimum width=1cm] at (-1.75,-0.5){EnKF Block\par};
        
	\node (x) [input] at (-3,-0.48) {};
         \node (y) [input] at (-3,0.45) {};        
    
	\node (resnet) [process,minimum width=1cm, minimum height=1cm, text width=1cm] at (1.75,0) {ResNet Block \par};

        \node (sum) [sum] at (3.25,0){$+$ \par };
	\node (T) [process,minimum width=1cm] at (4.75,0){$T(X,Y)$ \par };
    \draw[arrow] (x) -- node {$X$} (enkf);
    \draw[arrow] (y) -| node [above]{$Y$} (enkf);
    \draw[arrow] (y) -- ($(input.270) + (0,0.45)$);
	\draw[arrow] (enkf) -- ($(input.270) + (0,-0.48)$);
        \draw[arrow] (input) -- (resnet);
	\draw[arrow] (resnet) -- (sum);
	\draw[arrow] (sum) -- (T);

    \draw[arrow] (-0.75,-0.48) |- (3.25,-1) -| ($(sum.270)$);
	
	\end{tikzpicture}
 	\caption{Neural net architectures for the function classes  $\mathcal F$ 
  and $\mathcal T$ within our proposed algorithm.}
	\label{tikz:static_struc}
 \end{figure}
\subsection{Comparison with SIR}\label{sec:comparison-SIR}
In order to compare the OT and the SIR approaches, we use the {\it dual bounded-Lipschitz metric} $d(\mu,\nu) := \sup_{g \in \mathcal G} \sqrt{\mathbb{E}\left|\int g d \mu - \int g d \nu\right|^2}$ on (possibly random) probability measures $\mu,\nu$, where 
$\mathcal G$
is the space of functions that are  bounded by one and  Lipschitz with {a constant} smaller than one.

\begin{figure*}[t]
    \centering
    \begin{subfigure}{0.48\textwidth}
    \centering
    \includegraphics[width=0.95\textwidth, ,trim={70 20 90 40},clip]{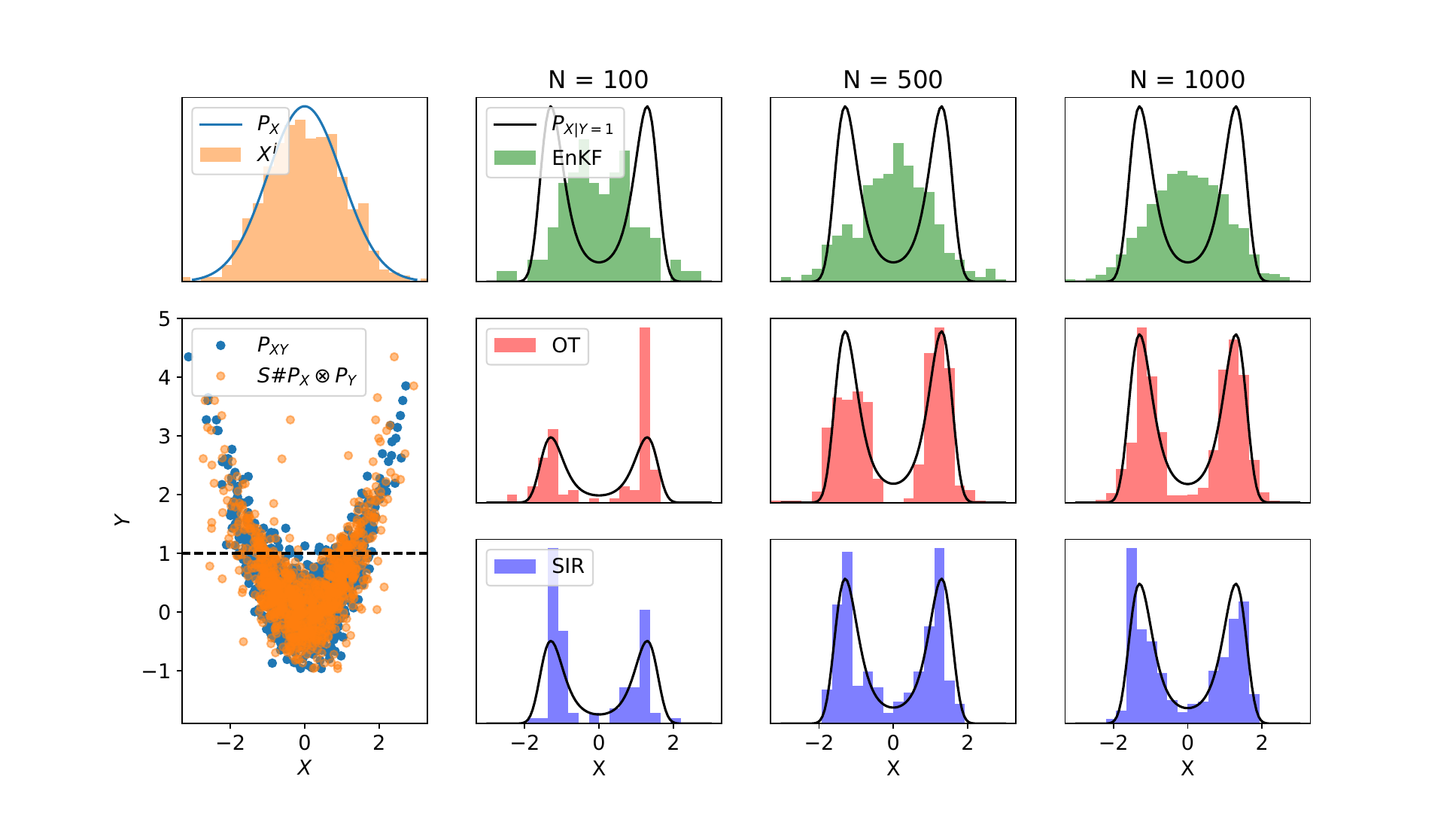}
    \caption{$\lambda_w=0.4$.}
    \label{fig:squared_not_SNR}
\end{subfigure}
\hfill
\begin{subfigure}{0.5\textwidth}
    \centering
    \includegraphics[width=0.95\textwidth,trim={70 20 90 40},clip]{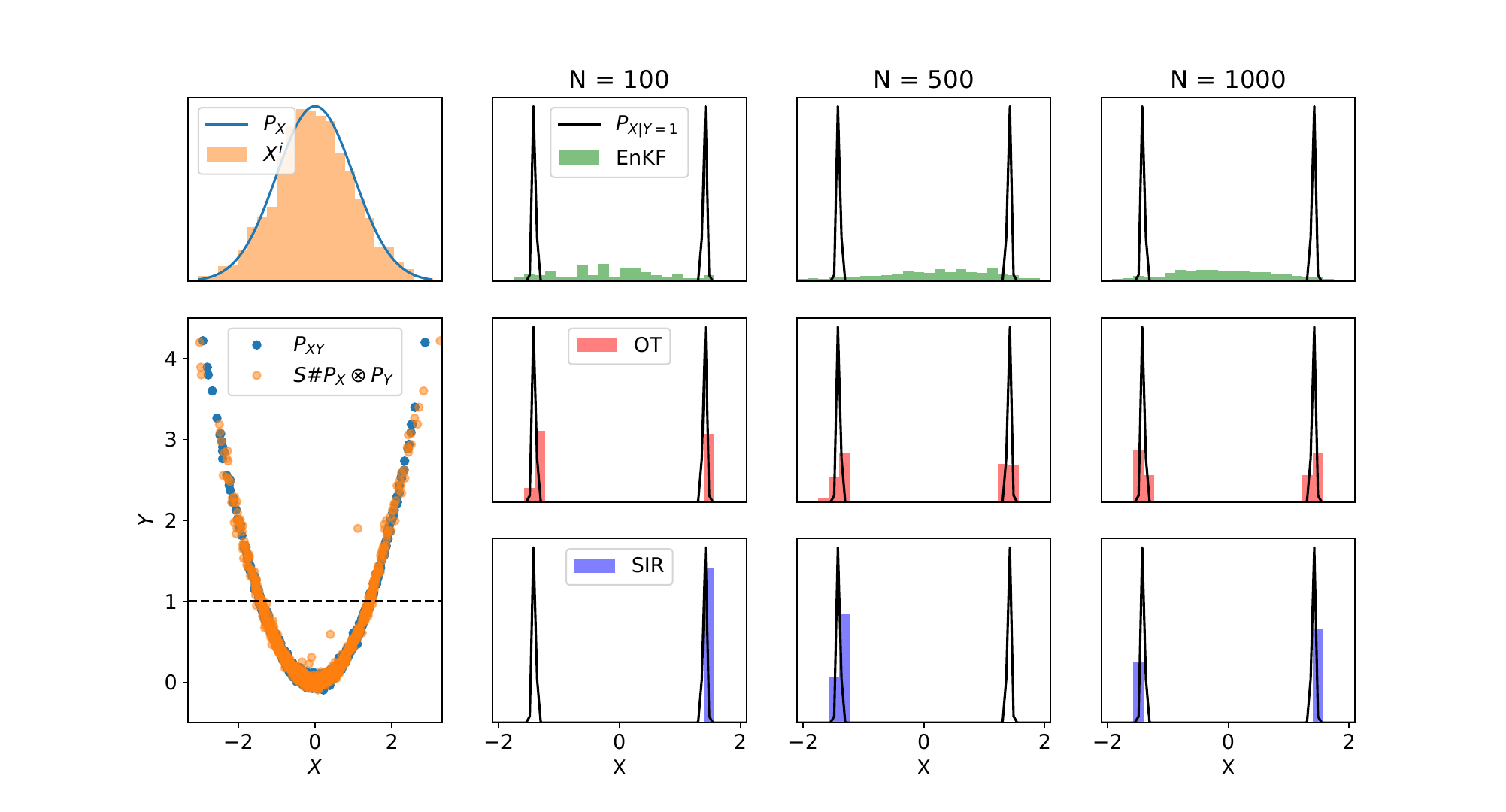} 
    \caption{$\lambda_w=0.04$.}
    \label{fig:squared_SNR}
\end{subfigure}
    \caption{Numerical results for the static example in Sec.~\ref{sec:Static_Example}. (a) top-left: Samples $\{X^i\}_{i=1}^N$ from the prior $P_X$; bottom-left: samples $\{(X^i,Y^i)\}_{i=1}^N$ from the joint distribution $P_{XY}$ in comparison with the transported samples $\{(T(X^{\sigma_i},Y^i),Y^i)\}_{i=1}^N$; rest of the panels: transported samples for $Y=1$ for different values of $N$ and three different algorithms. (b) Similar results to panel (a) but  for a smaller $\lambda_w$.}
    \label{fig:squared}
\end{figure*}

For any probability distribution $\pi$, the SIR and OT approaches approximate the conditional distribution $\mathcal B_y(\pi)$ in two different ways. The SIR approach approximates the posterior with 
\begin{subequations}
\begin{equation}
    \pi^\SIR := \frac{\sum_{i=1}^Nh(y|X^i)\delta_{X^i}}{\sum_{i=1}^N h(y|X^i)}=\mathcal B_y(S^{(N)}(\pi))
    \end{equation}
where $X^i\overset{\text{i.i.d.}}{\sim} \pi$ and $\pi \mapsto S^{(N)} \pi := \frac{1}{N}\sum_{i=1}^N \delta_{X^i}$ is the sampling operator. 
On the other hand, the OT approach approximates the posterior by numerically solving the optimization problem~(\ref{eq:empirical-optimization}). Assuming the pair $(\widehat f,\widehat T)$ is a solution of this problem, the approximated posterior is 
    \begin{equation}\label{eq:pi-OT}
        \pi^{(OT)} := \widehat T (\cdot,y)_\# \pi.
    \end{equation}
    \end{subequations}  
The following proposition provides an analysis of the approximation errors.
\begin{proposition}\label{prop:error-analysis}
    Consider $X\sim \pi$, $Y \sim h(\cdot|X)$, and the SIR and OT approaches explained above in order to approximate the posterior $\mathcal B_Y(\pi)$. Then, 
    \begin{subequations}
    \begin{align}\label{eq:SIR-bound}
    \liminf_{N \to \infty}\sqrt{N} d(\pi^\SIR,\mathcal B_Y(\pi)) &\geq  \sup_{g \in \mathcal G}\sqrt{V_h(g)},\\
    d(\pi^{(OT)},\mathcal B_Y(\pi))&\leq \sqrt{\frac{4}{\alpha}\epsilon(\widehat f,\widehat T)},\label{eq:OT-bound}
\end{align}
    \end{subequations} 
where $V_h(g):=\mathbb E[\overline h(Y|\overline X)^2(g(\overline X)-\mathbb E[g(X)|Y])^2]$, $\overline h(x|y):=h(x|y)/\int h(x'|y)d\pi (x')$, $\overline X$ is an independent copy of $X$, $\alpha$ is the strong convexity constant for $x\mapsto\frac{1}{2}\|x\|^2 - \widehat{f}(x,y)$, and $\epsilon(\widehat f,\widehat T)$ is the optimality gap~(\ref{eq:opt-gaps}) for the pair $(\widehat f,\widehat T)$.
\end{proposition}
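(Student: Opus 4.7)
I would prove the OT bound~\eqref{eq:OT-bound} and the SIR bound~\eqref{eq:SIR-bound} separately. The first is a short consequence of Propositions~\ref{prop:consistency} and~\ref{prop:map estimation error bound}; the second rests on a conditional central limit theorem for importance sampling followed by a careful passage from weak convergence to a lower bound on the second moment.

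For~\eqref{eq:OT-bound}, Proposition~\ref{prop:consistency} yields $\mathcal B_Y(\pi) = \overline T(\cdot,Y)_\# \pi$, while by definition $\pi^{(OT)} = \widehat T(\cdot,Y)_\# \pi$. Consequently, for any $g \in \mathcal G$,
\[
\int g\, d\pi^{(OT)} - \int g\, d\mathcal B_Y(\pi) = \mathbb E\!\left[g(\widehat T(\overline X,Y)) - g(\overline T(\overline X,Y)) \,\big|\, Y\right].
\]
Squaring, applying Jensen's inequality to the conditional expectation, using the $1$-Lipschitz property of $g$, taking total expectation, and invoking Proposition~\ref{prop:map estimation error bound} bounds $\mathbb E|\int g\, d\pi^{(OT)} - \int g\, d\mathcal B_Y(\pi)|^2$ by $\frac{4}{\alpha}\epsilon(\widehat f,\widehat T)$ uniformly in $g$, which gives~\eqref{eq:OT-bound} after taking a supremum and a square root.

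For~\eqref{eq:SIR-bound}, I would condition on $Y$. Introducing normalized weights $W_i := h(Y\mid X^i)/p(Y)$ with $p(y):=\int h(y\mid x)\,d\pi(x)$, the conditional independence of $\{X^i\}$ and Bayes' rule yield $\mathbb E[W_i\mid Y]=1$ and $\mathbb E[W_i g(X^i)\mid Y] = \mathbb E[g(X)\mid Y]$. This produces the ratio representation
\[
\int g\, d\pi^\SIR - \mathbb E[g(X)\mid Y] = \frac{N^{-1}\sum_{i=1}^N W_i\big(g(X^i) - \mathbb E[g(X)\mid Y]\big)}{N^{-1}\sum_{i=1}^N W_i}.
\]
A conditional central limit theorem on the numerator, combined with Slutsky for the denominator, gives conditional convergence of $\sqrt{N}\big(\int g\, d\pi^\SIR - \mathbb E[g(X)\mid Y]\big)$ to a centered Gaussian with variance $\sigma_Y^2(g):=\mathbb E[W^2(g(X) - \mathbb E[g(X)\mid Y])^2 \mid Y]$. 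A direct calculation using the independent copy $\overline X$ identifies $\mathbb E[\sigma_Y^2(g)] = V_h(g)$.

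The principal obstacle is promoting this weak convergence into a lower bound on expected squared error. I would apply the Portmanteau theorem to the bounded continuous truncations $x\mapsto \min(x^2,M)$, Fatou conditionally on $Y$, and then let $M\to\infty$ via monotone convergence on the Gaussian limit, to obtain $\liminf_N N\,\mathbb E[(\int g\, d\pi^\SIR - \mathbb E[g(X)\mid Y])^2] \geq V_h(g)$ for every fixed $g\in \mathcal G$. The bound~\eqref{eq:SIR-bound} then follows from the chain $\liminf_N \sqrt{N}\,d(\pi^\SIR,\mathcal B_Y(\pi)) \geq \sup_{g\in\mathcal G}\liminf_N \sqrt{N\,\mathbb E[\cdots]^2} \geq \sup_{g\in\mathcal G}\sqrt{V_h(g)}$, where the first inequality is the standard domination $\liminf\sup \geq \sup\liminf$ applied at each fixed $g$.
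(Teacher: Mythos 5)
Your proposal is correct and follows essentially the same route as the paper's proof: the OT bound via Proposition~\ref{prop:consistency}, the conditional-expectation representation, Jensen plus the $1$-Lipschitz property, and Proposition~\ref{prop:map estimation error bound}; the SIR bound via the self-normalized ratio representation, a conditional CLT with Slutsky, identification of the asymptotic variance with $V_h(g)$, and the $\liminf\sup\geq\sup\liminf$ exchange. Your truncation/Portmanteau/Fatou argument for passing from weak convergence to the second-moment lower bound is in fact more careful than the paper, which simply asserts the limit of $\sqrt{N}\,\mathbb E[(\cdot)^2]^{1/2}$ (an equality that would need uniform integrability), whereas only the $\liminf$ inequality you establish is needed.
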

\begin{sproof}
    The lower-bound~(\ref{eq:SIR-bound}) follows from an application of the central limit results available for importance sampling in~\citep[Thm. 9.1.8]{cappe2009inference}. The upper-bound (\ref{eq:SIR-bound}) follows from Prop.~\ref{prop:map estimation error bound} and the definition of the metric
    $d$; see Appendix~\ref{apdx:error-analysis} for details.
\end{sproof}
The COD in SIR PF is demonstrated by considering the special case where $X=[X(1),\ldots,X(n)]$ and $Y=[Y(1),\ldots,Y(n)]$ are $n$-dimensional with independent and identically distributed components. Then, for $N$ large enough, the SIR error satisfies the bound 
\[d(\pi^\SIR,\mathcal B_Y(\pi)) \geq  \frac{C\gamma^n}{\sqrt{N}}\]
where $C$ is a constant and $\gamma:=\mathbb E[\overline h(Y(1)|\overline X(1))^2]>1$. The COD is usually avoided by exploiting problem specific properties, such as spatial correlations or  decay; see ~\cite{rebeschini2015can,van2019particle}.   

The OT upper-bound~(\ref{eq:OT-bound}) depends on the optimality gap 
$\epsilon( \widehat f, \widehat T)$ which, in principle, decomposes to a bias and variance term. The bias term corresponds to the representation power of the function classes $\mathcal{F}$ and $\mathcal T$, in comparison with the complexity of the problem. The variance term 
corresponds to the statistical generalization errors due to  the empirical approximation of the objective function. The variance term is expected to grow as $O(\frac{1}{\sqrt{N}})$ with a proportionality constant that depends on the complexity of the function classes, but independent of the dimension.  In principle, the OT approach may also suffer from the COD under no additional assumptions on the problem. However, in comparison to SIR, it provides a more flexible design methodology that  can exploit problem specific structure and regularity.

    
\section{Numerical results}\label{sec:numerics}
    \begin{figure*}[t]
    \centering
\begin{subfigure}{0.24\textwidth}
    \centering
    \includegraphics[width=1.0\textwidth,trim={30 0 70 60},clip]{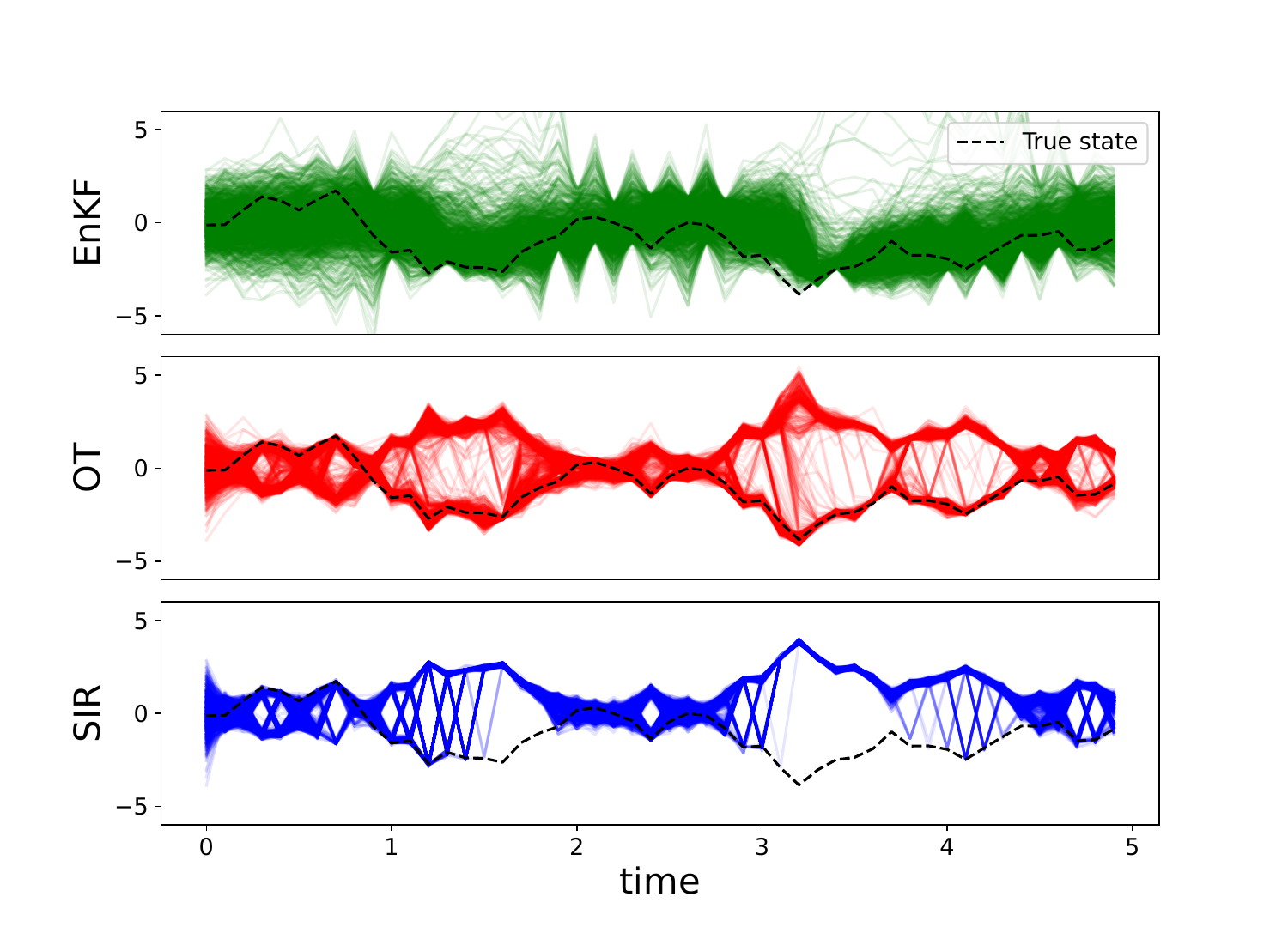}
    \caption{Particles trajectory.}
\end{subfigure}
\hfill
\begin{subfigure}{0.24\textwidth}
    \centering
     \includegraphics[width=1.0\textwidth,trim={30 0 70 60},clip]{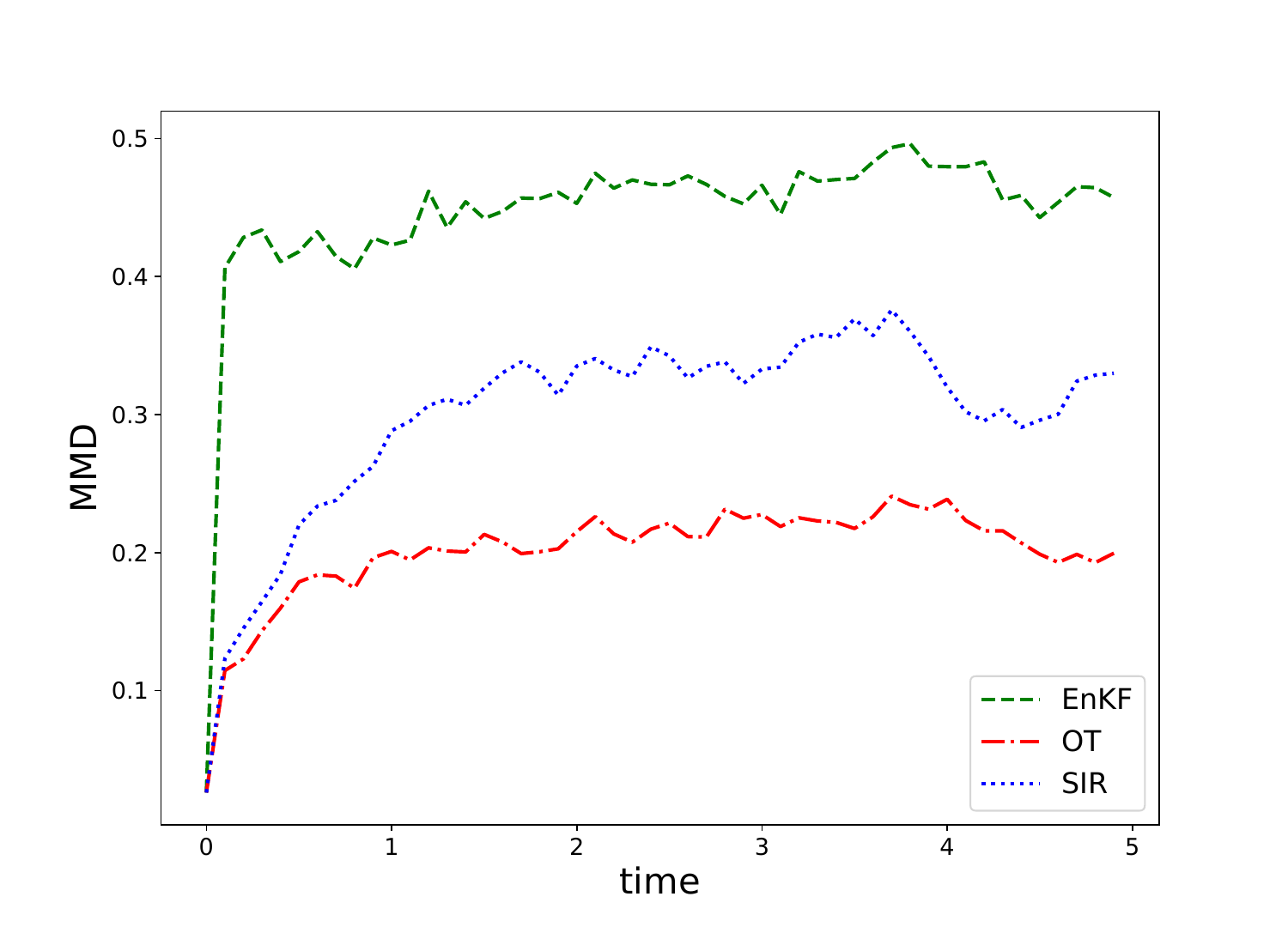} 
    \caption{MMD vs time.}
\end{subfigure}
\hfill
\begin{subfigure}{0.24\textwidth}
    \centering
    \includegraphics[width=1.0\textwidth,trim={30 0 70 60},clip]{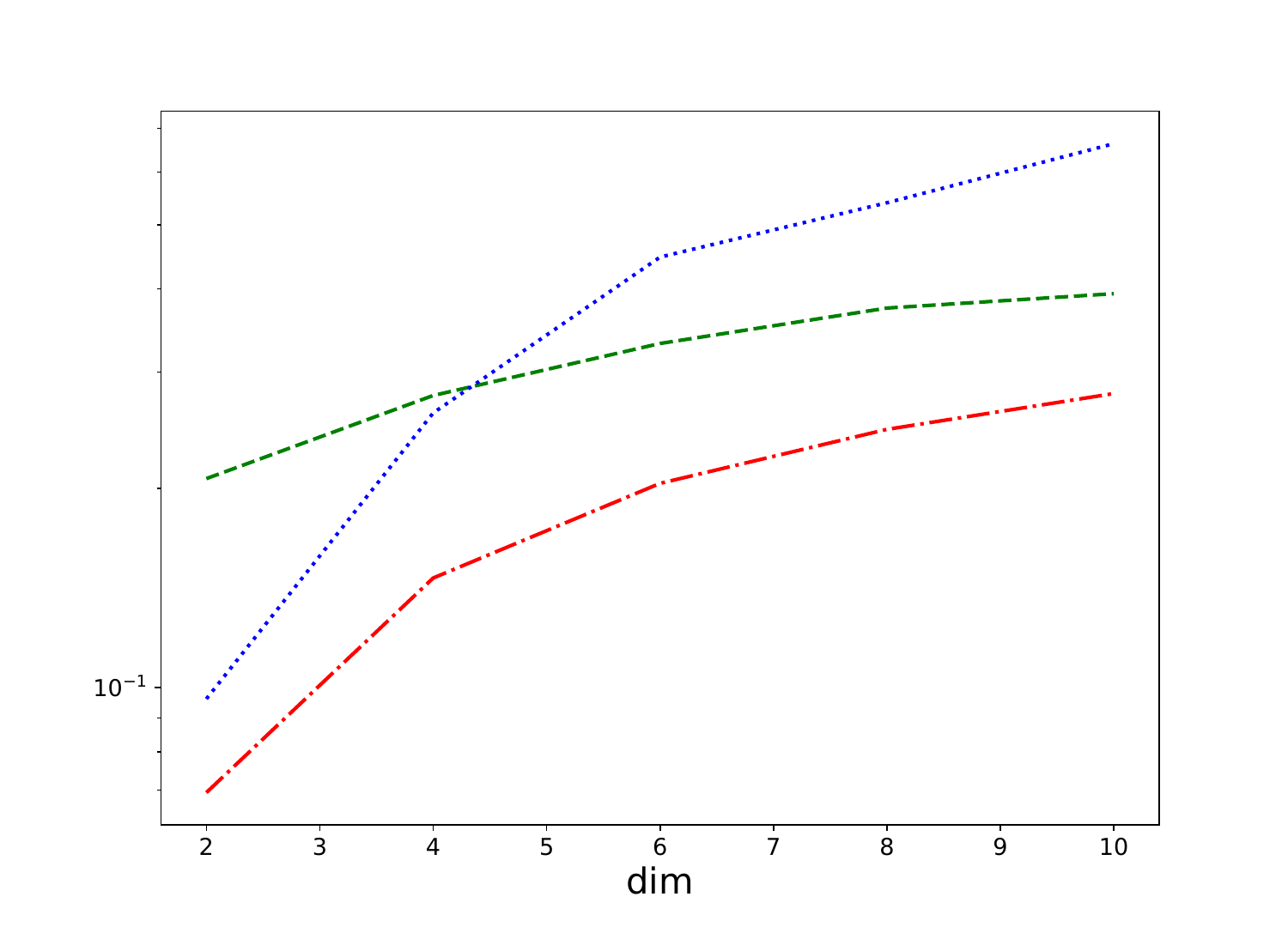}
    \caption{MMD vs the dimension.}
\end{subfigure}
\hfill
\begin{subfigure}{0.24\textwidth}
    \centering
     \includegraphics[width=1.0\textwidth,trim={30 0 70 60},clip]{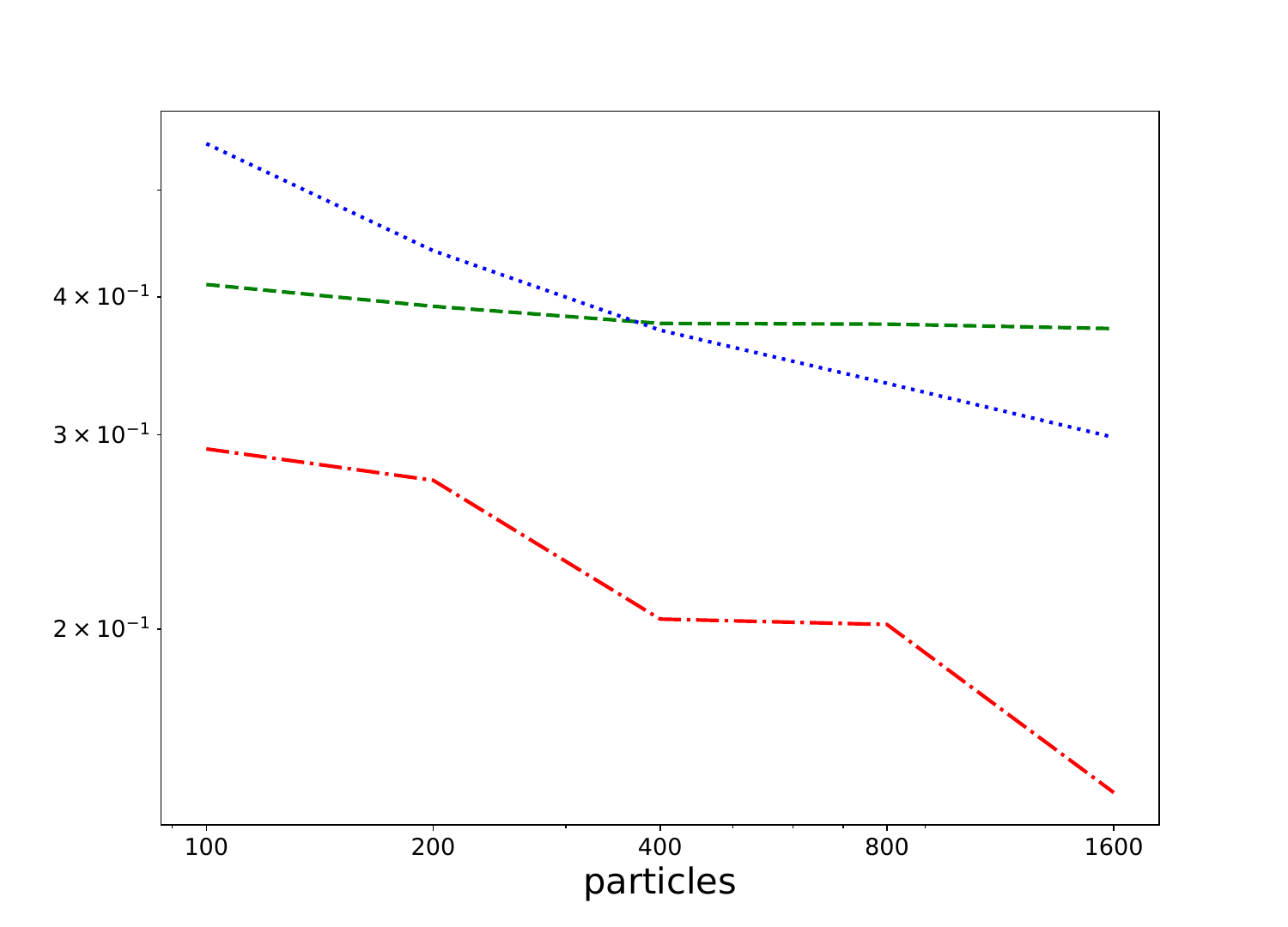} 
    \caption{MMD vs $\#$ of particles.}
\end{subfigure}

     \caption{Numerical results for the dynamic example~\ref{eq:model-example}. The left panel shows the trajectory of the particles $\{X^1_t,\ldots,X^N_t\}$ along with the trajectory of the true state $X_t$ for EnKF, OT, and SIR algorithms, respectively. The second panel shows the MMD distance with respect to the exact conditional distribution. The last two panels show MMD variation with dimension and the number of particles.}
    \label{fig:dynamic_example_states}
\end{figure*}

We perform several numerical experiments to study the performance of the OT approach in comparison with the EnKF and SIR algorithms. 
The OT algorithm \ref{alg:otpf}
consists of solving (\ref{eq:CIPS}) and (\ref{eq:empirical-optimization}) with $f, T$ 
taken to be neural nets whose architectures were outlined in Fig.~\ref{tikz:static_struc}. The network weights are learned with a gradient ascent-descent procedure using the Adam optimization algorithm. To reduce the computational cost, the optimization iteration number decreases as the time grows because the OT map is not expected to change significantly from a time step to the next one. The computational cost and details of all three algorithms and additional numerical results appear in the Appendix~\ref{apdx:numerics}. The Python code for reproducing the numerical results is available online\footnote{\url{https://github.com/Mohd9485/Filtering-with-Optimal-Transport}}.

\subsection{A bimodal static example} \label{sec:Static_Example}
Our first numerical experiment serves as a proof of concept and demonstrates the sample efficiency of the OT approach when the likelihood becomes degenerate. Specifically, we consider the task of computing the conditional distribution of a Gaussian hidden random variable $X \sim N(0,I_n)$ given the observation
\begin{align} \label{example:squared}
    Y=\frac{1}{2}X\odot X + \lambda_w W, \qquad W \sim N(0,I_n)
\end{align}
where $\odot$ denotes the element-wise (i.e., Hadamard) product. This model is specifically selected to produce a bimodal posterior.  We only present the  $n=2$ case since the difference between OT and SIR was not significant when $n=1$.

The first numerical results for this model are presented with a noise 
standard deviation of $\lambda_w=0.4$
in Fig.~\ref{fig:squared_not_SNR}. The top left panel shows the initial particles as samples from the Gaussian prior distribution. 
The bottom left panel shows the pushforward of samples from 
$P_X \otimes P_Y$ via 
the block triangular map $S(x,y)=(T(x,y),y)$, in comparison to samples from $P_{XY}$, verifying the consistency condition (\ref{eq:T-constraint-joint}) 
for the map. Then, we pick a particular value for the observation $Y=1$ (as shown by the dashed line) and present the histogram of (transported) particles in comparison with the exact conditional density. It is observed that both OT and SIR capture the bimodal posterior, while EnKF falls short since it always approximates the posterior with a Gaussian. 

We repeat the procedure in
Fig.~\ref{fig:squared_SNR} but for a smaller noise standard deviation $\lambda_w=0.04$ which leads to a more degenerate posterior. Our results 
clearly demonstrate the  weight degeneracy of SIR even in this low-dimensional setting, as all particles collapse into a single mode, while the OT approach still captures the bimodal posterior. Additional numerical results for this model are available in Appendix~\ref{sec:app_static}.

 Additionally, we present the learned function $\frac{1}{2}\|x\|_2^2-\widehat{f}(x,y)$ and the learned map $\widehat{T}(x,y)$, as a function of $x$ and for a $1D$ case with fixed value $y=1$, in Fig.~\ref{fig:bimodal_convex}. It is observed that the function  $\frac{1}{2}\|x\|_2^2-\widehat{f}(x,1)$ is not entirely convex, specially around the region with small posterior probability. Meanwhile, the transport map $\widehat{T}(x,1)$ appears to be entirely monotone. In order to quantify the degree of monotonicity of  $x \mapsto \widehat{T}(x,y)$, for all values of $y$, we employ a Monte-Carlo method with $10^5$ samples to approximate
\begin{align*}
\mathbb{E}_{(X,X',Y)\sim P_X \otimes P_X \otimes P_Y} [\mathbf{1}_{\varrho(\widehat{T}(\cdot,Y),X,X')\leq 0}] &\approx 0.0012, \\\mathbb{E}_{(X,X',Y)\sim P_X \otimes P_X \otimes P_Y}  [\varrho(\widehat{T}(\cdot,Y),X,X')]&\approx 0.88
\end{align*}
where 
\(\varrho(S,x,x'):=\frac{(S(x)-S(x')^\top (x-x')}{\|x-x'\|^2}\)
for any map $S:\mathbb R^{n} \to \mathbb R^n$ and two points $x,x'\in \mathbb R^n$. If the map $S$ is monotone, then $\varrho(S,x,x')\geq 0$ for all $x,x'\in \mathbb R^n$. If $S$ is the gradient of a $\alpha$-strongly convex function, then, $\rho(S,x,x')\geq \alpha$. The first number shows that the map is not monotone over a set of small probability, while the second number shows that the ``degree'' of monotinicity is positive on average.  
We perform a similar  procedure to quantify the convexity of the function 
$\frac{1}{2}\|x\|_2^2-\widehat{f}(x,y)$ by evaluating the monotonicity of its gradient: $F(x,y):=x-\nabla_x \widehat{f}(x,y)$\footnote{Checking the positive-definiteness of the Hessian matrix is not applicable in this case because $\widehat{f}(x,y)$ is not differentiable when we use ReLU activation functions.}.
With a similar Monte-Carlo method  we obtained the respective quantities $0.033$ and $1.57$, 
indicating that $\frac{1}{2}\|x\|_2^2-\widehat{f}(x,y)$  is 
non-convex over a set of small probability measure under the posterior. This empirical observation motivates future direction of theoretical research concerning the relaxation of the convexity assumption in Prop.~\ref{prop:map estimation error bound} to 
convexity over sets of large measure under the posterior.

\begin{figure}[h]
    \centering
    \includegraphics[width=0.48\textwidth,trim={80 0 80 10},clip]{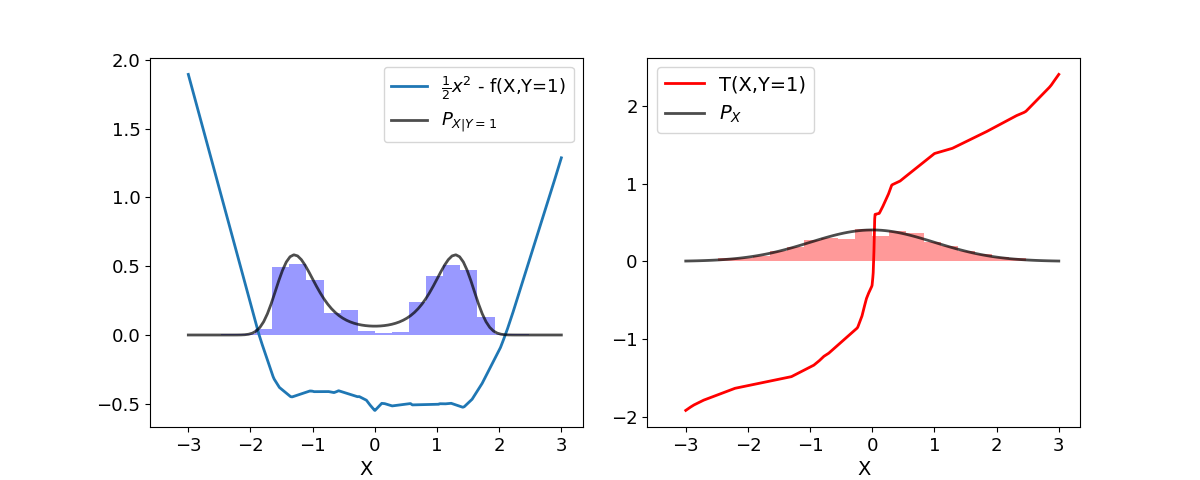}
    \caption{Numerical results for the bimodal static example  in sec.~(\ref{sec:Static_Example}). The left panel shows the  function $\frac{1}{2}x^2- \widehat{f}(x,1)$ and the conditional distribution $P_{X|Y=1}$. The right panel shows the map $\widehat{T}(x,1)$ and the prior distribution $P_X$.}
    \label{fig:bimodal_convex}
\end{figure}

\subsection{A bimodal dynamic example}\label{sec:dynamic-bimodal}
We consider a dynamic version of the previous example according to the following model:
\begin{subequations}\label{eq:model-example}
\begin{align}
    X_{t} &= (1-\alpha) X_{t-1} + 2\lambda V_t,\quad X_0 \sim \mathcal{N}(0,I_n)\\
    Y_t &= X_t\odot X_t + \lambda W_t,
\end{align}
\end{subequations}
where $\{V_t,W_t\}_{t=1}^\infty$ are i.i.d sequences of standard Gaussian random variables, $\alpha=0.1$ and $\lambda=\sqrt{0.1}$. The choice of $Y_t$ 
will once again lead to a bimodal posterior $\pi_t$ at every time step.

The numerical results are depicted in Fig.~\ref{fig:dynamic_example_states}: Panel (a) shows the trajectory of the particles for the three algorithms, along with the true state $X_t$ denoted with a dashed black line. The OT approach produces a bimodal distribution of particles, while the EnKF gives a Gaussian approximation and the SIR approach exhibits the weight collapse and misses a mode for the time duration $t\in[1,2] \cup [3,4.5]$. Panel (b) presents a quantitative error analysis comparing the maximum-mean-discrepancy (MMD) between the particle distribution of each algorithm  and the exact posterior; 
details such as the choice of the MMD kernel are presented in Appendix~\ref{sec:app_dynamic}.
Since the exact posterior is not explicitly available it is approximated by simulating the SIR algorithm with $N=10^5$ particles.  This quantitative result affirms the qualitative observations of panel (a) that the OT posterior better captures the true posterior in time.

We also performed a numerical experiment to study the effect of the dimension $n$ and the number of particles $N$ on the performance of the three algorithms. The results are depicted in panels (c) and (d), respectively. It is observed that both EnKF and OT scale better with dimension compared to SIR. However, as the number of particles increases, the EnKF error remains constant, due to its Gaussian bias, while the approximation error for SIR and OT decreases.

\subsection{The Lorenz 63 model}
\begin{figure}[t]
    \centering
    \includegraphics[width=0.43\textwidth,trim={75 0 100 20},clip]{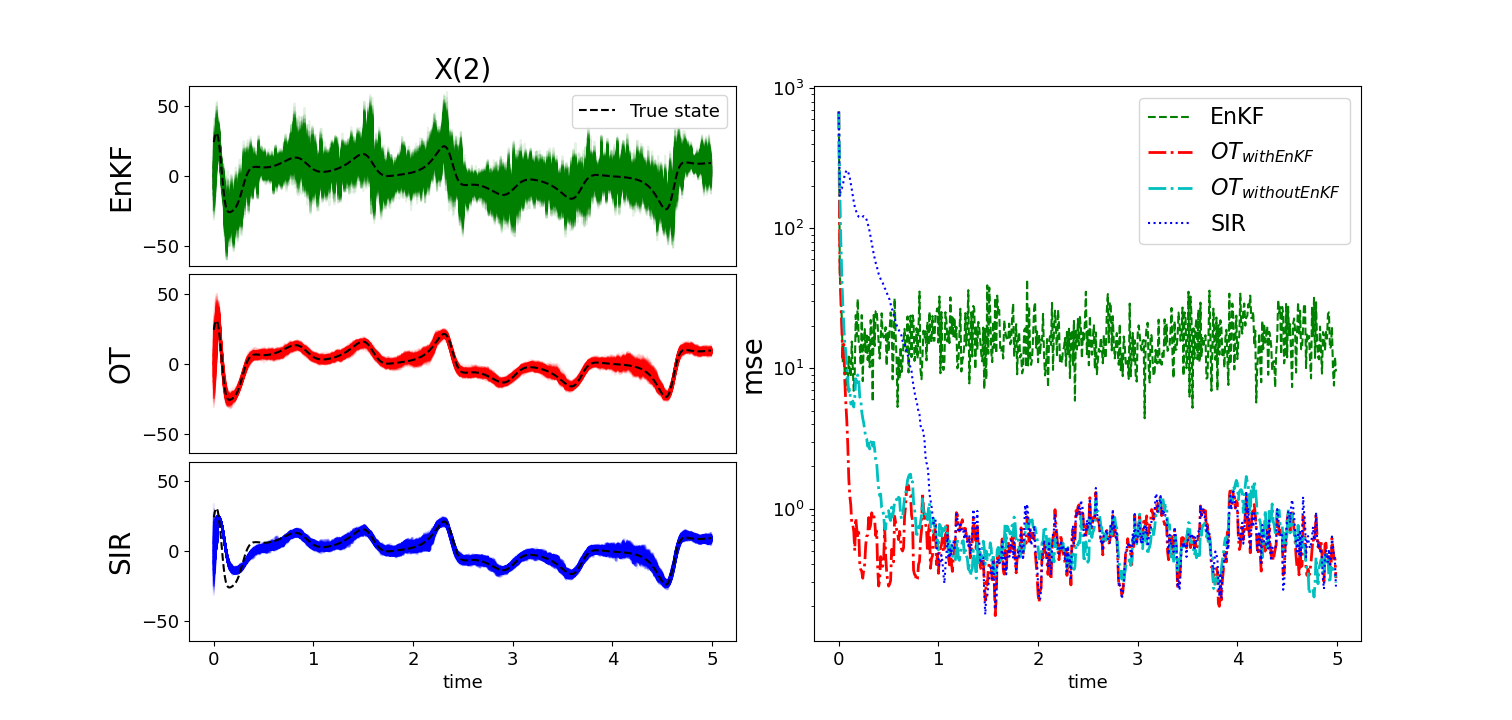}
    \caption{Numerical results for the Lorenz 63 example. The left panel shows the trajectory of the unobserved component of the true state and the particles. The right panel shows the MSE comparison.}
    \label{fig:state2_and_mse_L63}
\end{figure}
We present numerical results on the three-dimensional  Lorenz 63 model which often serves as a benchmark for nonlinear filtering algorithms. The model is presented in Appendix~\ref{sec:app_L63}. The
state $X_t$ is $3$-dimensional while the observation $Y_t$ is $2$-dimensional and consists of noisy measurements of the first and third components of the state. 

The numerical results are presented in Fig.~\ref{fig:state2_and_mse_L63}. The left panel shows the trajectory of the second component of the true state and the particles.  The OT and EnKF are quicker in converging to the true state, with EnKF admitting larger variance. The right panel shows the mean-squared-error (MSE) in estimating the state confirming the qualitative observations. We present the MSE result for two variations of the OT method: either the EnKF layer in the architecture of Fig.~\ref{tikz:static_struc} is implemented or not.  The results show that the addition of the EnKF layer helps with the performance of the filter, while computationally, we observed more numerical stability when the EnKF layer is removed. 

We have also performed numerical experiments on the Lorentz 96 model, another 
benchmark from the filtering literature, that appear in Appendix~\ref{sec:app_L96}.
The results for that example are in line with Lorentz 63, i.e., 
EnKF and OT outperform SIR but are somewhat similar to each other.

\subsection{Static image in-painting on MNIST}

\begin{figure}[t]
    \centering
    \includegraphics[width=\columnwidth,trim={105 50 80 20},clip]{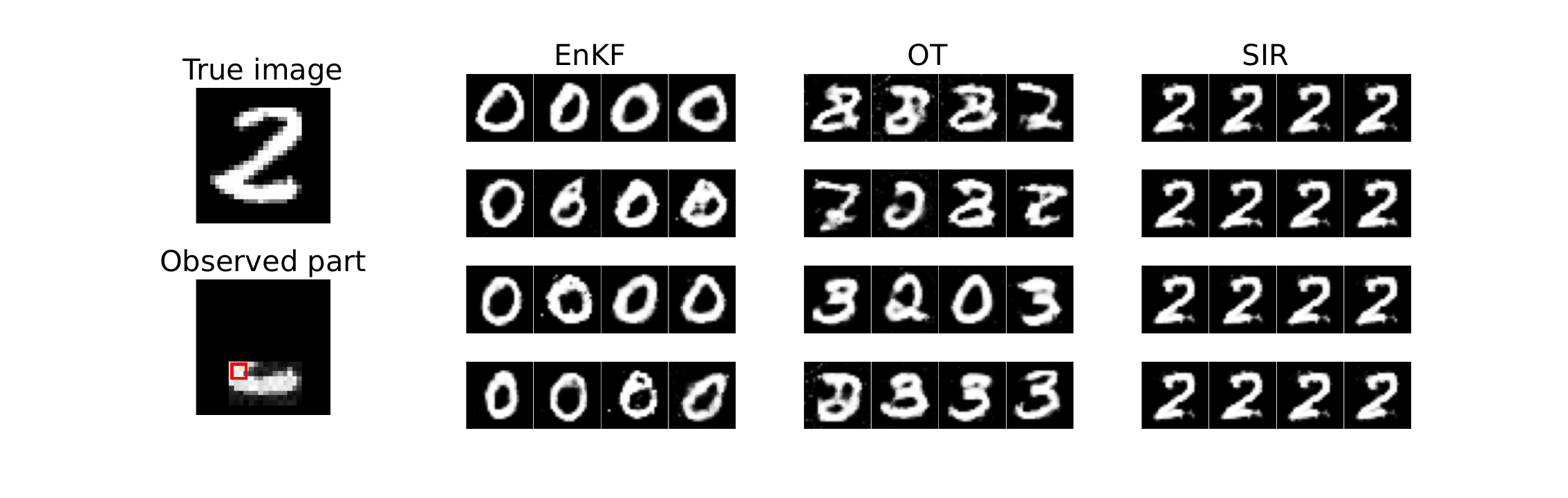}
    \caption{Numerical result for the static MNIST example.  The left column shows the true image, the observed part, and the observation window (red square). The next three columns show  $16$ random particles from the final time-step of each algorithm.}
    \label{fig:mnist_static_final_particles_example2}
\end{figure}

\begin{figure*}[t!]
    \centering
        \begin{subfigure}[b]{0.49\textwidth}
    \centering
    \includegraphics[width=0.95\textwidth,trim={105 60 105 50},clip]{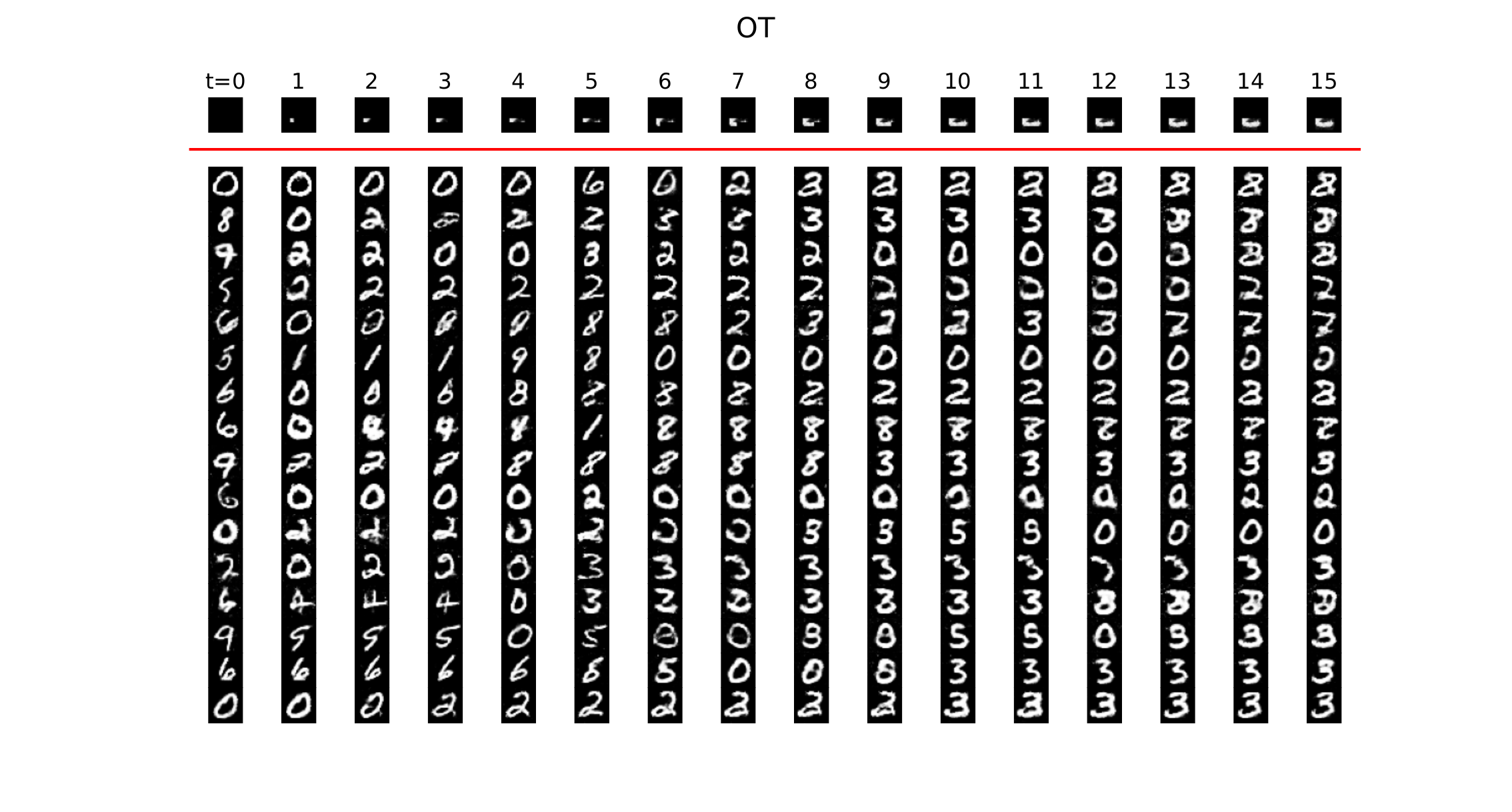}
    \caption{16 particles from the OT method.}
    \label{fig:mnist_static_OT_particles_example2}
    \end{subfigure}
\hfill
    \begin{subfigure}[b]{0.49\textwidth}
    \centering
    \includegraphics[width=0.95\textwidth,trim={95 60 105 50},clip]{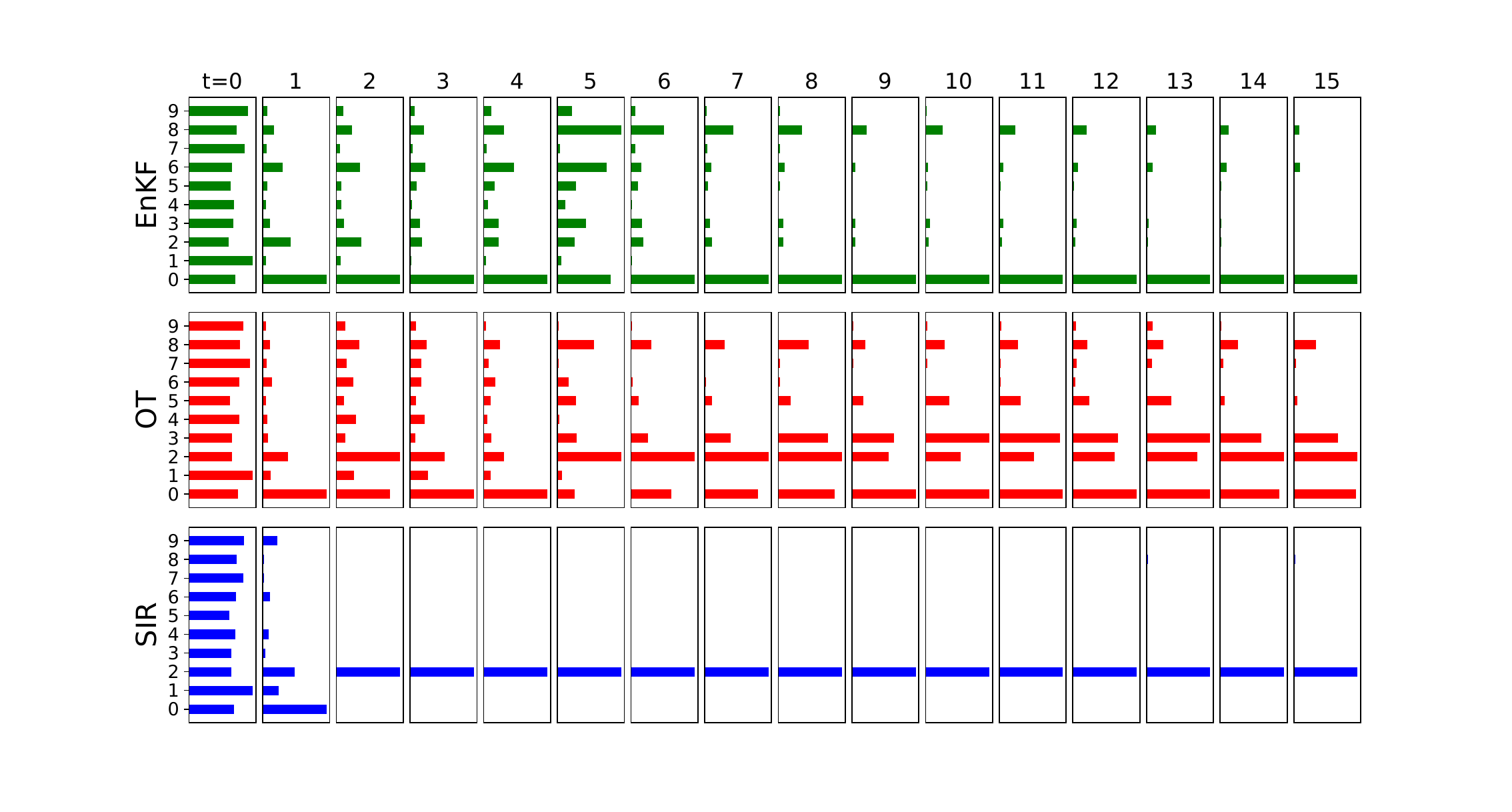}
    \caption{Digits distribution.}
    \label{fig:mnist_static_dist_particles_example2}
\end{subfigure}
\caption{Additional numerical results for the static MNIST example. (a) The first row shows the cumulative total observations up to each time step. The subsequent rows under the red  line show 16 particles from the OT method (that correspond to the same 16 particles in Fig.~\ref{fig:mnist_static_final_particles_example2}; (b) The histogram of the the digits generated by the particles, from the three algorithms as a function of time, evaluated using an accurate  MNIST classifier.}
\end{figure*}
In order to evaluate the high-dimensional scalability of the OT approach, we consider the problem of computing conditional distributions on the $100$-dimensional latent space of generative adversarial network (GAN) trained to represent the MNIST digits~\cite{goodfellow2014generative}. In particular, denoting the generator by $G : \mathbb{R}^{100}\rightarrow \mathbb{R}^{28\times 28}$, we consider the model:
\[Y_t = h(G(X),c_t) + \sigma W_t,\quad X\sim N(0,I_{100}),\]
where the observation function $(z,c) \in \mathbb{R}^{28\times 28} \times \mathbb R^2 \mapsto h(z,c)\in \mathbb{R}^{r\times r}$ is defined as the $r\times r$ window of pixels $z[c(1):c(1)+r,c(2):c(2)+r]$. The coordinates of the corner $c_t$ moves from left to right and top to bottom scanning a partial portion of the image called  the {\it observed part}. In order to make the problem more challenging, we fed a noisy measurement of the corner location to the algorithms. While the true image does not change, we included artificial Gaussian noise to the particles to avoid particle collapse. For for details, see Appendix~\ref{sec:app_static_mnist}.

The numerical results are presented in Fig.~\ref{fig:mnist_static_final_particles_example2}. The left column presents the true image, the observed part of the image, and an instance of the $3\times 3$ observation window with a red square. The filtering algorithms are implemented for $15$ time steps, as the observation window moves, and  $16$ random particles from the final time-step of each algorithm are depicted in the rest of panels in Fig.~\ref{fig:mnist_static_final_particles_example2}. The results show that although the SIR particles are similar to the true image, they do not capture the true uncertainty corresponding to the observation, in contrast to the OT approach (the lower part of the digits $0,3,8$ are similar to the lower part of the digit $2$).

In order to further demonstrate the performance of the OT approach, we present the trajectory of the particles in Fig.~\ref{fig:mnist_static_OT_particles_example2}. The top row shows the total observed part of the image up to that time-step, and the following 16 rows show the images generated from the particles that approximate the conditional distribution.  Moreover, we used an accurate MNIST classifier to represent the histogram of the digits generated from the particles of each algorithm in Fig.~(\ref{fig:mnist_static_dist_particles_example2}). According to these results, the SIR and EnKF methods are {\it overconfident}  that the true digit is $2$ and $0$, respectively, while the OT method produces a 
more realistic and multi-modal posterior concentrated around the digits $\{0,2,3,8\}$, 
all of which are natural in-paintings for the data.

\subsection{Dynamic image in-painting on MNIST}
We extend the previous example to the dynamic setting by introducing an update law for the latent variable of the GAN according to $X_{t+1} = (1-\alpha)X_{t} + \lambda V_t$, where $V_t$ is a standard Gaussian random variable. We consider the same observation model but we select a larger window size $r=12$ and constrain the the vertical coordinate of the window to be at the bottom, while the horizontal coordinate is randomly selected. 

A visual representation of the results is provided in Fig.~\ref{fig:mnist_dynamic_particles}. The top two rows display the true images and their observed portion at each time instant $t=1,2,\ldots,20$. Subsequent rows show the images corresponding to four randomly selected  particles from the
three algorithms; a larger figure depicting more particles is provided in Appendix~\ref{sec:app_dynamic_mnist}. The results show the particle collapse of the SIR algorithm at all time steps, convergence of the EnKF algorithm to an incorrect distribution, and the capability of the OT approach in approximating the uncertainty associated with the observation at each time step. A figure similar to the Fig.~\ref{fig:mnist_static_dist_particles_example2}  that shows the histogram of digits as a function of time is also provided in Appendix~\ref{sec:app_dynamic_mnist} which further 
highlights these phenomenon.

\begin{figure}[ht]
    \centering
    \includegraphics[width=\columnwidth,trim={115 60 90 40},clip]{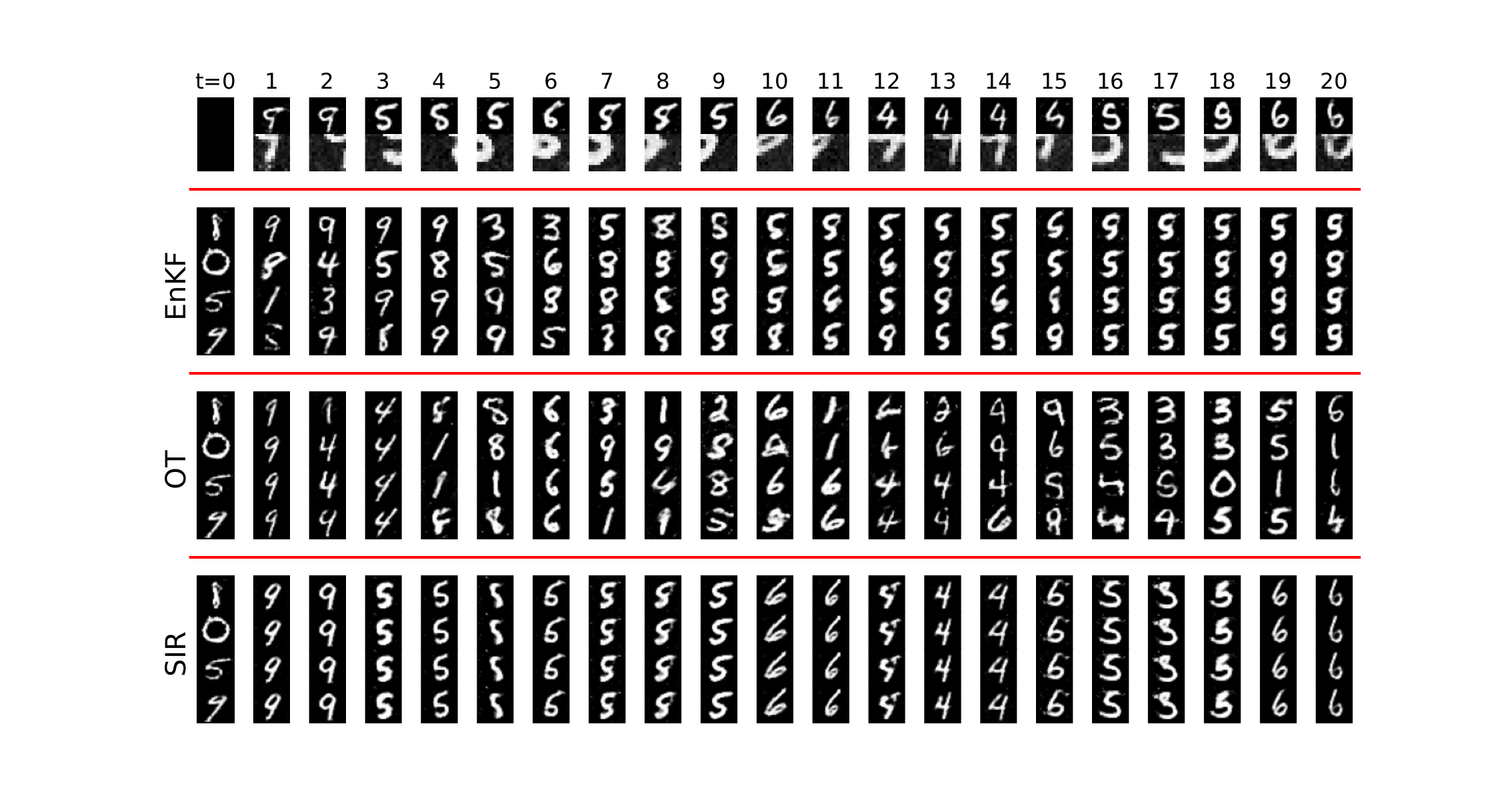}
    \caption{Numerical results for the dynamic MNIST example. The top two rows show the true image  and the observations at each time step. The rest of the rows show the first 4 particles for each method, representing the posterior at that time step. }
    \label{fig:mnist_dynamic_particles}
\end{figure}

\section{Comparison with coupling-based methods }\label{sec:comparison}
In this section, we provide a comprehensive comparison to other coupling-based methods for filtering. The particle flow method~\cite{daum2012particle,de2015stochastic} and feedback particle filter (FPF)~\cite{yang2013feedback,yang2016} involve either 
an ordinary differential equation or stochastic differential equation that updates the locations of the particles so that the probability density of the particles follows a given PDE. The particles' equation involves an unknown vector-field that needs to be approximated by solving a certain partial differential equation (PDE). 
The main challenge in this type of algorithms is to approximate the aforementioned vector-field at each time-step. The time discretization
for this type of equation often becomes unstable, especially for multi-modal posteriors or degenerate likelihoods. Our OT approach can be viewed as an exact time-discretization of the FPF algorithm, as shown in Prop. 2 in~\cite{taghvaei2022optimal}, which resolves the time-discretization issues discussed above. 

The ensemble transform particle filter~\cite{reich11}
involves solving a linear program for the discrete OT problem from a uniform prior distribution to the weighted posterior distribution for the particular value of the observation.   Solving the linear program becomes challenging as the number of particles $N$ increases. Moreover, approximating the marginal with a weighted empirical distribution suffers from the same fundamental issue that importance sampling particle filter suffers from.

The coupling method proposed in~\cite{spantini2022coupling} is the closest method to our approach. It is also likelihood free and amenable to the neural net parameterizations.  The main difference is in the form of the transport map. While in this paper we aim at finding the OT map from prior to the posterior, the approach in~\cite{spantini2022coupling} aims at finding the Knothe–Rosenblatt rearrangement. Thus, our approach is more closely related to the semi-dual solutions to the OT problem and can utilize the existing theoretical results and computational methodologies.

Our work is based on the variational formulation of the Bayes' law presented in~\cite{taghvaei2022optimal} (TH). TH proposes  $\min_f \max_g \mathbb E[f(X,Y) - f(\nabla_x g(\bar X,Y),Y) + \bar X^\top\nabla_x g(\bar X,Y)]$ where $f$ and $g$ are ICNNs. Our initial numerical study using this method did not provide compelling results due to the limitations of the ICNN architecture (please see Remark \ref{remark:TH}). To resolve this issue, we made two changes: we represent $f(x,y)$ as $\frac{1}{2}\|x\|^2 -\tilde f(x,y)$ and $\nabla_x g(x,y)$ as $T(x,y)$, where $T$ and $\tilde f$ are ResNets. We also include an EnKF block in the architecture for $T$ so that our method performs as well as EnKF if no training is performed. With these improvements, we were able to produce numerical results on toy, benchmark, and high-dimensional problems that are consistent and perform better than conventional baselines. The clear computational improvement can be observed by comparing Fig.~1 in the TH paper with 
Fig.~\ref{fig:squared} and Fig.~\ref{fig:bimodal} in this paper.

\section{Concluding remarks and limitations}\label{sec:discussion}
    We presented theoretical and numerical results demonstrating the performance of an OT-based nonlinear filtering algorithm. A complete characterization of the bias-variance terms in the optimality gap of~(\ref{eq:empirical-optimization}) is subject of ongoing work using the existing tools for convergence of empirical measures in Wasserstein metric~\cite{fournier2015rate}, OT map estimation~\cite{hutter2019minimax}, and Bayesian posterior 
perturbation analysis~\cite{tapio}. 

Numerical experiments show the better performance of the OT approach for examples designed to have multi-modal distributions, while it is noted that the raw 
computational time of the OT approach is higher, and 
for nonlinear filtering examples that admit unimodal posterior, such as Lorentz-96, the EnKF provides a fast and reasonable approximation. 
A computational feature of the OT method is that it provides the user with the flexibility to set the computational budget: without any training, OT algorithm implements EnKF; with additional budget (increasing training iterations and complexity of the neural net), the accuracy is increased, see Appendix~\ref{apdx:comp-time}. The computational efficiency of the OT approach can be improved by fine-tuning the neural network architectures, optimizing the hyper-parameters, and including an offline training stage for the first time step, which will be used as a warm-start for training at future time steps in the online implementation. This line of research is pursued in our recent work~\cite{al2024data}, where a new data-driven nonlinear filtering algorithm was introduced aimed at ergodic state and observation dynamics. The algorithm consists of offline and online stages: The offline stage is expensive to train and learns a static conditioning transport map; The online stage is computationally cheap and uses the learned conditioning map without any further training, providing a competitive computational time compared with traditional methods during online inference.

\section*{Acknowledgements}
The authors thank the reviewers for their constructive comments and feedback, which helped improve the paper. Mohammad Al-Jarrah and Amirhossein Taghvaei are supported by the National Science Foundation (NSF) award EPCN-2318977, Niyizhen Jin and  Bamdad Hosseini are supported by the NSF award DMS-2208535. Both supports are gratefully acknowledged.

\section*{Impact Statement}
This paper impacts the field of nonlinear filtering, data assimilation, and uncertainty quantification. It facilitates effective quantification and evaluation of uncertainty, allowing safe and reliable operation of autonomous systems.

\bibliography{TAC-OPT-FPF,references}
\bibliographystyle{icml2024}


\newpage
\appendix
\onecolumn

\section{Preliminaries}
We start by reviewing preliminaries on convex analysis and OT theory. 
\subsection{Preliminaries on convex analysis}
We recall definitions from convex analysis that will be useful in the subsequent sections. These definitions appear in most expositions on convex analysis, e.g. see~\cite{rockafellar1997convex}.
\begin{itemize}
    \item \textbf{Convex conjugate:}
    For a  function $\phi: \mathbb{R}^n \mapsto \cup \{\pm \infty\}$, its  convex conjugate, denoted as $\phi^*$, is defined according to
    \begin{equation}\label{eq:convex-conjugate}
        \phi^*(y) = \sup_{x \in \mathbb R^n}(x^Ty - \phi(x)).
    \end{equation}
    Moreover, $\phi$ is convex iff there exists a function $\psi$ such that $\phi=\psi^*$. 
    \item \textbf{Strong convexity:} 
    A function $\phi: \mathbb{R}^n \mapsto \mathbb{R} \cup \{\pm \infty\}$ is $\alpha$-strongly convex 
    if 
    for all $x,y,z \in \Re^n$ and $t \in [0,1]$:
    \[\phi(tx+(1-t)y) \leq t\phi(x)+(1-t)\phi(y)-\frac{\alpha}{2}t(1-t)\|x-y\|^2.\]
    \item \textbf{Smoothness:} A function $\phi: \mathbb{R}^n \mapsto \mathbb{R} \cup \{\pm \infty\}$
    is $\beta$-smooth if it is differentiable and for all $x,y \in \Re^n$:
    \[ \phi(y) \leq \phi(x) + \nabla \phi(x)^\top (y-x) + \frac{\beta}{2} \|x-y\|^2.\]
\end{itemize}

For a lower semi-continuous cost function $c:\Re^n \times \Re^n \to \Re$, we recall the following definition.  

\begin{itemize}
    \item 	{\bf inf-$c$ convolution:} For a function $f:\mathbb R^n \to \mathbb R \cup \{\pm \infty\}$, its inf-$c$ convolution, denoted by $f^c$, is defined according to 
\begin{equation}\label{eq:inf-c}
		f^c(y) = \inf_{x\in \mathbb R^n}\left[c(x,y)-f(x)\right]. 
	\end{equation}
	Moreover, $f$ is said to be $c$-concave if there exists a function $g$ such that $f=g^c$. 
\end{itemize}

\begin{remark}
    With a squared cost function $c(x,y)=\frac{1}{2}\|x-y\|_2^2$, the $c$-concave property is related to convexity. In particular, 
    \begin{center}
        $f$ is $c$-concave if and only if $\phi= \frac{1}{2}\|\cdot\|^2 - f$ is convex. 
    \end{center}Moreover, the inf-$c$ convolution and convex conjugate are related according to: \[\frac{1}{2}\|\cdot\|^2 - f^c =  (\frac{1}{2}\|\cdot\|^2 - f)^*.\]
\end{remark}

We also recall the definition of sub-differential and gradient of a convex function. 
\begin{itemize}
    \item 	{\bf Sub-differential and gradient:} The sub-differential of a convex function $\phi:\mathbb R^n \to \mathbb R \cup \{\pm \infty\}$ at $x\in \Re^n$, denoted by $\partial \phi(x)$, is a set containing all vectors $y \in \Re^n$ such that 
\begin{equation*}
   \phi(z) \geq \phi(x) + y^\top(z-x),\quad \forall z \in \Re^n.
\end{equation*}
Moreover, $\phi$ is differentiable at $x$, with derivative $\nabla \phi(x)$, iff $\partial \phi(x)=\{\nabla \phi(x)\}$. 
\end{itemize}
The definition of the convex conjugate and sub-differential imply the relationship~\citep[Thm. 23.5]{rockafellar1997convex} 
\begin{equation*}
    \phi(x) + \phi^*(y) = x^\top y\quad \Leftrightarrow \quad x \in \partial \phi^*(y) \quad \Leftrightarrow \quad y \in \partial \phi(x).
\end{equation*} 
In particular, when $\phi^*$ is differentiable at $y \in \Re^n$, we have the identity
\begin{equation*}
   \phi^*(y)  =  \nabla \phi^*(y)^\top y - \phi(\nabla \phi^*(y)). 
\end{equation*}
The relationship between convexity and $c$-concave, when $c$ is the squared cost, implies the similar identity when $f^c$ is differentiable at $y \in \Re^n$:
\begin{equation*}
   f^c(y)  =  c(y,y-\nabla f^c(y)) - f(y-\nabla f^c(y)). 
\end{equation*}

Finally, we recall the duality relationship between strong convexity and smoothness (e.g. see~\citep[Thm. 6]{kakade2009duality}): 
\begin{center}
    $\phi$ is $\alpha$-strongly convex iff $\phi^*$ is $\frac{1}{\alpha}$-smooth. 
\end{center}

\subsection{Preliminary on OT theory}
In this subsection, we review the problem formulation and key results  from OT theory. For a complete treatment of the subject, see~\cite{villani2009optimal}. 

Given two 
probability distributions $P$ and $Q$ on $\mathbb R^n$, the Monge optimal transportation problem aims to find a map $T\colon \mathbb R^n\to \mathbb R^n$ that solves the optimization problem
\begin{align}\label{eq:Monge}
	\inf_{T\in \mathcal T(P,Q)}\, \mathbb E_{Z\sim P}[c(Z,T(Z))],
\end{align}
where $\mathcal T(P,Q):=\{T\colon \Re^n \to \Re^n;\,T_{\#}P=Q\}$ is the set of all transport maps pushing forward $P$ to $Q$, and $c\colon \mathbb R^n \times \mathbb R^n \to \mathbb R$ is a lower semi-continuous cost function that is bounded from below.  The Monge problem is relaxed by replacing deterministic transport maps with stochastic couplings according to
\begin{align}\label{eq:Kantorovich-coupling}
	\inf_{\pi \in \Pi(Q,P)}\, \mathbb E_{(Z',Z)\sim \pi}[c(Z',Z)],
\end{align}
where $\Pi(Q,P)$ denotes the set of all joint distributions on $\mathbb R^n \times \mathbb R^n$ with marginals  $Q$ and $P$. This relaxation, due to Kantorovich, turns the Monge problem into a linear program, whose dual becomes
\begin{align}\label{eq:dual-Kantorovich}
	\sup_{(f,g)\in \text{Lip}_c}\,\{\mathcal J_\text{dual}(f,g):= \mathbb E_{Z'\sim Q}[f(Z')] + E_{Z\sim P}[g(Z)]\}
\end{align}
where $\text{Lip}_c$ is the set of pairs  of functions $(f,g)$, from $\mathbb R^n\to\mathbb R$, that 
satisfy the constraint 
\begin{align*}
f(z)+g(z')\leq c(z,z'),\quad\forall z,z'\in \mathcal \mathbb R^n.
\end{align*}

The well-known Brenier's result~\cite{brenier1991polar}  establishes the existence and uniqueness of the solution to the Monge problem by making a direct connection to the solution of the dual Kantorovich problem. 
\begin{theorem}[\cite{brenier1991polar}]Consider the Monge problem~(\ref{eq:Monge}) with cost $c(z,z')=\|z-z'\|^2/2$. Assume $P$ and $Q$ have finite second-order moments and $P$ is absolutely continuous with respect to the Lebesgue measure. Then, the Monge problem admits a unique minimizer $\bar T(z) =z-\nabla \bar f^c(z)$ where $\bar f$ is $c$-concave and the pair $(\bar f,\bar f^c)$ are the unique (up to an additive constant) maximizer of the dual Kantorovich problem~(\ref{eq:dual-Kantorovich}). 
 \label{thm:Brenier}
\end{theorem}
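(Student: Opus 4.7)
The plan is to establish the result via the Kantorovich relaxation and its dual, then use the quadratic structure of the cost to pass back to a Monge map. First I would show that the relaxed problem~(\ref{eq:Kantorovich-coupling}) admits an optimal coupling $\bar\pi \in \Pi(Q,P)$. This follows from weak compactness of $\Pi(Q,P)$ on $\R^n \times \R^n$ together with lower semi-continuity of $c$ and the assumption that $P,Q$ have finite second moments (which guarantees the value is finite). Simultaneously, I would invoke Kantorovich duality to get strong duality: the supremum in~(\ref{eq:dual-Kantorovich}) equals the infimum in~(\ref{eq:Kantorovich-coupling}) and is attained by a pair $(\bar f,\bar g)$. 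A standard $c$-transform argument lets one replace $\bar g$ by $\bar f^c$ while only improving the dual objective, so the optimum is attained by a $c$-concave $\bar f$ paired with $\bar g=\bar f^c$.

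Second, I would exploit $c$-cyclical monotonicity of $\mathrm{supp}(\bar\pi)$, which is a consequence of optimality. For the quadratic cost $c(z,z')=\tfrac12\|z-z'\|^2$, an elementary expansion reduces $c$-cyclical monotonicity of $\{(z,z')\}$ to ordinary cyclical monotonicity of $\{(z,z-z')\}$; by Rockafellar's theorem this set lies in the graph of the subdifferential of a convex function $\varphi$. Setting $\varphi=\tfrac12\|\cdot\|^2-\bar f$ and using the conjugacy between $c$-concavity and convexity recalled in the preliminaries, the coupling $\bar\pi$ is supported on the graph of $z\mapsto z-\nabla \bar f^c(z)$ wherever the derivative exists. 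The key step, which I expect to be the main obstacle, is showing that $\nabla \bar f^c$ is well-defined $P$-a.e.: this is where absolute continuity of $P$ is essential. Combining the fact that a convex function is differentiable outside a Lebesgue-null set (a consequence of Alexandrov's theorem) with the hypothesis $P \ll \mathrm{Leb}$, one concludes that $\bar\pi$ is concentrated on the graph of $\bar T(z) = z - \nabla \bar f^c(z)$, so this deterministic map is admissible for~(\ref{eq:Monge}) and attains the Kantorovich value, hence is a Monge minimizer.

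Third, I would address uniqueness. Any other Monge minimizer $\tilde T$ induces another optimal Kantorovich coupling, which by the same cyclical-monotonicity argument must be concentrated on the graph of the subdifferential of some convex potential; matching marginals and uniqueness of gradients of convex functions a.e.\ force $\tilde T = \bar T$ $P$-a.e. For uniqueness of the dual potential, I would note that $\bar f$ enters the dual objective through $\mathbb E_{P}[\bar f^c]$ and $\mathbb E_Q[\bar f]$, and the optimality conditions pin down $\nabla \bar f^c$ uniquely $P$-a.e.\ by the argument above; two $c$-concave potentials with equal gradients $P$-a.e.\ on a connected open set containing $\mathrm{supp}(P)$ must differ by a constant, yielding uniqueness up to an additive constant.

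The routine steps (weak compactness, strong duality, the $c$-transform trick, Rockafellar's characterization of cyclically monotone sets) are all classical; the genuine subtlety in a complete write-up is handling the case when $\mathrm{supp}(P)$ is not convex or not connected (where one treats each component) and invoking the correct regularity theorem for the convex potential so that $\nabla \bar f^c$ is actually defined on a set of full $P$-measure. Everything else is bookkeeping around these two points.
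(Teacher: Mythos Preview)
The paper does not prove this theorem at all: it is stated as a classical result attributed to Brenier and used as a black box in the preliminaries. There is therefore nothing in the paper to compare your proposal against.

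That said, your sketch is a faithful outline of the standard modern proof (existence via tightness and lower semicontinuity, strong duality, $c$-cyclical monotonicity of the optimal support, Rockafellar's theorem to obtain a convex potential, and a.e.\ differentiability under $P\ll\mathrm{Leb}$). Two small corrections are worth flagging. First, you invoke Alexandrov's theorem for the a.e.\ differentiability of the convex potential; Alexandrov's theorem concerns second-order differentiability, whereas here you only need first-order differentiability Lebesgue-a.e., which follows from local Lipschitzness of convex functions and Rademacher's theorem (or from the classical fact that convex functions are differentiable off a set of Hausdorff dimension at most $n-1$). Second, your uniqueness argument for the potential assumes a connected open set containing $\mathrm{supp}(P)$; in full generality uniqueness of the Kantorovich potential up to an additive constant requires a connectedness hypothesis on the interior of $\mathrm{supp}(P)$, and without it one only gets uniqueness of the gradient $P$-a.e.\ (hence of the map $\bar T$). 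You correctly flag this as a subtlety, but note that the theorem as stated in the paper glosses over this point as well.
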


\subsection{Max-min formulation}\label{apdx:max-min}
In this subsection, we present the max-min formulation 
of the dual Kantorovich problem~(\ref{eq:dual-Kantorovich}), which is useful for numerical approximation of OT maps~\cite{makkuva2020optimal,rout2022generative}.  The presentation starts with a formal derivation followed by a proposition supported by rigorous arguments. 

The first step is to use the result of Theorem~\ref{thm:Brenier} to replace $g$ with $f^c$, in the dual problem~(\ref{eq:dual-Kantorovich}), resulting the following optimization problem
\begin{equation*} 
	\sup_{f \in c\text{-Concave}}\,\mathcal J_\text{dual}(f,f^c),
\end{equation*}
where $c$-Concave is the set of functions on $\mathbb R^n$ that are $c$-concave. Next, we use the definition of $f^c$ in~(\ref{eq:inf-c}) to conclude
\begin{align*}
 &\mathbb E_{Z\sim P}[f^c(Z)] = \mathbb E_{Z\sim P}[\inf_{z\in \mathbb R^n} c(z,Z)-f(z)]
 = \inf_{T\in \mathcal M(P)} \,\mathbb E_{Z\sim P}[ c(T(Z),Z)-f(T(Z))] 
\end{align*}
where  we assumed that  
\begin{equation*}
T(Z) = \argmin_{z \in \mathbb R^n} c(Z,z) - f(z),\quad \text{a.e.}
\end{equation*}
for a measurable map $T:\Re^n \to \Re^n$. This assumption is valid for $T = \text{Id} - \nabla f^c$ whenever $f^c$ is differentiable. 
Combining these two steps, the Kantorovich dual problem is expressed as the following max-min problem
\begin{align}
\sup_{f \in c\text{-concave}} \,\inf_{T\in \mathcal M(P)}\, \left\{\mathcal J_\text{max-min}(f,T):=\mathbb E_{Z'\sim Q}[f(Z')] + \mathbb E_{Z\sim P}[ c(T(Z),Z)-f(T(Z))] \right\}\label{eq:max-min-OT}
\end{align}
The following proposition, which is an extension of the existing result in~\citep[Thm. 3.3]{makkuva2020optimal}, connects the max-min problem to the original Kantorovich problem. 
\begin{proposition}\label{prop:max-min-OT}
Consider the max-min problem~(\ref{eq:max-min-OT}) under the assumptions of Thm.~\ref{thm:Brenier}. Let $(\bar f,\bar f^c)$ be the optimal pair for the dual Kantorovich problem~(\ref{eq:dual-Kantorovich}), and $\bar T = \text{Id} - \nabla f^c$ the OT map from $P$ to $Q$.  Then, $(\bar f, \bar T)$ is the unique pair, up to an additive constant for $\bar f$, that solves the problem~(\ref{eq:max-min-OT}), i.e. 
\begin{equation}\label{eq:OT-max-min-result}
     \sup_{f \in c\text{-concave}} \,\inf_{T\in \mathcal M(P)}\, \mathcal J_\text{max-min}(f,T) = \mathcal J_\text{max-min}(\bar f, \bar T).  
\end{equation}
and  if $\mathcal J_\text{max-min}(\bar f,\bar T) = \mathcal J_\text{max-min}(g,S)$, then $g= \bar f + \text{constant}$ and $S=\bar T$.  
\end{proposition}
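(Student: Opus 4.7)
The plan is to reduce the max-min problem to the Kantorovich dual \eqref{eq:dual-Kantorovich} restricted to $c$-concave potentials, and then invoke Brenier's theorem (Thm.~\ref{thm:Brenier}) for existence and uniqueness. The key step is the identity
\[\inf_{T\in \mathcal M(P)}\,\mathbb E_{Z\sim P}[c(T(Z),Z)-f(T(Z))] = \mathbb E_{Z\sim P}[f^c(Z)]\]
for every $c$-concave $f$. The inequality $\ge$ is immediate from the pointwise bound $c(T(Z),Z)-f(T(Z))\ge f^c(Z)$ granted by the definition \eqref{eq:inf-c} of $f^c$. Once this identity is established, $\inf_T \mathcal J_{\text{max-min}}(f,T) = \mathcal J_\text{dual}(f,f^c)$, so the outer supremum equals $\sup_{f\in c\text{-concave}} \mathcal J_\text{dual}(f,f^c)$, which by Thm.~\ref{thm:Brenier} is attained uniquely (up to an additive constant) at $\bar f$ with value $\mathcal J_\text{dual}(\bar f,\bar f^c)$.

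For the reverse inequality $\le$ I would exhibit a measurable selector realizing the pointwise infimum $P$-a.e. Since $f$ is $c$-concave iff $\phi := \tfrac12 \|\cdot\|^2 - f$ is convex, one computes
\[\inf_{z}\bigl[c(z,z') - f(z)\bigr] \;=\; \tfrac12\|z'\|^2 - \phi^*(z'),\]
and the infimum is attained exactly on $\partial \phi^*(z')$. The conjugate $\phi^*$ is convex and locally Lipschitz on the interior of its domain, so by Rademacher's theorem it is differentiable outside a Lebesgue-null set; the assumption that $P$ is absolutely continuous then implies that $T_f(z') := \nabla \phi^*(z') = z' - \nabla f^c(z')$ is a well-defined Borel-measurable map $P$-a.e.\ and satisfies $c(T_f(z'),z') - f(T_f(z')) = f^c(z')$ for $P$-a.e.\ $z'$. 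Plugging $T_f$ into the expectation yields the matching upper bound and, specialized to $f = \bar f$, shows that $(\bar f,\bar T)$ attains $\mathcal J_\text{dual}(\bar f,\bar f^c)$ and hence solves \eqref{eq:max-min-OT}.

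For uniqueness, suppose $(g,S)$ is any other maximizer. The reduction gives $\mathcal J_\text{dual}(g,g^c) = \mathcal J_\text{max-min}(g,S) = \mathcal J_\text{max-min}(\bar f,\bar T) = \mathcal J_\text{dual}(\bar f, \bar f^c)$, so by Brenier's theorem $g = \bar f + \text{const}$; this constant cancels in the gradient $\nabla g^c = \nabla \bar f^c$, whence $T_g = \bar T$. Moreover, since the pointwise bound $c(S(Z),Z) - g(S(Z)) \ge g^c(Z)$ has matching expectation, equality holds $P$-a.e., which forces $S(Z) \in \arg\min_z [c(z,Z) - g(z)]$ for $P$-a.e.\ $Z$; because $\phi^* = \tfrac12\|\cdot\|^2 - g^c$ is differentiable $P$-a.e., this argmin is the singleton $\{\bar T(Z)\}$, and therefore $S = \bar T$ $P$-a.e. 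The main obstacle throughout is the measurable-selection argument underlying the reverse inequality in the inner minimization; it is standard in the OT literature but must be handled carefully using absolute continuity of $P$ and the convex-analytic regularity afforded by $c$-concavity.
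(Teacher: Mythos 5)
Your overall strategy is the same as the paper's: identify the inner infimum with $\mathbb E_{Z\sim P}[f^c(Z)]$ so that $\inf_{T}\mathcal J_\text{max-min}(f,T)=\mathcal J_\text{dual}(f,f^c)$, then invoke Kantorovich duality and Brenier's theorem for the value and for uniqueness. The difference lies in how the inner infimum is shown to be approached, and this is where your argument has a real gap. Your selector $T_f=\nabla\phi^*$ exists $P$-a.e.\ only if $\operatorname{dom}\phi^*$ carries full $P$-measure. But the outer supremum ranges over all $f$ with $\phi=\tfrac12\|\cdot\|^2-f$ convex, and whenever $\phi$ fails to grow superlinearly in some direction, $\operatorname{dom}\phi^*$ is a proper convex set and $f^c=-\infty$ on its complement. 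Concrete examples: $f=\tfrac12\|\cdot\|^2$ (so $\phi\equiv0$, $\phi^*$ finite only at the origin) or $f(x)=\tfrac12\|x\|^2-e^{x_1}$ (a genuinely $c$-concave function with $\phi^*$ finite only on a ray), for which $f^c=-\infty$ Lebesgue-a.e.\ while $\mathbb E_Q[f]$ can be finite. On the set $\{\phi^*=+\infty\}$ no minimizer exists, $\nabla\phi^*$ is undefined, and absolute continuity of $P$ plus Rademacher (which only covers the interior of the domain) does not rescue you. For such $f$ your claimed identity, and in particular the bound $\inf_T\mathcal J_\text{max-min}(f,T)\le\mathcal J_\text{dual}(\bar f,\bar f^c)$ that the $\le$ direction of \eqref{eq:OT-max-min-result} requires, is not established: a map equal to your selector on the good set and arbitrary on the bad set yields an expectation that need not be bounded by $\mathcal J_\text{dual}(\bar f,\bar f^c)$. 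Closing this requires an approximate measurable-selection argument (e.g.\ Kuratowski--Ryll-Nardzewski applied to the sublevel multifunction $z'\mapsto\{z:\,c(z,z')-f(z)\le -n\}$ on the bad set), which you gesture at as ``standard'' but do not carry out, and which your appeal to absolute continuity does not supply.

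The paper avoids exactly this issue by a different device: it replaces $f$ by the Moreau--Yosida-type regularization $f_\epsilon=f-\tfrac\epsilon2\|\cdot\|^2$, so that $\phi_\epsilon=\phi+\tfrac\epsilon2\|\cdot\|^2$ is $\epsilon$-strongly convex, hence $\phi_\epsilon^*$ is finite and $\tfrac1\epsilon$-smooth on all of $\Re^n$ and $\mathrm{Id}-\nabla f_\epsilon^c$ is an everywhere-defined continuous selector; the cost is the additive error $\epsilon M$, removed by letting $\epsilon\to0$. If you either adopt that regularization or supply the measurable-selection step, the rest of your proof goes through: the lower bound, the appeal to Thm.~\ref{thm:Brenier}, and the uniqueness argument (equality $P$-a.e.\ in the pointwise bound forces $S(Z)$ into the a.e.-singleton $\partial\bar\phi^*(Z)=\{\bar T(Z)\}$) mirror the paper's Steps 1, 3 and 4, including the same implicit reading of ``optimal pair'' as requiring $S$ to attain the inner infimum for $g$.
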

\begin{proof} We present the proof in four steps.

{\bf Step 1:} According to the definition of inf-$c$ convolution~(\ref{eq:inf-c})
\begin{align*}
   f^c(Z) 
    \leq  c(T(Z),Z)  - f(T(Z))
   ,\quad \text{a.e.} 
\end{align*}
for all maps $T \in\mathcal M(P)$. Upon taking the expectation with respect to $Z\sim P$, adding $\mathbb E_{Z'\sim Q}[f(Z')]$ , and taking the inf over $T\in \mathcal M(P)$,
\begin{align*}
    \mathcal J_{\text{dual}}(f,f^c) \leq \inf_{T \in \mathcal M(P)}  \mathcal J_\text{max-min}(f,T). 
\end{align*}
{\bf Step 2:} For any $c$-concave function $f$,  introduce $f_\epsilon:=f-\frac{\epsilon}{2}\|\cdot\|^2$. This transformation ensures that its inf-$c$ convolution $f_\epsilon^c$ is differentiable a.e. ($f_\epsilon^c= \frac{1}{2}\|\cdot\|^2 - \phi_\epsilon^*$, with $\phi_\epsilon:=\frac{1}{2}\|\cdot\|^2 - f_\epsilon$, is $\frac{1}{\epsilon}$-smooth because  $\phi_\epsilon$ is $\epsilon$-strongly convex. This regularization process is known as  
{\it Moreau-Yosida
approximation} \cite{moreau1965proximite}). Because $f^c_\epsilon$ is differentiable, the minimum in the definition of $f^c_\epsilon(z)$ is achieved at a unique point $z-\nabla f^c_\epsilon(z)$, concluding the identity 
\begin{align*}
   f^c_\epsilon(Z) = c(Z-\nabla f^c_\epsilon(Z),Z)  - f_\epsilon(Z-\nabla f^c_\epsilon(Z)),\quad \text{a.e.}  
\end{align*}
Using the fact that $f_\epsilon(z)\leq f(z)$ on the right-hand-side of this identity, we arrive at the inequality
\begin{align*}
   f^c_\epsilon(Z)  \geq c(Z-\nabla f^c_\epsilon(Z),Z)  - f(Z-\nabla f^c_\epsilon(Z)) ,\quad \text{a.e.}  
\end{align*}
Taking the expectation over  $Z\sim P$ and adding $\mathbb E_{Z'\sim Q}[f(Z')]$ yields 
\begin{align*}
    \mathcal J_{\text{dual}}(f,f^c_\epsilon) \geq \mathcal J_\text{max-min}(f,\text{Id}-\nabla f^c_\epsilon)  \geq \inf_{T \in \mathcal M(P)}  \mathcal J_\text{max-min}(f,T), 
\end{align*}
where the last inequality follows  because $\text{Id}- \nabla f^c_\epsilon \in \mathcal M(P)$. Finally, using $\mathbb E_{Z'\sim Q}[f(Z')] = \mathbb E_{Z'\sim Q}[f_\epsilon(Z')] +\epsilon M  $, where $M := \mathbb E_{Z'\sim Q}[\frac{1}{2}\|Z'\|^2]<\infty$, concludes 
\begin{align*}
    \mathcal J_{\text{dual}}(f_\epsilon,f^c_\epsilon) + \epsilon M   \geq \inf_{T \in \mathcal M(P)}  \mathcal J_\text{max-min}(f,T). 
\end{align*}
{\bf Step 3:} Combining the results of step 1 and 2 gives 
\begin{align*}
     \mathcal J_{\text{dual}}(f,f^c) \leq \inf_{T \in \mathcal M(P)}  \mathcal J_\text{max-min}(f,T) \leq  \mathcal J_{\text{dual}}(f_\epsilon,f^c_\epsilon) + \epsilon M
\end{align*}
Taking the sup over $f \in c$-Concave and the fact that $(\bar f,\bar f^c)$ are the maximizer of the dual problem, concludes 
\begin{align*}
     \mathcal J_{\text{dual}}(\bar f,\bar f^c) \leq  \sup_{f \in c\text{-concave}} \,\inf_{T \in \mathcal M(P)}  \mathcal J_\text{max-min}(f,T) \leq  \mathcal J_{\text{dual}}(\bar f,\bar f^c) + \epsilon M
\end{align*}
Taking the limit as $\epsilon \to 0$ concludes the identity
\begin{align*}
    \sup_{f \in c\text{-concave}} \,\inf_{T \in \mathcal M(P)}  \mathcal J_\text{max-min}(f,T) =   \mathcal J_{\text{dual}}(\bar f,\bar f^c)
\end{align*}
The proof of~(\ref{eq:OT-max-min-result}) is concluded by noting that $\mathcal J_{\text{dual}}(\bar f,\bar f^c) = J_\text{max-min}(\bar f,\bar T)$ for $\bar T = \text{Id}-\nabla \bar f^c$ because $\bar f^c$ is differentiable a.e.. 

 {\bf Step 4:} To prove uniqueness, let $(g,S)$ be another optimal pair for $J_\text{max-min}$. Therefore, 
 \begin{equation}\label{eq:opt-condition-uniqueness-proof}
 \mathcal J_\text{max-min}(g,S) = \sup_{f \in c\text{-concave}} \,\inf_{T \in \mathcal M(P)}  \mathcal J_\text{max-min}(f,T) = \mathcal J_\text{dual}(\bar f,\bar f^c).    
 \end{equation}
 On the other hand, from the inequalities obtained in the previous steps
\begin{align*}
    \mathcal J_\text{dual}(g,g^c) \leq    \mathcal J_\text{max-min}(g,S)  \leq  \mathcal J_\text{dual}(g,g_\epsilon^c)
\end{align*}
Taking the limit as $\epsilon \to 0$ and the application of monotone convergence theorem implies $\mathcal J_\text{dual}(g,g^c) =\mathcal J_\text{max-min}(g,S)$, which with the optimality condition~(\ref{eq:opt-condition-uniqueness-proof}), concludes $\mathcal J_\text{dual}(g,g^c) =\mathcal J_\text{dual}(\bar f,\bar f^c)$. The uniqueness of the solution $(\bar f,\bar f^c)$ to the dual problem~(\ref{eq:dual-Kantorovich}) and the differentiability of $\bar f^c$ concludes that $g=\bar f+\text{const.}$ and $S=\text{Id}-\nabla \bar f^c=\bar T$. 
\end{proof}

 \section{Proofs of the theoretical results}

\subsection{Justification for the consistency condition~(\ref{eq:T-constraint-joint})}\label{apdx:consistency}
The consistency condition~(\ref{eq:T-constraint-joint}) implies that
\begin{equation*}
    \mathbb E[F(T(\overline X,Y),Y)] =   \mathbb E[F(X,Y)]
\end{equation*}
for all measurable and bounded functions $F:\mathbb R^n \times \mathbb R^m \to \mathbb R$. In particular, with $F(x,y)=f(x)g(y)$, we have 
\begin{equation*}
    \mathbb E[f(T(\overline X,Y))g(Y)] =   \mathbb E[f(X)g(Y)]
\end{equation*}
for all measurable and bounded functions $f:\Re^n \to \Re$ and $g:\Re^m \to \Re$. The definition of the conditional expectation implies 
\begin{equation*}
    \mathbb E[f(T(\overline X,Y))|Y] =   \mathbb E[f(X)|Y].
\end{equation*}
Finally, the independence of $\bar X$ and $Y$ yields 
\begin{equation*}
    \mathbb E[f(T(\overline X,y)] =   \mathbb E[f(X)|Y=y],\quad \text{a.e. } y
\end{equation*}
for all measurable and bounded functions $f$, 
concluding $T(\cdot,y)_{\#}P_X = P_{X|Y}(\cdot|y)$ a.e. $y$. 
\subsection{Proof of Proposition \ref{prop:consistency}}
\label{proof:consistency}
{Express the objective function in~(\ref{eq:new_loss}) as
\begin{align*}
    J(f,T) = \mathbb E[\mathcal J^Y_{\text{max-min}}(f(\cdot,Y),T(\cdot,Y))]
\end{align*}
where 
\begin{equation*}
  \mathcal J^y_{\text{max-min}}(g,S) = \mathbb E_{X \sim P_{X|Y=y}}[g(X)] + \mathbb E_{\overline X \sim P_X}[c(S(\overline X),\overline X)-g(S(\overline X)]
\end{equation*}
is the max-min objective function in~(\ref{eq:max-min-OT}) with the marginal distributions $Q=P_{X|Y=y}$ and $P=P_X$.  Then, according to the Prop. \ref{prop:max-min-OT}, 
\begin{equation*}
    \sup_{g \in c\text{-Concave}}\,\inf_{S \in \mathcal M(P_X)}\, \mathcal J^y_{\text{max-min}}(g,S) =  \mathcal J^y_{\text{max-min}}(\overline g_y, \overline S_y)
\end{equation*}
for a unique pair $(\overline g_y,\overline S_y)$ where $\overline S_y = \text{Id}-\nabla \overline g^c_y$ is the OT map from $P_X$ to $P_{X|Y=y}$. Theorem 2.3 of \cite{carlier2016vector}\footnote{The dual function $f$ is related to the dual function $\phi$ in~\cite{carlier2016vector} with the transformation $\phi(x,y)=\frac{1}{2}\|x\|^2-f(x,y)$.} verifies that there is a measurable concatenations of the functions  $\overline g_y$ to construct the pair $(\overline f,\overline T)$ according to
\begin{align*}
    \overline f(x,y) = \overline g_y(x),\quad \overline T(x,y) = x -  \nabla \overline g^c_y(x). 
\end{align*}
The concatenated functions solve the optimization problem~(\ref{eq:new_loss}) and $T(\cdot,y)$ serves the OT map from $P_X$ to $P_{X|Y=y}$, i.e. $\mathcal{B}_y(\pi) = \overline{T}(\cdot,y)_{\#}\pi$.}

\subsection{Proof of Proposition \ref{prop:map estimation error bound}}
\label{proof:map estimation error bound}
We present the proof under the change of variable $\phi(x,y)=\frac{1}{2}\|x\|^2 - f(x,y)$. Under this change of variable, the condition $f \in c\text{-Concave}_x$ translates to $\phi \in \text{CVX}_x$, i.e. $\phi$ is convex in the first variable $x$, and the objective function in~(\ref{eq:new_loss}) becomes 
\begin{equation}\label{eq:J-L-relationship}
    J(f,T) = \mathbb E_{X\sim P_X}[\|X\|^2] - L(\phi,T)
\end{equation}
where 
\begin{align*}
    L(\phi,T) :=  \mathbb{E}_{(X,Y)\sim P_{XY}}[\phi(X,Y)] + \mathbb{E}_{(\overline X,Y) \sim P_X \otimes P_Y}[\overline X^\top T(\overline X,Y) - \phi(T(\overline X,Y),Y)]. 
\end{align*}
Next, we decompose the optimization gap $\epsilon(f,T)= \tilde \epsilon(\phi) + \tilde \delta(\phi,T)$ where,
\begin{equation*}    
\begin{aligned}
    & J(\overline f,\overline T) - \min_S J(f,S) = 
    \max_{S} L(\phi,S) - L(\overline \phi, \overline T)
    =: \tilde \epsilon(\phi)
    \\
    & J(f,T) - \min_S J(f,S)=\max_{S}L(\phi,S) - L(\phi,T) =: \tilde \delta (\phi,T) 
\end{aligned}
\end{equation*}
and $\phi(x, y) = \frac{1}{2} \| x\|^2 - f(x,y)$ and similarly $\overline \phi(x,y)= \frac{1}{2}\|x\|^2 - \overline f(x,y)$ and we used~(\ref{eq:J-L-relationship}). 

The proposition assumption states that $\phi(x,y)=\frac{1}{2}\|x\|^2 - f(x,y)$ is $\alpha$-strongly convex in $x$. The duality between strong convexity and smoothness implies that $\phi^*(x,y)$ is $\frac{1}{\alpha}$-smooth in $x$, where $\phi^*(\cdot,y)$ is the convex conjugate  of $\phi(\cdot,y)$ for all $y$. 
As a result,
\begin{equation*}
    \phi^*(w, y) \; \leq \; \phi^*(x, y)+ \nabla_x \phi^*(x, y)^\top(w-x)  +\frac{1}{2\alpha}\|w-x\|^2 \coloneqq h(w, y).
\end{equation*}
The inequality 
$\phi^*(w, y) \; \leq \; h(w, y)$ implies $\phi(w,y) \; \geq \; h^*(w, y)$. Consequently, we have the inequality:
\begin{equation} \label{eq:convexity ineq}
       \phi(w, y) \; \geq \; -\phi^*(x, y) 
       + w^\top x + \frac{\alpha}{2} \|w - \nabla_x \phi^*(x, y)\|^2. 
\end{equation}
This inequality plays a crucial role in deriving lower bounds for the quantities 
$\tilde{\delta}(\phi, T)$
and $\tilde{\epsilon}(\phi)$. 
We first establish a bound for $\tilde \delta(\phi,T)$ using (\ref{eq:convexity ineq}):
\begin{align*}
    \tilde{\delta}(\phi,T) & = \max_{S} L(\phi,S) - L( \phi,  T) \\ &= L(\phi,\nabla_x
    \phi^*) - L( \phi,  T) \\&= \mathbb{E}_{(\overline X,Y) \sim P_X \otimes P_Y}[\phi^*(\overline{X},Y) - \overline{X}^T T(\overline{X},Y) + \phi(T(\overline{X},Y),Y)]
    \\
    & \geq \frac{\alpha}{2} \mathbb{E}_{(\overline X,Y) \sim P_X \otimes P_Y}[\|T(\overline{X},Y) - \nabla_x \phi^*({\overline{X},Y})\|^2]
\end{align*}
where the second and third equality follows from the fact that 
\begin{equation}\label{eq:conjugate-optimality}
\sup_{z\in \Re^n}\, \{z^\top x - \phi(z,y)\} = \nabla \phi_x^*(x,y)^\top x - \phi(\nabla_x\phi^*(x,y),y)=\phi^*(x,y).
\end{equation}

Next, we obtain the bound for $\tilde{\epsilon}(\phi)$ as follows: 
    \begin{align*}
        \tilde{\epsilon}(\phi) &\overset{(1)}{=}     \max_{S} L(\phi,S) - L(\overline \phi, \overline T)
        \\
        &\overset{(2)}{=}   L(\phi,\nabla \phi^*_x) - L(\overline \phi, \overline T)
        \\
        &\overset{(3)}{=}  \mathbb{E}_{( X,Y)\sim P_{XY}}[\phi(X,Y)-\overline{\phi}(X,Y)] +\mathbb{E}_{( \overline X,Y)\sim P_X \otimes P_Y}[\phi^*(\overline{X},Y)- \overline X^\top\overline T(\overline X,Y) + \overline \phi(\overline T(\overline X,Y),Y)]\\
           &\overset{(4)}{=}   \mathbb{E}_{( \overline X,Y)\sim P_X \otimes P_Y}[\phi(\overline{T}(\overline{X},Y),Y)+\phi^*(\overline{X},Y) -\overline{X}^\top\overline{T}(\overline{X},Y)]       \\
        &\overset{(5)}{\geq}   \frac{\alpha}{2}\mathbb{E}_{( \overline X,Y)\sim P_X \otimes P_Y}[\|\overline{T}(\overline{X},Y)-\nabla_x \phi^*(\overline{X},Y)\|^2]
     \end{align*}
      where we used~(\ref{eq:conjugate-optimality}) in the second and third steps,  $(\overline T(\overline X,Y),Y) \sim P_{XY}$ in the fourth step, and~(\ref{eq:convexity ineq}) in the fifth step. 
Finally, we use the inequality $(a+b)^2\leq 2a^2 + 2b^2$ to obtain:
\begin{align*}
        \mathbb{E}_{( \overline X,Y)\sim P_X \otimes P_Y}[\|T(\overline{X},Y) - \overline{T}(\overline{X},Y)\|^2] &\leq 2\mathbb{E}_{( \overline X,Y)\sim P_X \otimes P_Y}[\|T(\overline{X},Y) - \nabla_x \phi^*(\overline{X},Y)\|^2] \\
        & + \|\nabla_x \phi^*(\overline{X},Y) - \overline{T}(\overline{X},Y)\|^2]
        \leq  \frac{4}{\alpha}(\tilde{\delta}(\phi,T)+\tilde{\epsilon}(\phi)) =\frac{4}{\alpha}\epsilon(f,T)
    \end{align*}

\subsection{Proof of the Prop.~\ref{prop:error-analysis}}\label{apdx:error-analysis}
\subsubsection{Proof of the lower-bound~(\ref{eq:SIR-bound})}
The bound is based on the application of the central limit theorem to $\int g d \pi^{(SIR)} - \int g d (\mathcal B_Y(\pi))$ for any uniformly bounded function $g$ (e.g. see~\citep[Thm. 9.1.8]{cappe2009inference}). In particular, the definition of $\pi^{(SIR)}$ and $\mathcal B_y(\pi)$ imply
\begin{align*}
    \int g d \pi^{(SIR)} &= \frac{\frac{1}{N}\sum_{i=1}^N h(Y|X^i)g(X^i)}{\frac{1}{N}\sum_{i=1}^Nh(Y|X^i)} =  \frac{\frac{1}{N}\sum_{i=1}^N \bar h(Y|X^i)g(X^i)}{\frac{1}{N}\sum_{i=1}^N \bar h(Y|X^i)} \\
    \int g d \mathcal B_Y(\pi) &=  \mathbb E[g(X)|Y]
\end{align*}
where  $\bar h(y|x) = h(y|x)/\int h(y|x')d \pi(x')$ is the normalized likelihood.
Then, subtracting the two expressions and multiplication by $\sqrt{N}$ yields  
\begin{align*}
    \sqrt{N}\left(\int g d \pi^{(SIR)} - \int g d \mathcal B_Y(\pi)\right)&= \frac{\frac{1}{\sqrt{N}}\sum_{i=1}^N \bar h(Y|X^i)\left(g(X^i) - \mathbb E[g(X)|Y]\right)}{\frac{1}{N}\sum_{i=1}^N\bar h(Y|X^i)}. 
\end{align*}
Application of the central limit theorem to the numerator, and law of the large numbers to the denominator, concludes the  convergence 
\begin{align*}
    \sqrt{N}\left(\int g d \pi^{(SIR)} - \int g d \mathcal B_Y(\pi)\right)&\longrightarrow Z \sim N(0, V_h(g)))
\end{align*}
where 
\begin{equation*}
    V_h(g) = \mathbb E \left[\bar h(Y|\bar X)^2 \left(g( \bar X) -\mathbb E[g(X)|Y]\right)^2\right]
\end{equation*}
and $\bar X \sim \pi$ is an independent copy of $X$. 
As a result,
\begin{align*}
    \lim_{N \to \infty}\sqrt{N}\,\mathbb E\left[\left(\mathbb \int g d \pi^{(SIR)} - \int g d \mathcal B_Y(\pi)\right)^2\right]^{\frac{1}{2}}& = \sqrt{V_h(g)}.
\end{align*}
Finally, upon using the definition of the metric, 
\begin{align*}
    \liminf_{N\to \infty} \sqrt{N}d(\pi^{(SIR)},\mathcal B_Y(\pi)) &=   \liminf_{N \to \infty}\,\sup_{g\in \mathcal G}\sqrt{N}\,\mathbb E\left[\left(\mathbb \int g d \pi^{(SIR)} - \int g d \mathcal B_y(\pi)\right)^2\right]^{\frac{1}{2}} \\&\geq \sup_{g \in \mathcal G} \, \liminf_{N\to \infty} \sqrt{N}E\left[\left(\mathbb \int g d \pi^{(SIR)} - \int g d \mathcal B_y(\pi)\right)^2\right]^{\frac{1}{2}} \\&=\sup_{g \in \mathcal G} \sqrt{V_h(g)}.
\end{align*}
\subsubsection{Proof of the upper-bound~(\ref{eq:OT-bound})}
According to Prop.~\ref{prop:consistency} and the definition (\ref{eq:pi-OT}), we have 
$\mathcal B_Y(\pi) = \overline T(\cdot,Y)_{\#} \pi$ and $\pi^{(OT)}= \widehat T(\cdot,Y)_{\#} \pi$, respectively. As a result, 
\begin{align*}
    d(\pi^{(OT)},\mathcal B_Y(\pi)) &= d(\widehat T(\cdot,Y)_{\#} \pi, \overline T(\cdot,Y)_{\#} \pi)\\&= \sup_{g\in \mathcal G} \mathbb E\left[ \left(\int g d(\widehat T(\cdot,Y)_{\#} \pi) - \int g d(\overline T(\cdot,Y)_{\#} \pi)\right)^2  \right]^{\frac{1}{2}}
\end{align*}
By expressing
\begin{align*}
    \int g d(\widehat T(\cdot,Y)_{\#} \pi) &=\mathbb E[g(\widehat T(\bar X,Y)|Y]\quad \text{and}\\
    \int g d(\overline T(\cdot,Y)_{\#} \pi) &=\mathbb E[g(\overline T(\bar X,Y)|Y],
\end{align*}
where $\bar X \sim \pi$ is an independent copy of $X$, we arrive at the identity 
\begin{align*}
    d(\pi^{(OT)},\mathcal B_Y(\pi)) 
    &= \sup_{g\in \mathcal G} \mathbb E\left[ \left(\mathbb E[g(\widehat T(\bar X,Y)|Y] - \mathbb E[g (\overline T(\bar X,Y)|Y]\right)^2  \right]^{\frac{1}{2}}.
\end{align*}
Upon application of the Jensen's inequality and the fact that $g$ is Lipschitz with constant $1$, 
\begin{align*}
    d(\pi^{(OT)},\mathcal B_Y(\pi)) 
    &\leq  \sup_{g\in \mathcal G} \mathbb E\left[ \left(g(\widehat T(\bar X,Y)-g (\overline T(\bar X,Y)\right)^2  \right]^{\frac{1}{2}}\\
    & \leq   \mathbb E\left[ \|\widehat T(\bar X,Y)-\overline T(\bar X,Y)\|^2  \right]^{\frac{1}{2}}. 
\end{align*}
Finally, according to Prop.~\ref{prop:map estimation error bound}, the right-hand-side of this inequality is bounded by $\sqrt{\frac{4}{\alpha}\epsilon(\widehat f,\widehat T)}$, concluding the upper-bound~(\ref{eq:OT-bound}).

\section{Numerical details and additional results}\label{apdx:numerics}

Our numerical results involve simulation of three filtering algorithms:  (1)  Ensemble Kalman filter (EnKF)~\cite{evensen2003ensemble,calvello2022ensemble}, presented in Algorithm \ref{alg:enkf}; (2) Sequential importance resampling (SIR)~\cite{doucet09}, presented in Algorithm \ref{alg:sir}; (3) OT particle filter (OTPF),  presented in Algorithm~\ref{alg:otpf}. 

In order to implement the OTPF, we used the ResNet neural network architecture to represent both $f$ and $T$ as they appear in Fig.~\ref{tikz:static_struc}. We used the Adam optimizer with different learning rates for each example to solve the max-min problem~(\ref{eq:emperical_loss})  with inner-loop iteration number always equal to $10$. The number of particles, $N$, and the initial particles, is the same for all three algorithms. The EnKF algorithm is simulated with the additional regularization $\Gamma = \sigma_w^2 I$ where $\sigma_w$ is the noise level in the observation signal, and $I$ is the identity matrix.
The numerical results are produced using the following  two machines:
\begin{enumerate}
    \item MACBOOK M1 Pro with 8‑core CPU, 14‑core GPU, and 16GB unified memory
    \item MAC STUDIO M2 Max with 12‑core CPU, 30‑core GPU, and 64GB unified memory
\end{enumerate}

\begin{algorithm}[h]
\caption{Ensemble Kalman filter (EnKF)}

\begin{algorithmic}
\STATE \textbf{Input:} Initial particles $\{{X}_0^{i}\}_{i=1}^N\sim \pi_0$, observation signal $\{Y_{t}\}_{t=1}^{t_f}$, $\Gamma \succ 0$\\ dynamic model $a(x\mid x')$, observation model $h(y\mid x)$.
\FOR{$t = 1$ to $t_f$}
    \STATE \textbf{Propagation:} 
    \FOR{$i = 1$ to $N$} 
        \STATE $X_{t \mid t-1}^i \sim a(.\mid X_{t-1}^{i})$ 
        \STATE $
            {{Y}}_{t}^{i} \sim h(.\mid X_{t \mid t-1}^{i})$
        \ENDFOR 
                    \STATE \textbf{Conditioning:} 
    \STATE  $ \widehat{{X}}_{t} = \frac{1}{N} \sum_{i=1}^N X_{t \mid t-1}^i$
    \STATE  $\widehat{{Y}}_{t} = \frac{1}{N} \sum_{i=1}^N {{Y}}_{t}^{i}$
    \STATE ${C}^{xy}_{t} = \frac{1}{N} \sum_{i=1}^N (X_{t \mid t-1}^i - \widehat{{X}}_{t}) \otimes ({{Y}}_{t}^{i} - \widehat{{Y}}_{t})$
    \STATE $ {C}^{yy}_{t} = \frac{1}{N} \sum_{i=1}^N ({{Y}}_{t}^{i} - \widehat{{Y}}_{t}) \otimes ({{Y}}_{t}^{i} - \widehat{{Y}}_{t})$
    \STATE $K_{t} = {C}^{xy}_{t} ({C}^{yy}_{t} + \Gamma)^{-1}$

                      \FOR{$i = 1$ to $N$} 
        \STATE ${X}_{t}^{i} = X_{t \mid t-1}^i + K_{t}({Y}_{t} - {{Y}}_{t}^{i})$. 
     \ENDFOR
\ENDFOR
\STATE \textbf{Output:} Particles $\{{X}_t^{i}\}_{i=1}^N$ for $t = 0,\ldots, t_f$.
\end{algorithmic}
\label{alg:enkf}
\end{algorithm}

\begin{algorithm}[h] 
\caption{Sequential importance resampling (SIR)}
\begin{algorithmic}
\STATE \textbf{Input:} Initial particles $\{{X}_0^{i}\}_{i=1}^N\sim \pi_0$, observation signal $\{Y_{t}\}_{t=1}^{t_f}$,\\ dynamic model $a(x\mid x')$, observation model $h(y\mid x)$.
\FOR{$t = 1$ to $t_f$}
    \STATE \textbf{Propagation:} 
    \FOR{$i = 1$ to $N$}
            \STATE$X_{t \mid t-1}^i \sim a(.\mid {X}_{t-1}^{i}).$
        \ENDFOR
    \STATE \textbf{Conditioning:}
    \STATE $w_{t}^{i} = \frac{h(Y_{t}|X_{t \mid t-1}^i)}{\sum_{i=1}^N h(Y_{t}|X_{t \mid t-1}^i)}$
    \STATE $X_{t}^{i} \sim \sum_{i=1}^N w_{t}^{i}\delta_{X_{t \mid t-1}^i}^{i}$
\ENDFOR
\STATE \textbf{Output:} Particles $\{{X}_t^{i}\}_{i=1}^N$ for $t = 0,\ldots, t_f$.
\end{algorithmic}
\label{alg:sir}
\end{algorithm}

\begin{algorithm}[h]
\caption{OT particle filter (OTPF) 
}
\begin{algorithmic}
\STATE \textbf{Input:} Initial particles $\{{X}_0^{i}\}_{i=1}^N\sim \pi_0$, observation signal $\{Y_{t}\}_{t=1}^{t_f}$,\\ dynamic and observation models $a(x\mid x'),h(y\mid x)$\\ batch size $bs$, optimizer and learning rates for $f,T$, inner loop iterations $K^{\text{(in)}}$, outer iteration sequence $K^{\text{(out)}}_t$.
\STATE \textbf{Initialize:} initialize neural net $f,T$ according to the architecture in Fig.~\ref{tikz:static_struc}, and their weights $\theta_f,\theta_T$.
\FOR{$t = 1$ to $t_f$}
\STATE \textbf{Propagation:} 
\FOR{$i = 1$ to $N$} 
        \STATE $X_{t \mid t-1}^i \sim a(.\mid X_{t-1}^{i})$ 
        \STATE $
            {{Y}}_{t}^{i} \sim h(.\mid X_{t \mid t-1}^{i})$
        \ENDFOR 
\STATE \textbf{Conditioning:}
\STATE  Create a random permutation $\{\sigma_i\}_{i=1}^N$ 
\FOR{$k=1$ to $K^{\text{(out)}}_t$}
\STATE sample $bs$ particles from $(X_{t \mid t-1}^i,X_{t \mid t-1}^{\sigma_i},Y_{t}^i)$ 
\FOR{$j=1$ to $K^{\text{(in)}}$}
\STATE  Update $\theta_T$ to minimize \(\frac{1}{bs}\sum_{i=1}^{bs} \big[\frac{1}{2}\|T(  X_{t \mid t-1}^{\sigma_i},Y_{t}^i)- X_{t \mid t-1}^{\sigma_i}\|^2 - f(T(X_{t \mid t-1}^{\sigma_i},Y_{t}^i),Y_{t}^i)\big]\)
\ENDFOR
\STATE Update $\theta_f$ to minimize \( \frac{1}{bs}\sum_{i=1}^{bs} \big[-f( X_{t \mid t-1}^i,Y_{t}^i) + f(T(X_{t \mid t-1}^{\sigma_i},Y_{t}^i),Y_{t}^i)\big]\)
\ENDFOR
 \FOR{$i = 1$ to $N$} 
        \STATE ${{X}}_{t}^{i} = T(X_{t \mid t-1}^i,Y_{t})$. 
     \ENDFOR
\ENDFOR
\STATE \textbf{Output:} Particles $\{{X}_t^{i}\}_{i=1}^N$ for $t = 0,\ldots, t_f$.
\end{algorithmic}
\label{alg:otpf}
\end{algorithm}

\subsection{Computational time}\label{apdx:comp-time}
The raw computational time of the algorithms, across all of the main examples, is presented in Table~\ref{table:running-time}. 
It is observed that the computational time of the OT method is higher than the other two algorithms. 
These results are included for comprehensive understanding without an intensive focus on optimizing computational time, which may be achieved by careful selection of architectures for $f$ and $T$, tuning the algorithm hyper-parameters, and including an offline training stage that provides a warm start for training in the online implementation. This is subject of ongoing research. In our preliminary efforts in this direction, we were able to reduce the computational time of the OT method for the Bimodal dynamic example to 2.9151s  with only 1.3\% loss in accuracy. This is achieved by performing the training for the initial step offline for $10^4$ iterations. The resulting $f$ and $T$ are then used as warm start  for the online implementation of the filter, with training iterations reduced to $8$.

In order to further study the computational efficiency, we numerically evaluated the relationship/trade-off between computational time and filtering accuracy, for the  bimodal dynamic example in section \ref{sec:dynamic-bimodal}, for the OT and SIR methods. In the implementation of the OT method, we start with the outer-loop iteration number $K_{(1)}^{(out)}= 2^{12}=4096$, as it appears in algorithm \ref{alg:otpf}, and decrease it according to the rule $K_{(t+1)}^{(out)} = K_{(t)}^{(out)}/2$ until we reach a predefined final iteration number $K_s$. 

The result is presented in Fig.~\ref{fig:computaional_time}. The filtering accuracy is evaluated in terms of the MMD distance with respect to the exact posterior distribution. Panel (a) shows the relationship between the MMD distance and computational time, as the problem dimension $n$ increases.  Panel (b) shows the same relationship, but as the number of particles $N$ change.  The result for the OT method is presented for three different values of the final iteration number $K_s$.  
The result of panel (a) shows that, although the absolute computational time of OT is larger than SIR, it scales better with the problem dimension. The results of panel (b) shows that the accuracy of the OT can be improved by increasing the number of particles, without significant increase in the computational time, while the SIR method should use a much larger number of particles to reach the same accuracy.

\begin{table}[h]
\caption{Computational time in seconds for EnKF, SIR and OTPF.}
\label{table:running-time}
\vskip 0.05in
\begin{center}
\begin{small}
\begin{sc}
\begin{tabular}{lcccr}
\toprule
Example & EnKF & SIR & OTPF \\
\midrule
Bimodel dynamic example    & 0.0095 & 0.0108  & 34.4555 \\
Lorenz 63 example (200 steps) & 0.0542 & 0.0605 & 50.9799 \\
Lorenz 96 example (200 steps)    & 0.3029 & 0.3726 & 79.7705 \\
Static image in-painting on MNIST  & 0.4577 & 0.3880 & 529.8863 \\
Dynamic image in-painting on MNIST & 0.5373 & 0.3128 & 728.8051\\
\bottomrule
\end{tabular}
\end{sc}
\end{small}
\end{center}
\vskip -0.1in
\end{table}

\begin{figure*}[h]
    \centering
\begin{subfigure}{0.45\textwidth}
\label{fig:computaional_time_vs_n}
    \centering
    \includegraphics[width=1.0\textwidth,trim={22 0 70 60},clip]{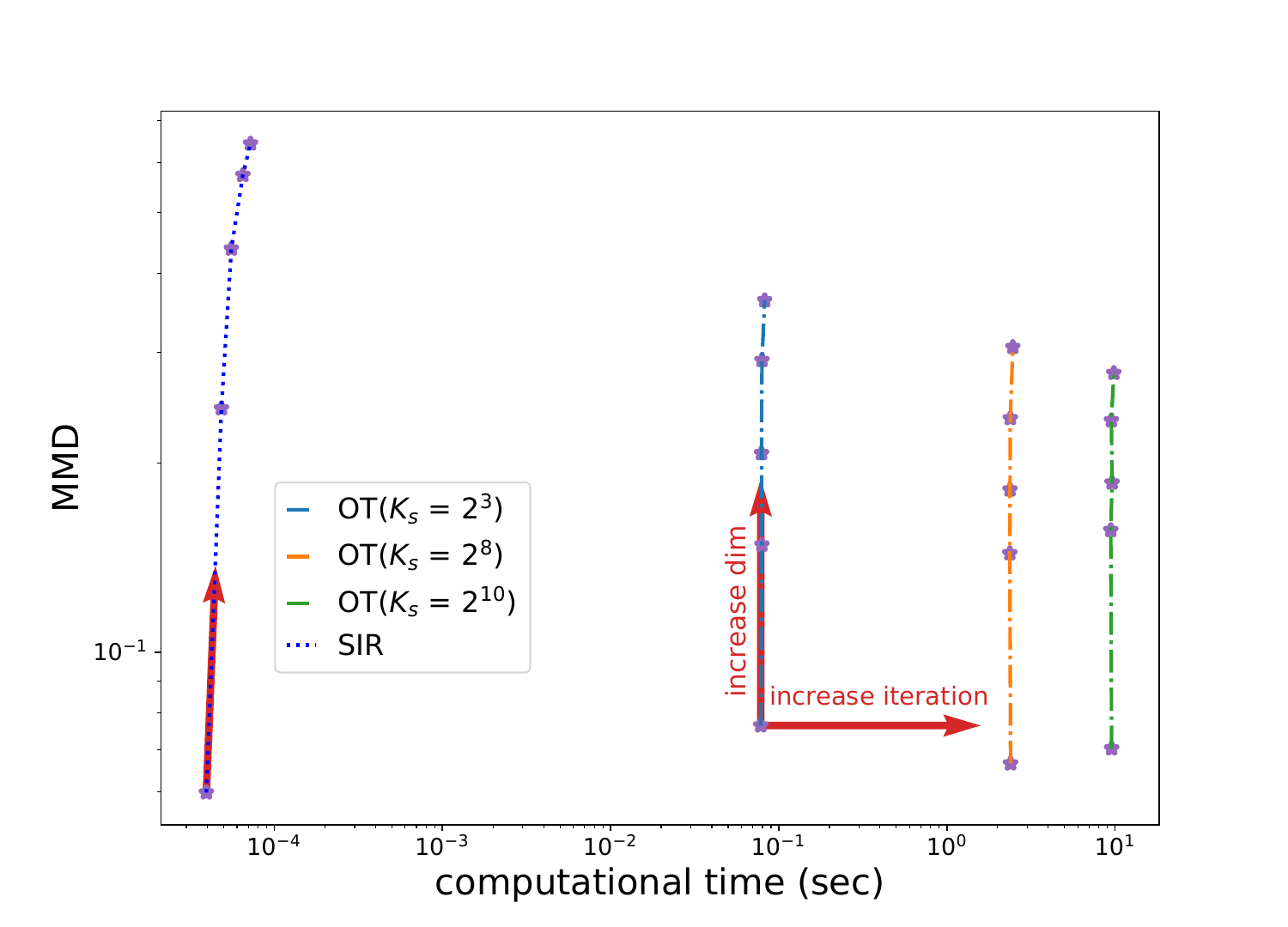}
    \caption{MMD vs computational time as dimension $n$ varies.}
\end{subfigure}
\hspace{1.0em}
\begin{subfigure}{0.45\textwidth}
\label{fig:computaional_time_vs_N}
    \centering
     \includegraphics[width=1.0\textwidth,trim={22 0 70 60},clip]{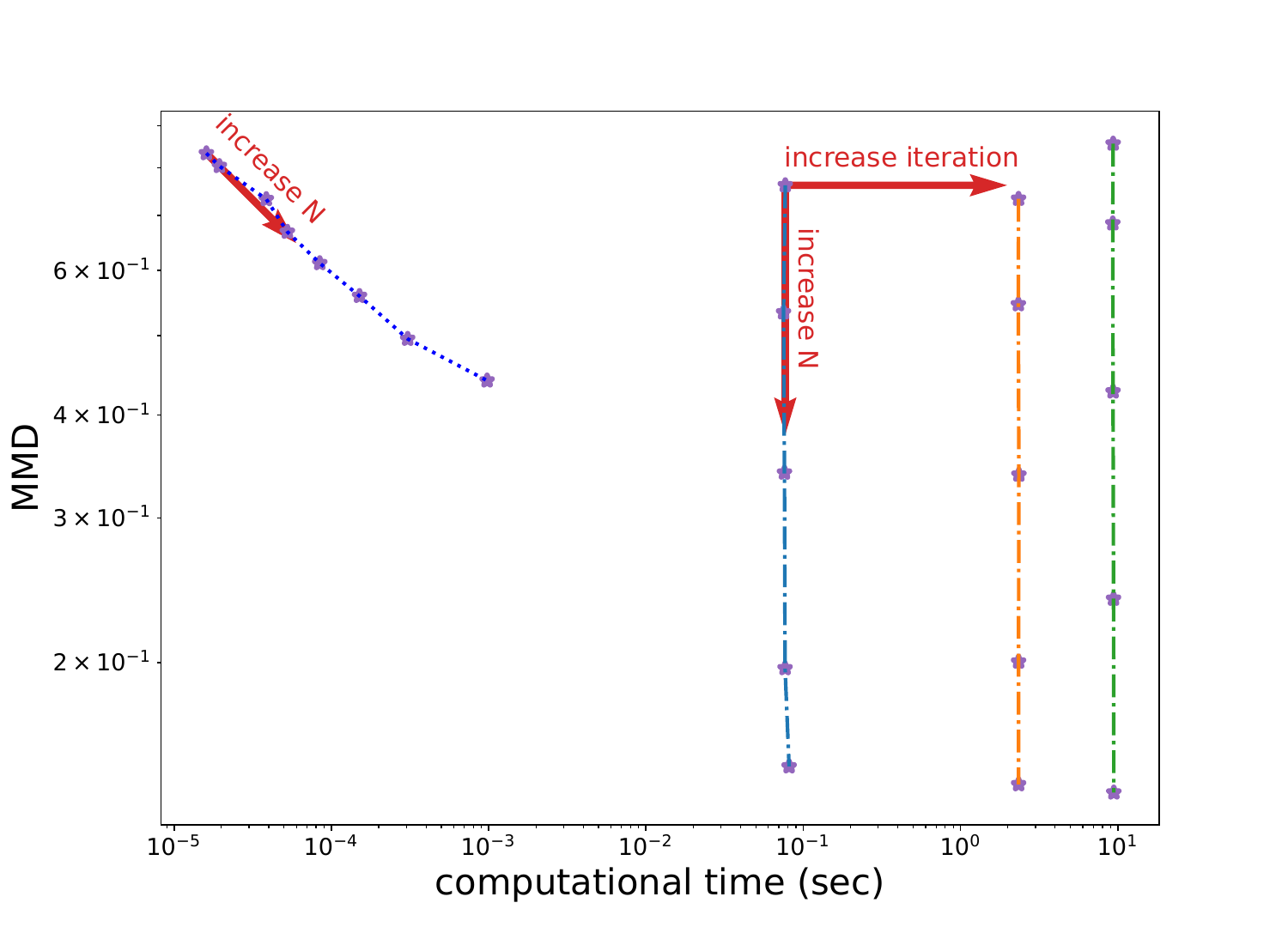} 
    \caption{MMD vs computational time as  particle size $N$ varies.}
\end{subfigure}
     \caption{Evaluation of the computational time of the OT and SIR algorithms, for the example in Sec.~\ref{sec:dynamic-bimodal}. The left panel shows the relationship between the MMD distance  and computational time, as the problem dimension $n$ increases.  The computational time of the OT approach is evaluated for three different 
 values of final number of iteration $K_s$, as explained in Sec.~\ref{apdx:comp-time}. The right panel shows the same relationship, but as the number of particles $N$ change. }
    \label{fig:computaional_time}
\end{figure*}

\subsection{Additional static example}\label{sec:app_static}
Consider the following observation model:
\begin{align}\label{example:bimodal}
    Y = X + \sigma_w W, \qquad X \sim \frac{1}{2} N(-1,\sigma^2 I_2) + \frac{1}{2}N(+1,\sigma^2 I_2)
\end{align}
where $W\sim N(0,I_2)$ and $\sigma_w=0.4$. In this example, the prior distribution is bimodal, while the observation model is linear. The numerical result for a varying number of particles is presented in Fig.~\ref{fig:bimodal}, where it is observed that the OT algorithm is also capable of pushing a bimodal to an unimodal distribution. At the same time, EnKF underperforms compared to the other two methods due to its Gaussian approximation of the prior.

In this example, and also the examples in Sec.~\ref{sec:Static_Example}, we did not use the EnKF layer in modeling $T$. We used a ResNet with a single-block of size $32$ and ReLU-activation function. The learning rate of the Adam optimizer is set to $10^{-3}$, with the total number of iterations $2\times 10^4$, and batch-size is $128$.

\begin{figure}[h]
    \centering
    \includegraphics[width=0.7\textwidth,trim={70 20 90 40},clip]{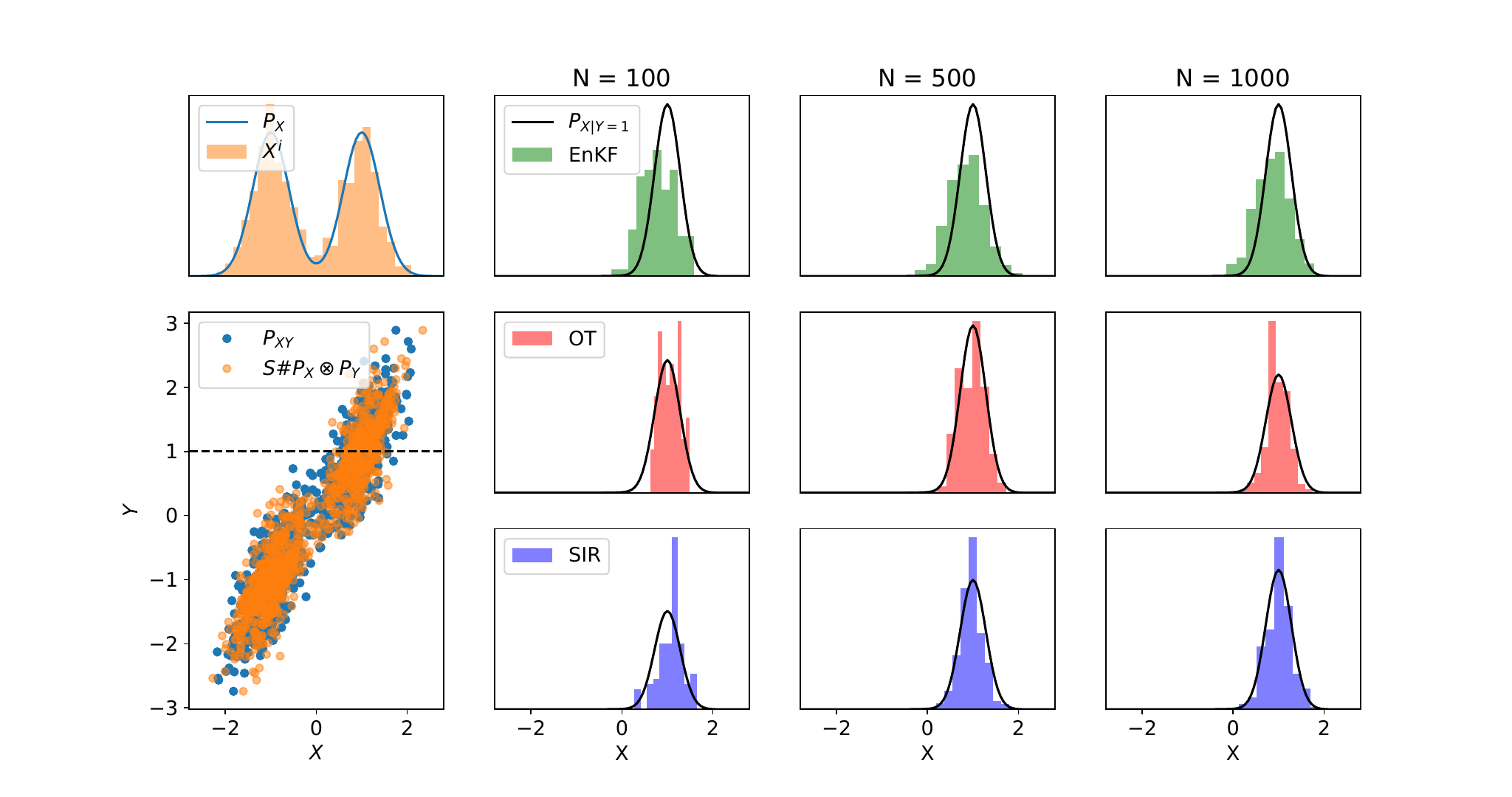}
    \caption{Numerical results for the static example in equation~(\ref{example:bimodal}). Top-left: Samples $\{X^i\}_{i=1}^N$ from the prior $P_X$; bottom-left: samples $\{(X^i,Y^i)\}_{i=1}^N$ from the joint distribution $P_{XY}$ in comparison with the transported samples $\{(T(X^{\sigma_i},Y^i),Y^i)\}_{i=1}^N$; rest of the panels: transported samples for $Y=1$ for different values of $N$ and three different algorithms.}
    \label{fig:bimodal}
\end{figure}

\subsection{Additional dynamic example}\label{sec:app_dynamic}
Consider the  following model:
\begin{subequations}\label{eq:model-examples}
\begin{align}
    X_{t} &= (1-\alpha) X_{t-1} + 2\lambda V_t,\quad X_0 \sim \mathcal{N}(0,I_n),\\
    Y_t &= h(X_t) + \lambda W_t,
\end{align}
\end{subequations}
where $\{V_t,W_t\}_{t=1}^\infty$ are i.i.d sequences of standard Gaussian random variables, $\alpha=0.1$ and $\lambda=\sqrt{0.1}$.  This is the generalization of the example~\ref{eq:model-example} where now, instead of  $h(x)=x\odot x$, the observation function is selected to be linear $h(x)=x$ or cubic $h(x)=x \odot x\odot x$. 

We parameterize $f$ and $T$ as before, except that we included the EnKF layer in $T$ and used two ResNet blocks. The learning rate for the optimizers of $f$ and $T$ are $10^{-3}$ and $2\times 10^{-3}$, respectively.  The total number of iterations is $1024$, which is divided by $2$ after each time step (of the filtering problem) until it reaches $64$. The batch-size is $64$ and the number of particles $N=1000$. 

We quantify the performance of the algorithms by computing the maximum mean discrepancy (MMD) distance between the exact posterior distribution and the approximated distribution formed by the particles. In order to give an accurate approximation of the true posterior, we use the SIR method with $N=10^5$ particles. The MMD distance between two empirical distributions $\mu = \frac{1}{N_1}\sum_{i=1}^{N_1} \delta_{U^i}$ and $\nu = \frac{1}{N_2}\sum_{i=1}^{N_2} \delta_{V^i}$ is 
\begin{equation}\label{eq:mmd}
\text{MMD}_{\text{emp}}(\mu, \nu) = \frac{1}{N_1^2} \sum_{i=1}^{N_1} \sum_{j=1}^{N_1} k(U^i, U^j) + \frac{1}{N_2^2} \sum_{i=1}^{N_2} \sum_{j=1}^{N_2} k(V^i, V^j) - \frac{2}{N_1N_2} \sum_{i=1}^{N_1} \sum_{j=1}^{N_2} k(U^i, V^j)
\end{equation}  
where $k(\cdot, \cdot)$ denotes the RBF kernel. The kernel bandwidth is selected individually for each example.

The numerical results for the linear and cubic observation models are depicted in Fig.~\ref{fig:dynamic_example_x} and~\ref{fig:dynamic_example_xxx}, respectively.
EnKF performs better than the other two algorithms in the linear Gaussian setting, while it does not perform as well with the cubic observation model. It is noted that the OT algorithm's performance can be enhanced through fine-tuning and additional training iterations.

\begin{figure*}[t]
    \centering
\begin{subfigure}{0.35\textwidth}
    \centering
    \includegraphics[width=1.0\textwidth,trim={30 0 70 60},clip]{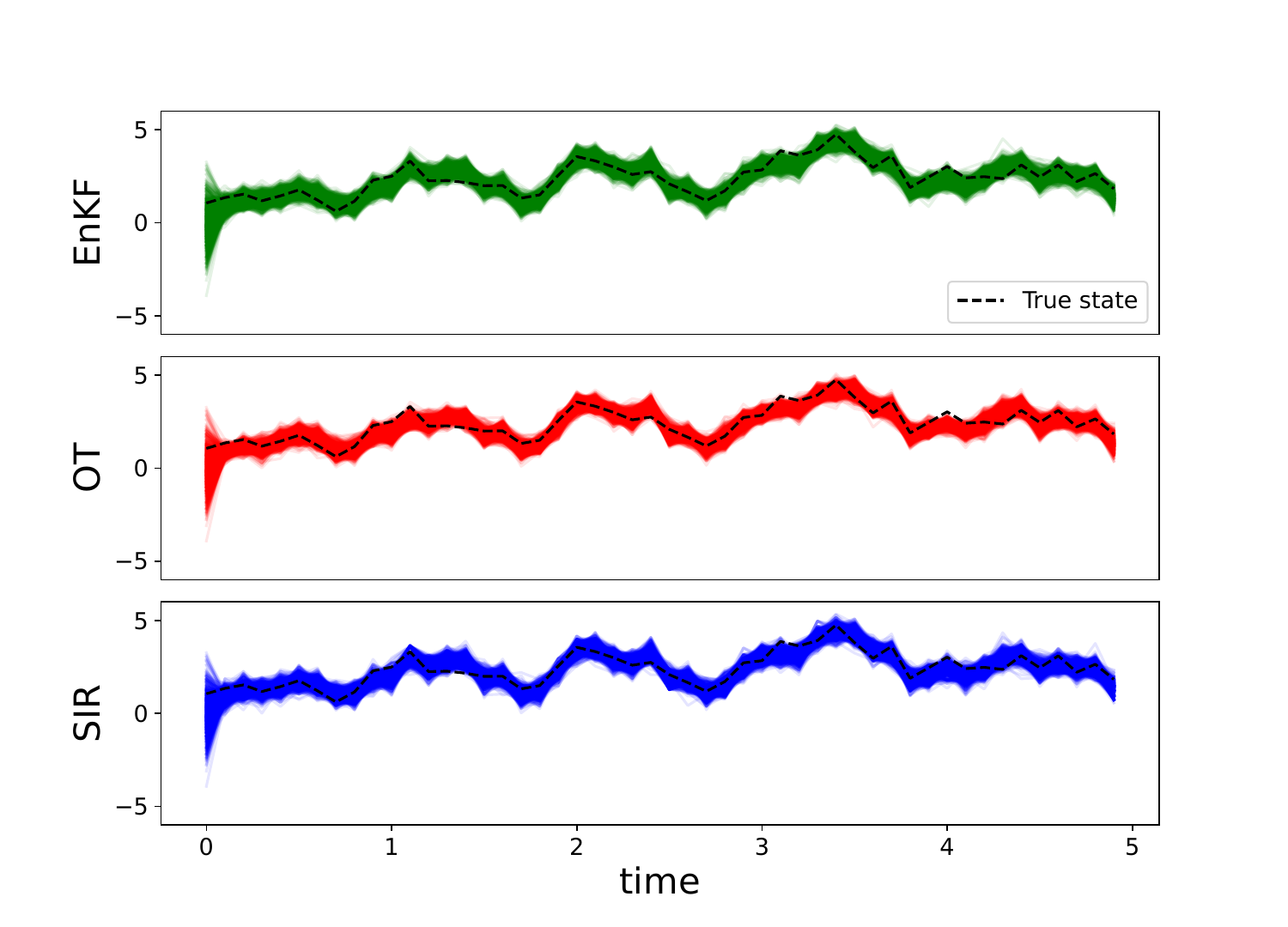}
    \caption{Particles trajectory.}
\end{subfigure}
\hspace{1.0em}
\begin{subfigure}{0.35\textwidth}
    \centering
     \includegraphics[width=1.0\textwidth,trim={30 0 70 60},clip]{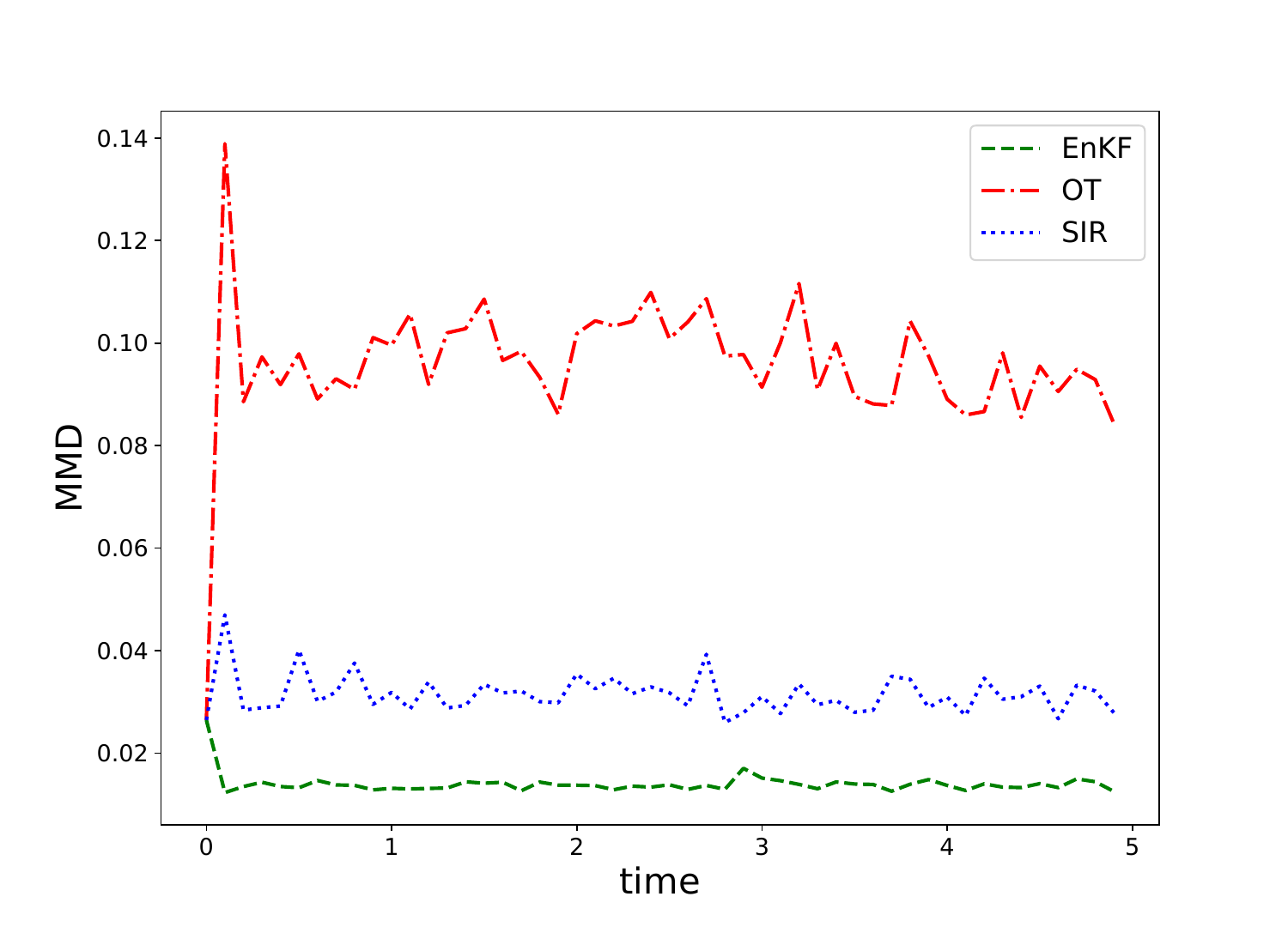} 
    \caption{MMD vs time.}
\end{subfigure}
     \caption{Numerical results for the dynamic example~(\ref{eq:model-examples}) where $h(X_t)=X_t$. The left panel shows the trajectory of the particles $\{X^1_t,\ldots,X^N_t\}$ along with the trajectory of the true state $X_t$ for EnKF, OT, and SIR algorithms, respectively. The second panel shows the MMD distance with respect to the exact conditional distribution.}
    \label{fig:dynamic_example_x}
\end{figure*}

\begin{figure*}[h]
    \centering
\begin{subfigure}{0.35\textwidth}
    \centering
    \includegraphics[width=1.0\textwidth,trim={30 0 70 60},clip]{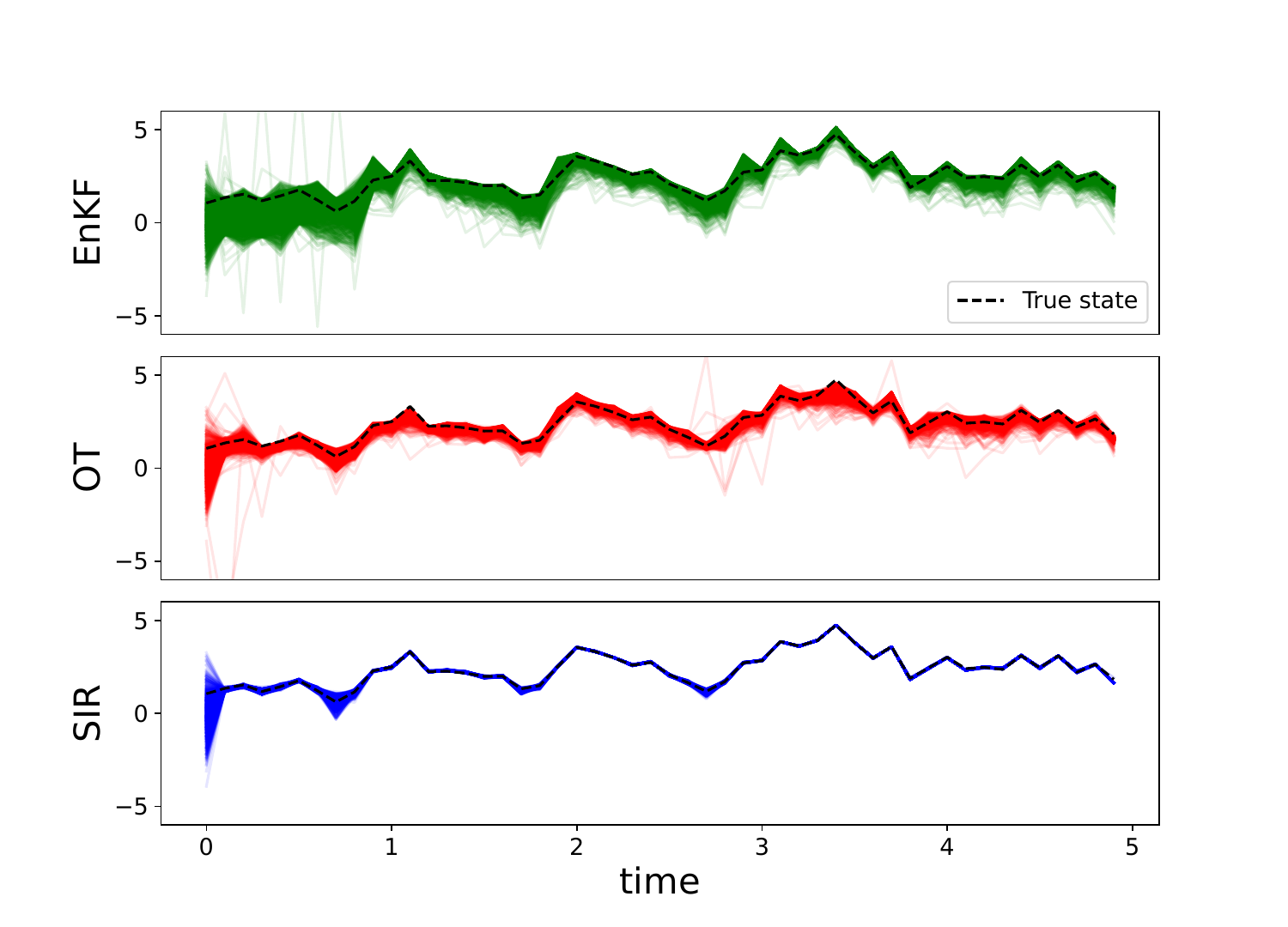}
    \caption{Particles trajectory.}
\end{subfigure}
\hspace{1.0em}
\begin{subfigure}{0.35\textwidth}
    \centering
     \includegraphics[width=1.0\textwidth,trim={30 0 70 60},clip]{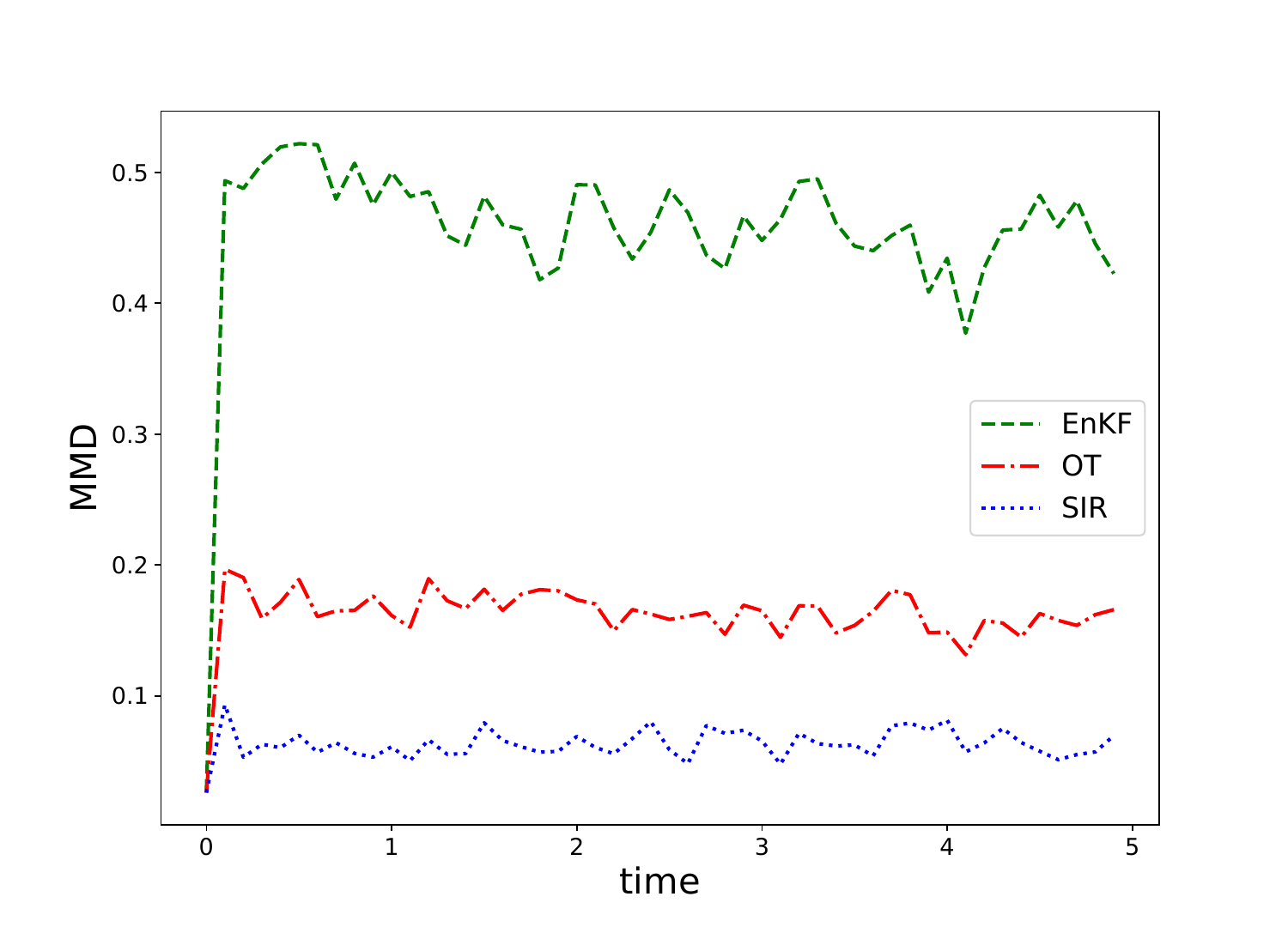} 
    \caption{MMD vs time.}
\end{subfigure}
     \caption{Numerical results for the dynamic example~(\ref{eq:model-examples}) where $h(X_t)=X_t\odot X_t\odot X_t$. The left panel shows the trajectory of the particles $\{X^1_t,\ldots,X^N_t\}$ along with the trajectory of the true state $X_t$ for EnKF, OT, and SIR algorithms, respectively. The second panel shows the MMD distance with respect to the exact conditional distribution.}
    \label{fig:dynamic_example_xxx}
\end{figure*}
\subsection{The details of the Lorenz 63 model and additional results}\label{sec:app_L63}

We consider the following Lorenz 63 model:

\begin{equation}\label{eq:L63}
\begin{split}
\begin{bmatrix}
    \dot{X}(1) \\ \dot{X}(2) \\ \dot{X}(3)
\end{bmatrix}
&= 
\begin{bmatrix}
    \sigma (X(2) - X(1)) \\
    X(1) (\rho - X(3)) - X(2) \\
    X(1)X(2) - \beta X(3)   
\end{bmatrix},\quad X_0 \sim \mathcal{N}(\mu_0,\sigma_0^2I_3),
\\
Y_t &= \begin{bmatrix}
    X_t(1)\\ X_t(3)
\end{bmatrix} 
+ \sigma_{obs}W_t,
\end{split}
\end{equation}

where \([X(1),X(2),X(3)]^\top\) are the variables representing the hidden states of the system, and \(\sigma\), \(\rho\), and \(\beta\) are the model parameters. We choose \(\sigma=10\), \(\rho=28\), \(\beta=8/3\), $\mu_0 = [25,25,25]^\top$, and $\sigma_{0}^2=10$. The observed noise $W$ is a $2$-dimensional standard Gaussian random variable with $\sigma_{obs}^2=10$.

We simulated the Lorenz 63 model without any noise in the dynamics; however, we included an artificial noise $\mathcal{N}(0,\sigma_{\textit{added}}^2 I_3),~\sigma_{\textit{added}}^2=1$ to the dynamic update step of the algorithms. In order to better study the difference between the three filters, we initialized the particles from a Gaussian distribution with mean $[0,0,0]$, while the true state is initialized with a Gaussian with mean $[25,25,25]$.

The neural network for $T$ is a two-block residual network of size $64$ with or without EnKF layer. The learning rate for the optimizers of $f$ and $T$ are $5\times 10^{-2}$ and $10^{-2}$, respectively.  The total number of iterations is $1024$, which is divided by $2$ after each time step (of the filtering problem) until it reaches $64$. Each iteration involves a random selection of a batch of samples of size $64$ from the total of $N=1000$ particles.

In the left panel in Fig.~\ref{fig:states_and_mse_L63}, we present the trajectories of the true states and particles for all three algorithms.  
In the right panel of Fig.~\ref{fig:states_and_mse_L63}, we show the mean-squared-error (MSE) of estimating the state by taking the average over $10$ independent simulations.
It is observed that both OT settings, with and without the EnKF layer, are performing better than EnKF and SIR overall. The EnKF layer helps with the performance at the initial stage; however, it requires careful tuning for the learning rate.

\begin{figure}[t!]
    \centering
    \includegraphics[width=0.7\textwidth,trim={50 20 70 20},clip]{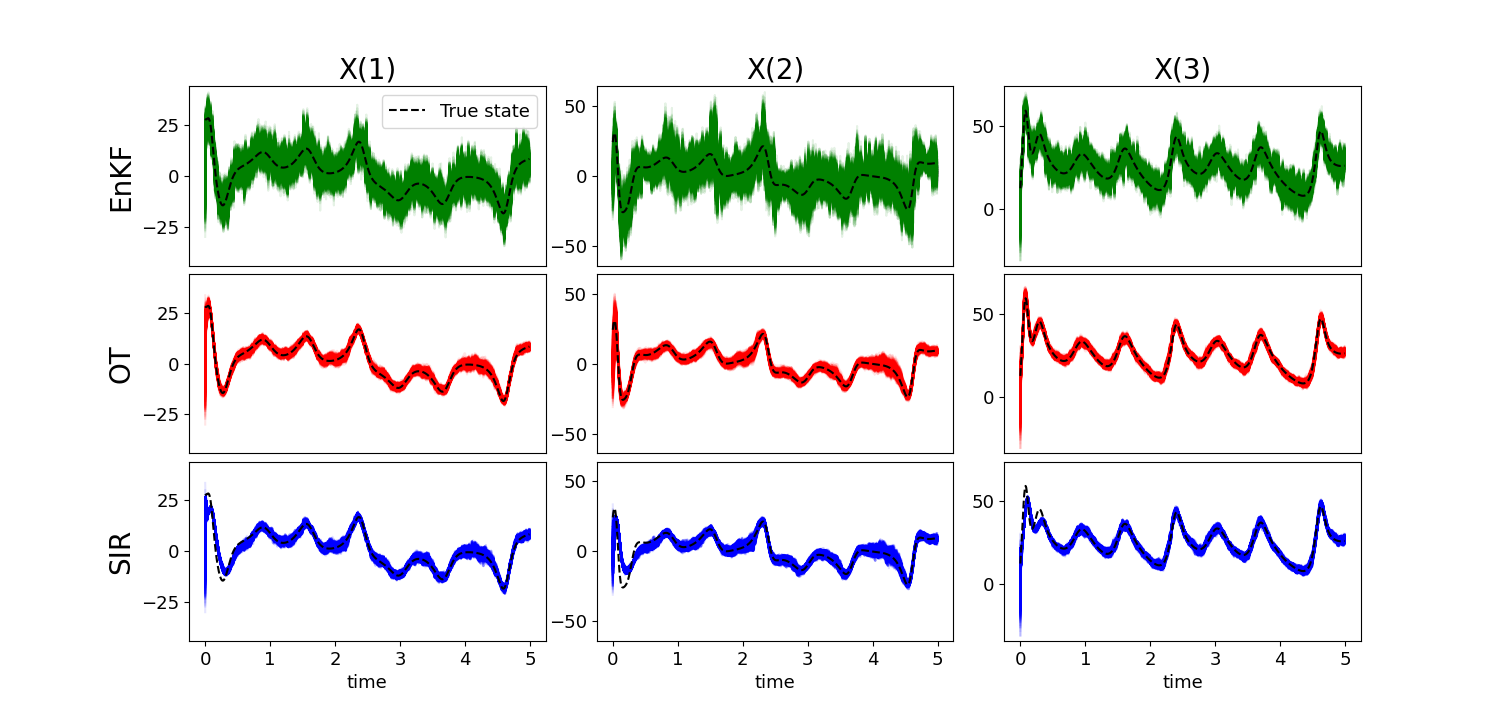}
    \caption{Numerical results for the Lorenz 63 example~(\ref{eq:L63}). The figure shows the trajectory of the particles $\{X^i\}$ along with the trajectory of the true state.Each column represents one state, and each row represents one algorithm. }
    \label{fig:states_and_mse_L63}
\end{figure}

\subsection{The Lorenz 96 model}\label{sec:app_L96}

Consider the following Lorenz-96 model:
\begin{equation}\label{eq:L96}
\begin{split}
\dot{X}(k) &= (X(k+1)-X(k-2))X(k-1)-X(k)+F + \sigma V,\quad \text{for}\quad k=1,\ldots,n \\
Y_t &= \begin{bmatrix}
    1 & 0 & 0 & 0 & 0 & 0 & 0 & 0 & 0 \\
    0 & 1 & 0 & 0 & 0 & 0 & 0 & 0 & 0 \\
    0 & 0 & 0 & 1 & 0 & 0 & 0 & 0 & 0 \\
    0 & 0 & 0 & 0 & 1 & 0 & 0 & 0 & 0 \\
    0 & 0 & 0 & 0 & 0 & 0 & 1 & 0 & 0 \\
    0 & 0 & 0 & 0 & 0 & 0 & 0 & 1 & 0 \\
\end{bmatrix} X_t + \sigma W_t
\end{split}
\end{equation}
for $n=9$ where $
X_0 \sim \mathcal{N}(\mu_0,\sigma_0^2I_n)$ and we choose the convention that $X(-1)=X(n-1)$, $X(0) = X(n)$, and $X(n+1)=X(1)$, and $F=2$ is a forcing constant. We choose the model parameters $\mu_0 = 25\cdot \mathbf{1}_n$, and $\sigma_{0}^2=10^2$. The observed noise $W$ is a $n$-dimensional standard Gaussian random variable with variance equal to $1$. 

For this experiment, we represent $T$ as a two-block residual network of size $32$ with EnKF layer. The learning rate for the optimizers of $f$ and $T$ are $10^{-3}$ and $10^{-1}$, respectively.  The total number of iterations is $1024$, which is divided by $2$ after each time step (of the filtering problem) until it reaches $64$. The batch-size is $128$, and the total of particles $N=1000$.

The numerical results are shown in Fig.~\ref{fig:states_L96}. The columns in panel (a)  represent unobserved states, and the rows represent the results for each algorithm.  
Panel (b) shows the MSE in estimating the state, averaged over $10$ independent simulations. The results indicate that the OT algorithm performs as well as the EnKF algorithm, which appears to perform well for this problem. 

\begin{figure}[ht!]
    \centering
\begin{subfigure}{0.5\textwidth}
    \centering
    \includegraphics[width=1\textwidth,trim={50 20 65 20},clip]{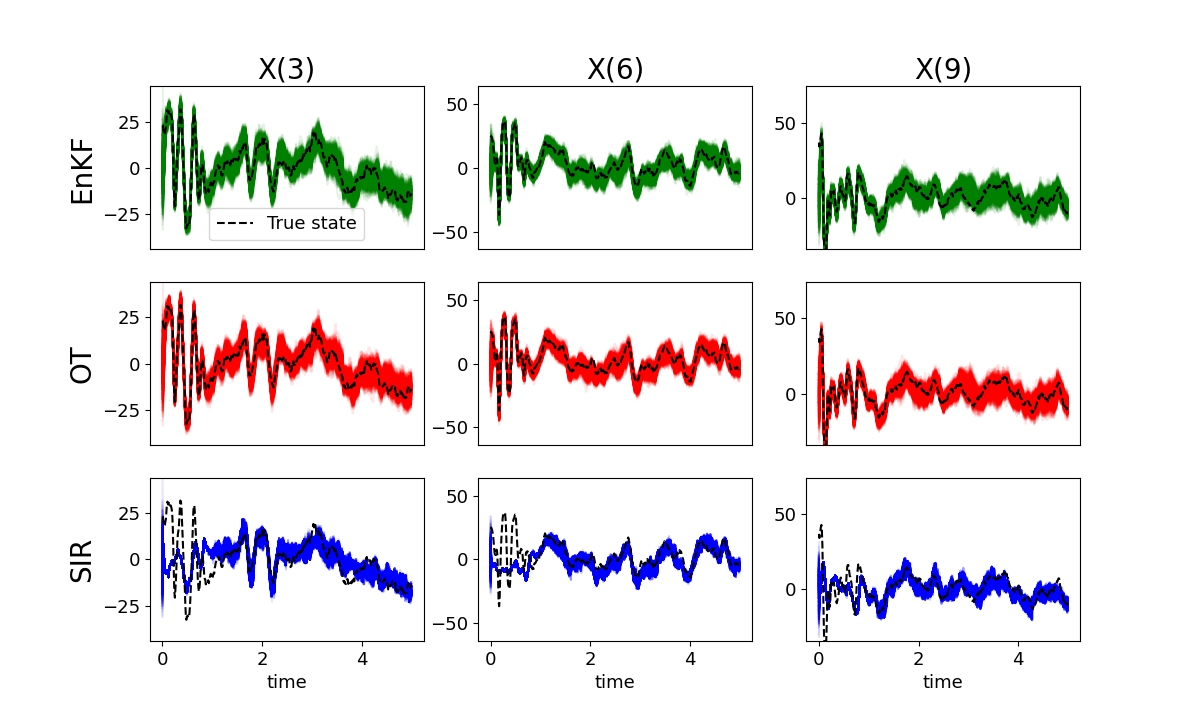}
    \caption{Particles trajectory.}
\end{subfigure}
\hspace{0.1em}
\begin{subfigure}{0.42\textwidth}
    \centering
     \includegraphics[width=1\textwidth,trim={20 20 70 20},clip]{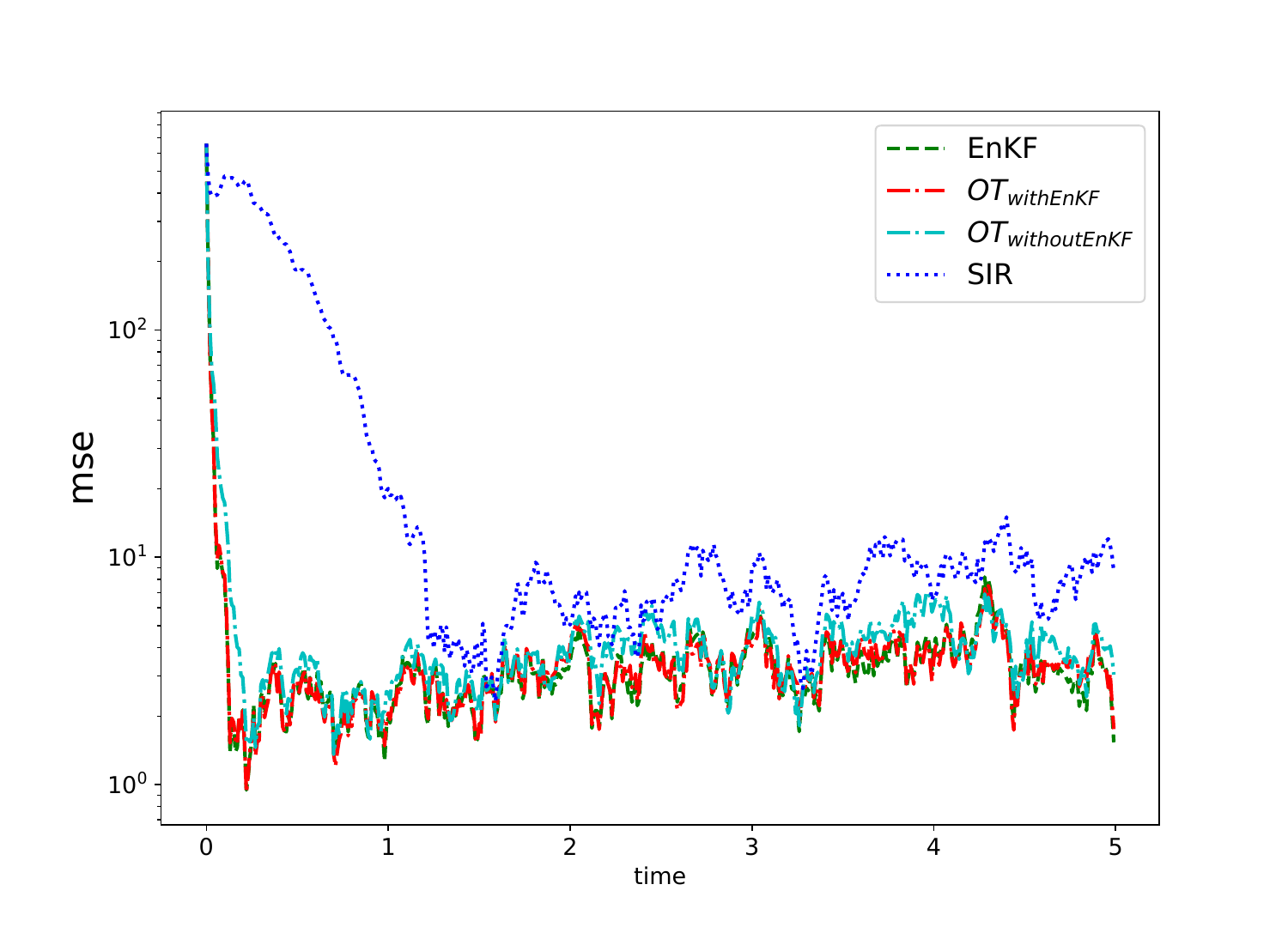}
    \caption{MSE vs time.}
\end{subfigure}

    \caption{Numerical results for the Lorenz 96 example~(\ref{eq:L96}). (a) The trajectory of the particles $\{X^i\}$ along with the trajectory of the true state; each column represents one of the unobserved state, and each row represents one algorithm; (b) Comparison of the MSE in estimating the state as a function of time.}
    \label{fig:states_L96}
\end{figure}

\subsection{Additional details and results for static image in-painting on MNIST}\label{sec:app_static_mnist}

In this example, we consider the problem of computing conditional distributions on the $100$-dimensional latent space of generative adversarial network (GAN) trained to represent the MNIST digits~\cite{goodfellow2014generative}. In particular, denoting the generator by $G : \mathbb{R}^{100}\rightarrow \mathbb{R}^{28\times 28}$, we consider the model:
\begin{align*}
    Y_t &= h(G(X),c_t) + \sigma W_t,\quad X\sim N(0,I_{100}),
\end{align*}
where the observation function $(z,c) \in \mathbb{R}^{28\times 28} \times \mathbb R^2 \mapsto h(z,c)\in \mathbb{R}^{r\times r}$ is defined as the $r\times r$ window of pixels $z[c(1):c(1)+r,c(2):c(2)+r]$. The coordinates of the corner $c_t$ move from left to right and top to bottom scanning a partial portion of the image called the {\it observed part}. In order to make the problem more challenging, we fed a noisy measurement of the corner location to the algorithms by adding a uniform random integer between $-2$ and $2$ for each axis. The observational noise set to be $\sigma=10^{-1}$, with every component of $W_t\sim N(0,1)$. While the true image does not change, we included artificial Gaussian noise $N(0,\sigma^2 I_{100})$ to the particles to avoid particle collapse.

For this example, we used the publicly available codes for a GAN model on MNIST~\footnote{\url{https://github.com/lyeoni/pytorch-mnist-GAN/tree/master}} and a classifier achieving an accuracy exceeding $98\%$ on the MNIST test dataset~\footnote{\url{https://nextjournal.com/gkoehler/pytorch-mnist}}.

The neural network for $T$ is a residual network with size $320$ without the EnKF layer. The learning rate for both $f$ and $T$ is $10^{-3}$ with $2^{12}=4096$ iterations. Batch-size is $64$ and total number of particles $N=1000$.

Additional numerical results for this example are presented in Fig.~\ref{fig:mnist_static_EnKF_and_SIR_particles_example2}. The top row shows the total observed part of the image up to that time step, and the following 16 rows show the images generated from the particles that approximate the conditional distribution. 
\begin{figure}[h]
    \centering
    \begin{subfigure}{0.485\textwidth}
    \centering
    \includegraphics[width=1\textwidth,trim={135 50 105 45},clip]{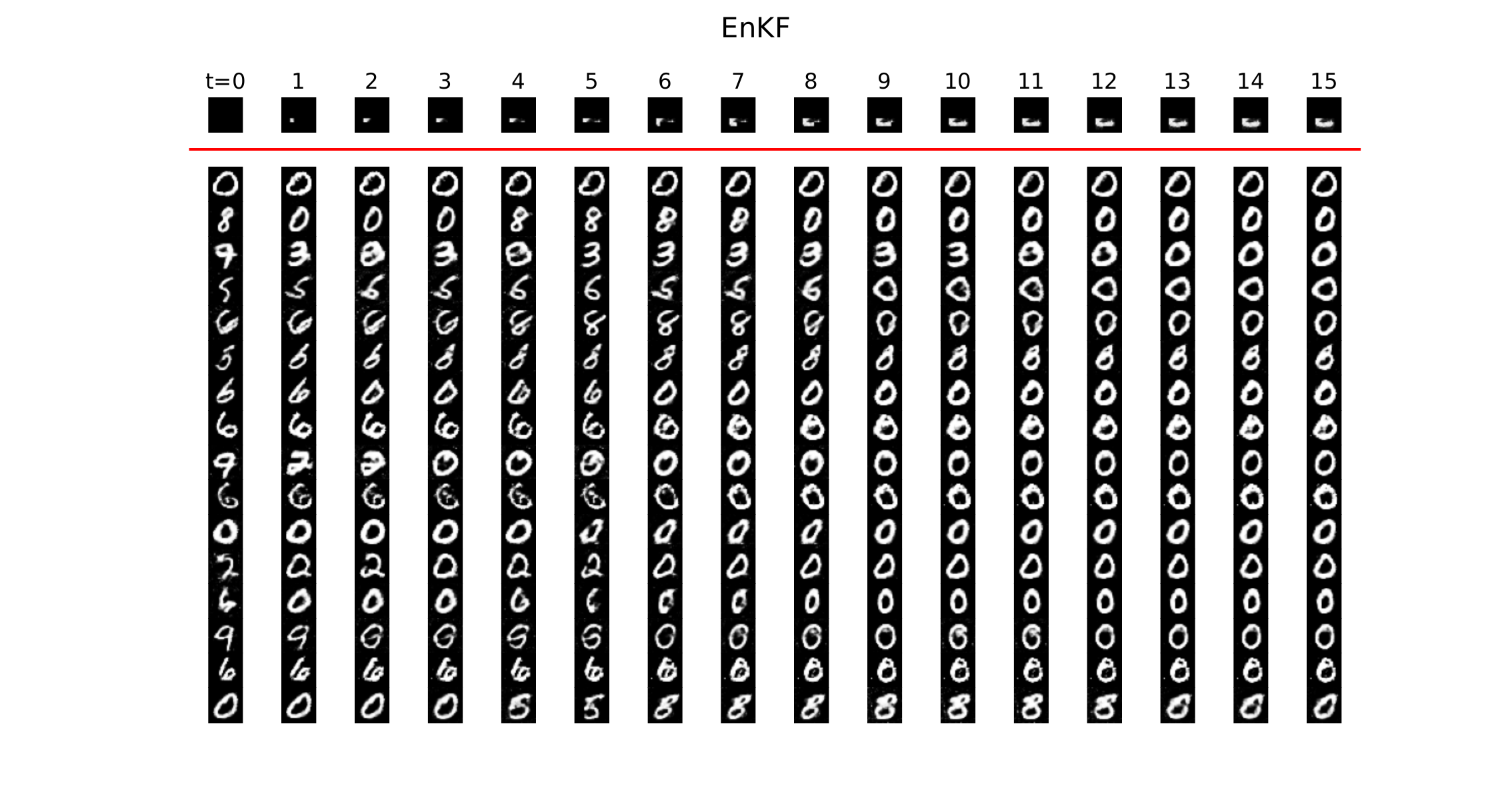}
    \caption{EnKF particles.}
\end{subfigure}
\hspace{0.8em}
\begin{subfigure}{0.485\textwidth}
    \centering
     \includegraphics[width=1\textwidth,trim={135 50 105 45},clip]{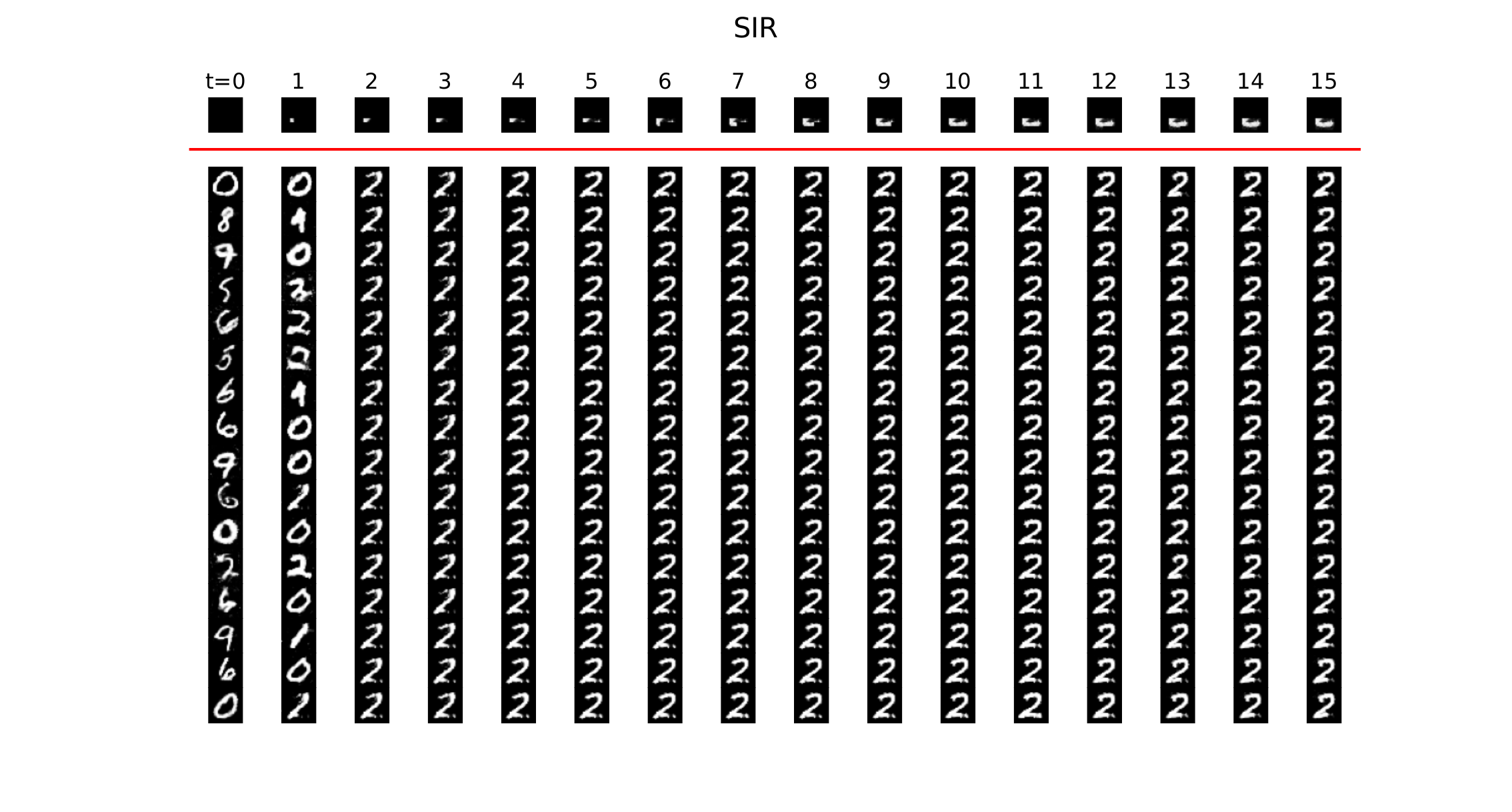}
    \caption{SIR particles.}
\end{subfigure}

    \caption{Additional numerical results for the static image in-painting on MNIST. The first row shows the cumulative total observations up to each time step. The subsequent rows under the red line show 16 particles from the EnKF and SIR methods, respectively.}
    \label{fig:mnist_static_EnKF_and_SIR_particles_example2}
\end{figure}

\subsection{Additional details and results for dynamic image in-painting on MNIST}\label{sec:app_dynamic_mnist}
We extended the previous static example by  adding a dynamic update for the latent variable $X$ as follows:
\begin{align*}
    X_{t+1} &= (1-\alpha)X_{t} + \sigma V_t,\\
     Y_{t+1} &= h(G(X_{t+1}),c_{t+1})+ \sigma_w W_{t+1},
\end{align*}
where $V_t, W_t$ are standard Gaussian with the appropriate dimension, $\sigma = \sqrt{2\alpha-\alpha^2}$, $\sigma_w=10^{-1}$, and $\alpha=0.2$. The observation function $h_t(G(X_t),c_t) = G(X_t)[28-r:28,c_t:c_t+r]$, $c_t\sim \textit{Unif}_\textit{integer}(1,28-r)$, and $r=12$.
The OT algorithm parameters are similar to the static case.

 We present the trajectory of the particles in Fig.~\ref{fig:mnist_dynamic_dist_particles_example2} (a),(b), and (c). The top row shows the true image, and the second row shows the observation at each time step. The top row shows the total observed part of the image up to that timestep, and the following 16 rows show the images generated from the particles that approximate the conditional distribution. In Fig.~\ref{fig:mnist_dynamic_dist_particles_example2}(d), we used an accurate MNIST classifier to represent the histogram of the digits generated from the particles of each algorithm.

\begin{figure}[h]
    \centering
    \begin{subfigure}{0.485\textwidth}
    \centering
    \includegraphics[width=1\textwidth,trim={105 50 90 50},clip]{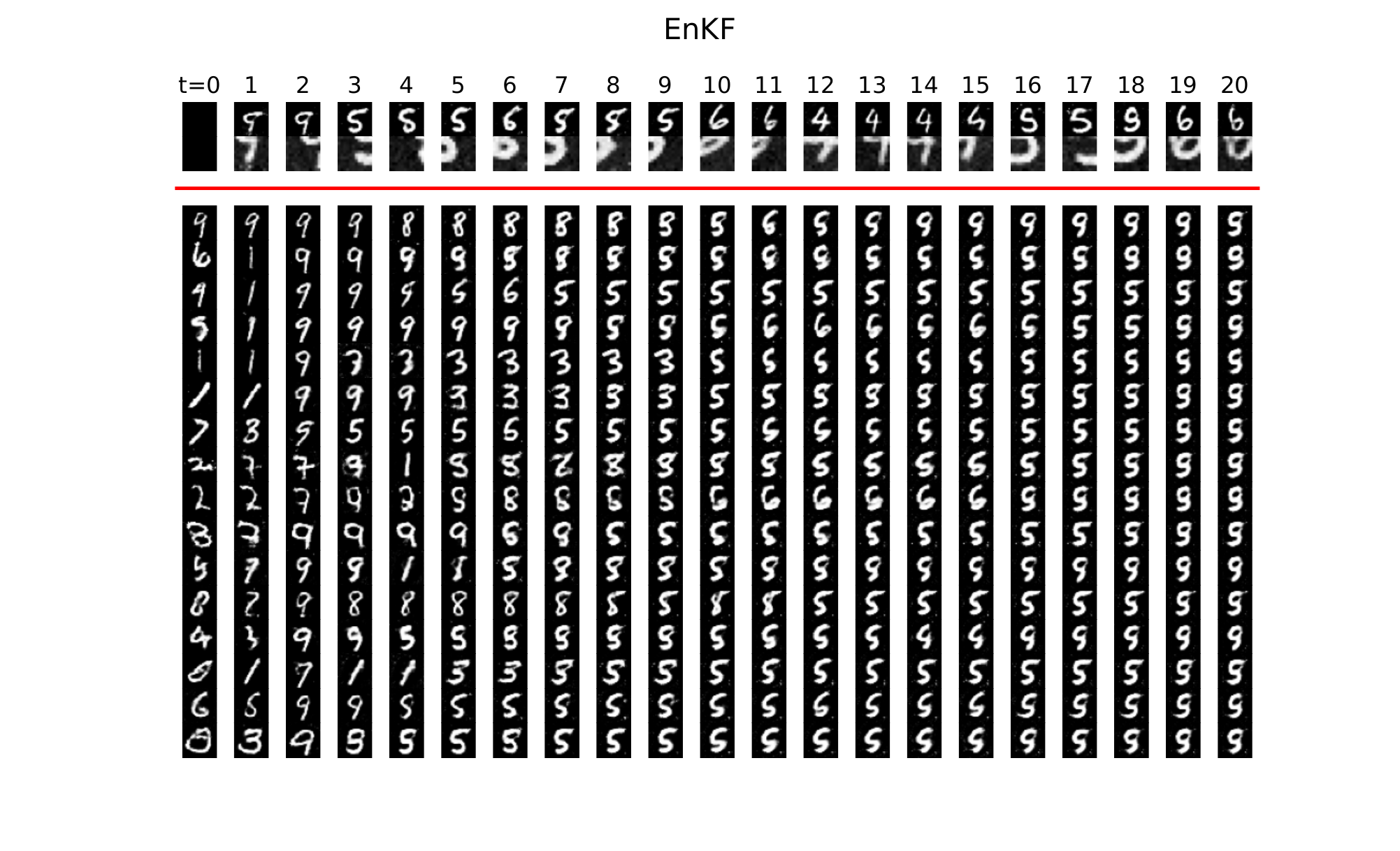}
    \caption{EnKF particles.}
    \vspace*{2mm}
\end{subfigure}
\hspace{0.8em}
\begin{subfigure}{0.485\textwidth}
    \centering
    \includegraphics[width=1\textwidth,trim={105 50 90 50},clip]{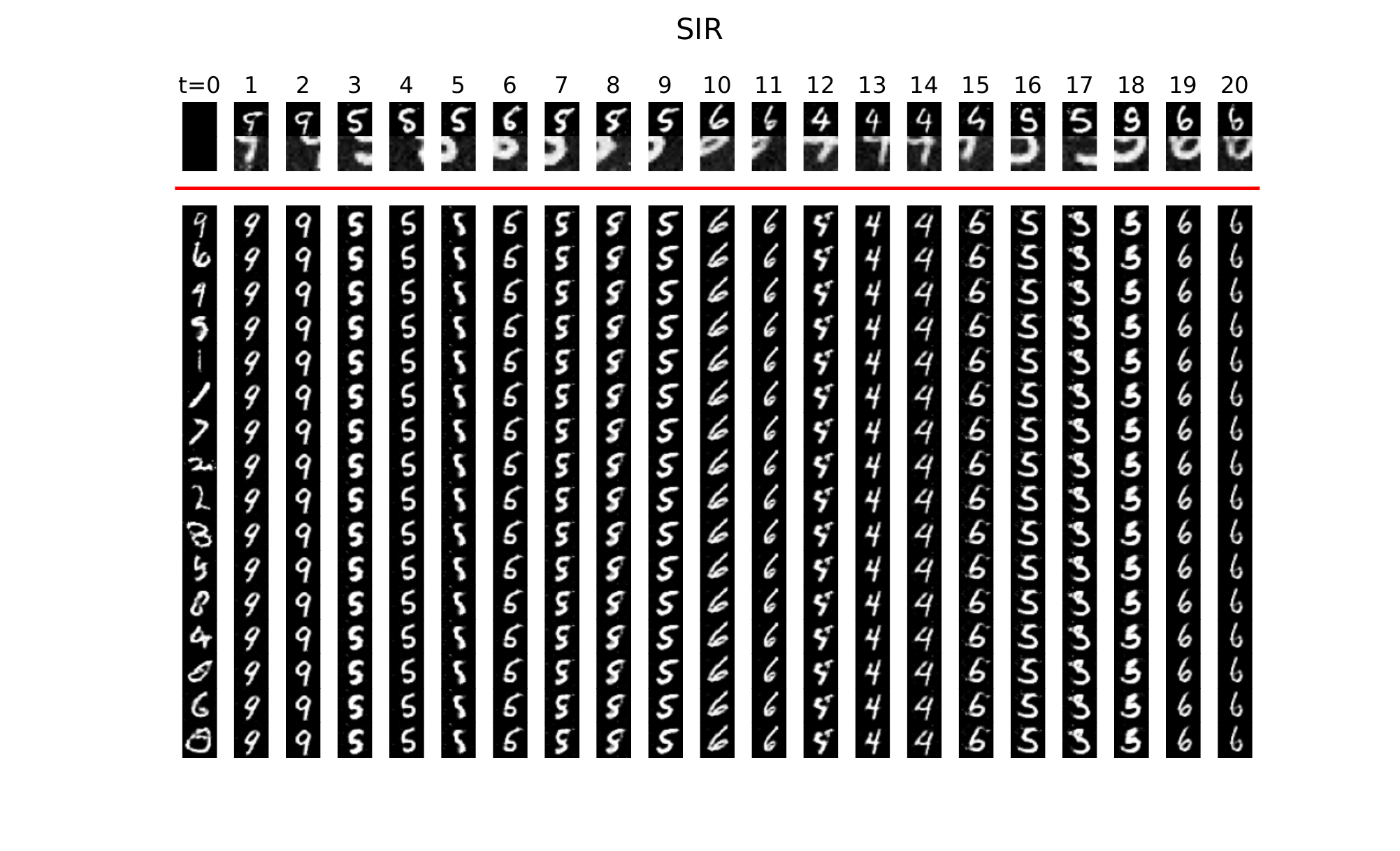}
    \caption{SIR particles.}
    \vspace*{2mm}
\end{subfigure}
\vspace{1em}
\begin{subfigure}{0.485\textwidth}
    \centering
    \vspace*{2mm}
    \includegraphics[width=1\textwidth,trim={105 60 90 50},clip]{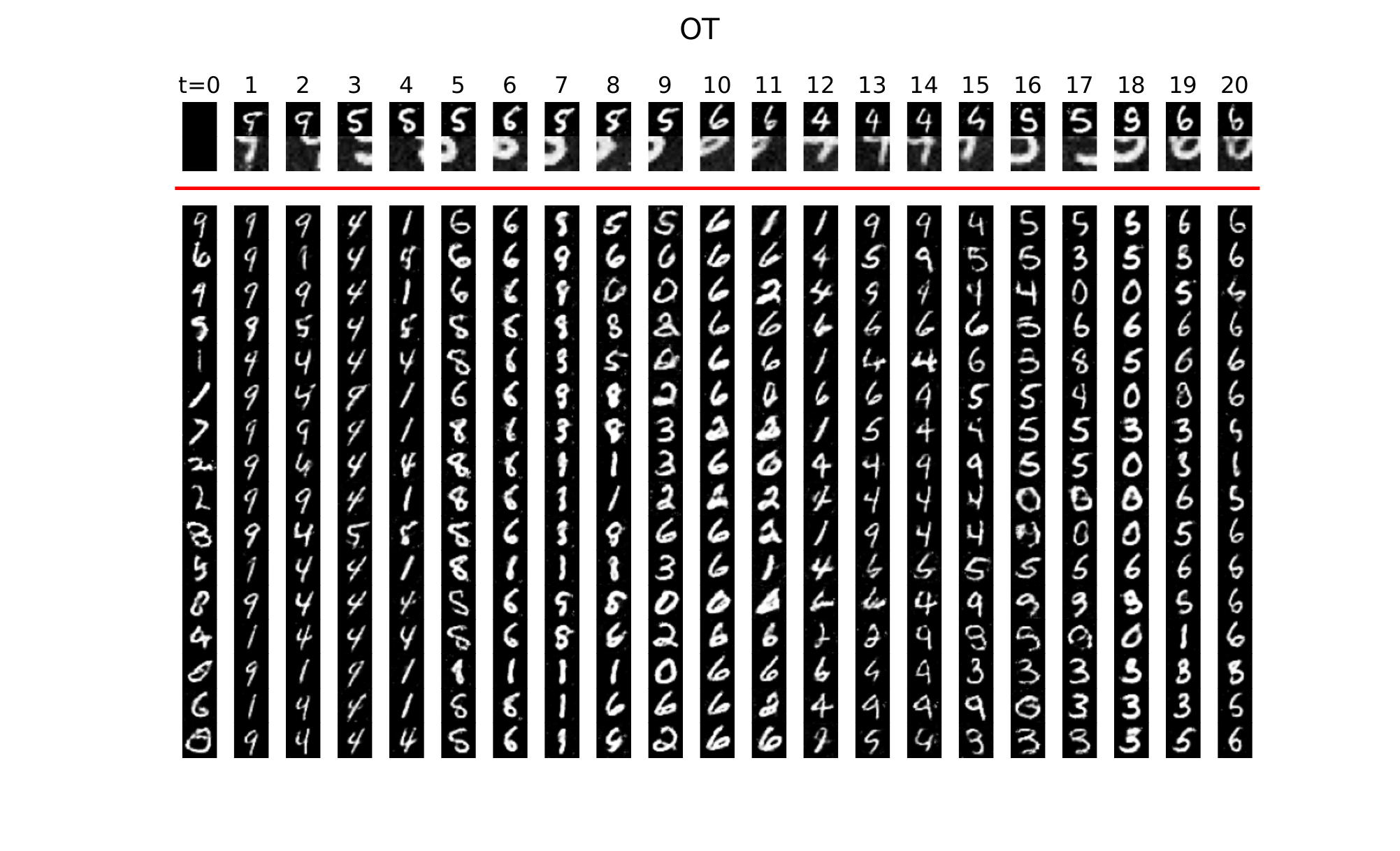}
    \caption{OT particles.}
\end{subfigure}
\hspace{0.8em}
\begin{subfigure}{0.485\textwidth}
    \centering
    \includegraphics[width=1\textwidth,trim={95 20 90 50},clip]{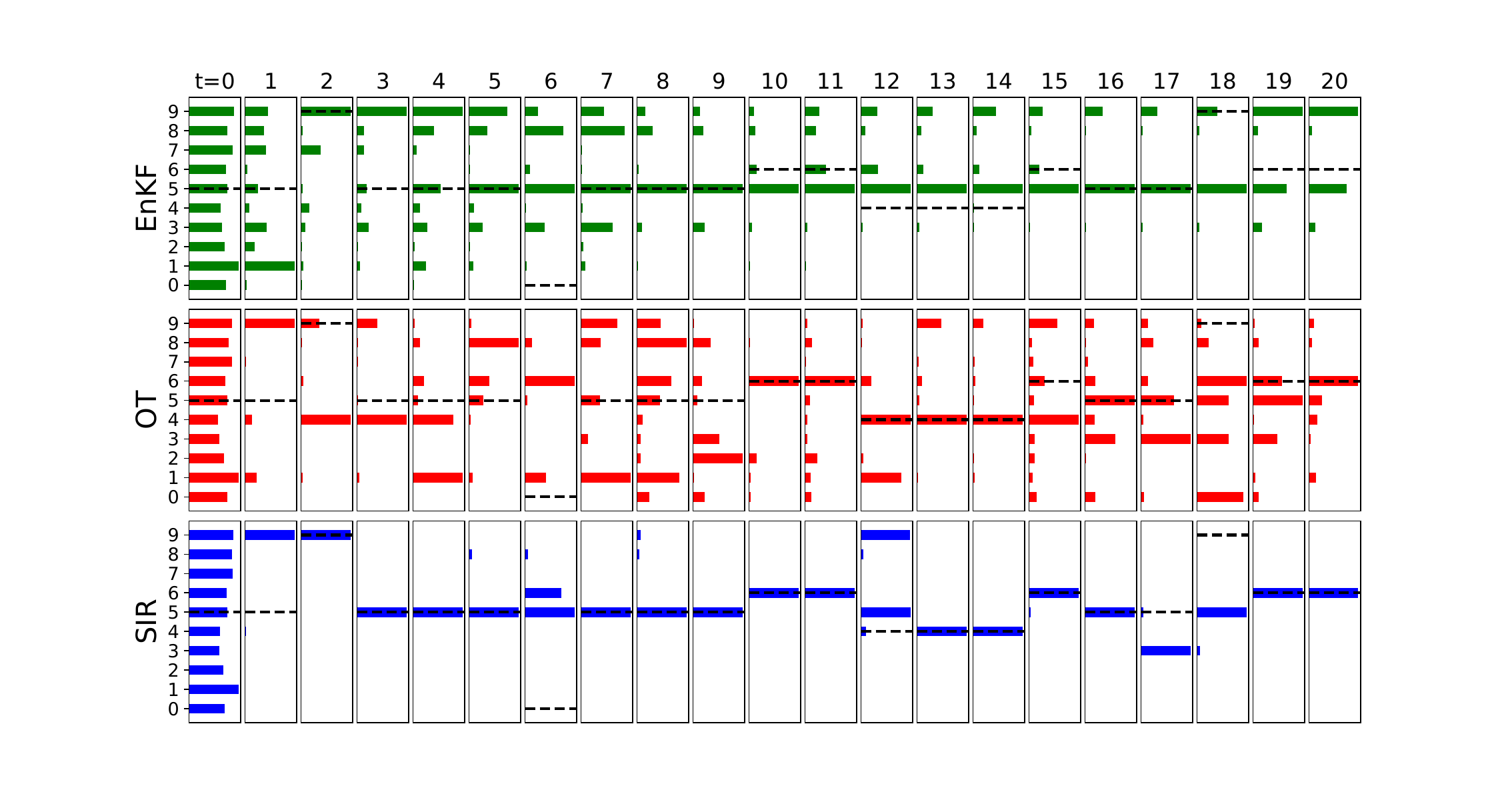}
    \caption{Particles distribution.}
\end{subfigure}
    \caption{Additional numerical results for the dynamic image in-painting on MNIST. (a-b-c) The first row shows the true image, and the second row shows the observation at each time step. The subsequent rows under the red line show 16 particles from the EnKF, SIR, and OT methods, respectively. (d) The histogram of the digits generated by the particles from the three algorithms as a function of time, evaluated using an accurate  MNIST classifier.}
    \label{fig:mnist_dynamic_dist_particles_example2}
\end{figure}

\end{document}